\documentclass[11pt,letterpaper]{article}

\def\draft{1} 
\def\doubleblind{1} 
\newcommand{\blind}[2]{{\ifnum\draft=1\color{purple}\fi \ifnum\doubleblind=1#2\fi\ifnum\doubleblind=0#1\fi\ifnum\doubleblind=2$\{$ #1 $\vert$ #2 $\}$\fi}}
\makeatletter
\renewcommand{\tableofcontents}{%
  \@starttoc{toc}%
}
\makeatother

\usepackage[margin=1in,verbose=true,letterpaper]{geometry}
\usepackage[utf8]{inputenc}
\usepackage[T1]{fontenc}
\usepackage{microtype}
\usepackage{amsthm}
\usepackage{amsthm, amsmath, amssymb, mathtools}
\usepackage{thmtools,thm-restate}

\usepackage{bm,bbm}
\usepackage{graphicx}
\usepackage{subcaption}
\usepackage{color,xcolor}
\usepackage{paralist,enumitem}
\usepackage{mathrsfs}
\usepackage[normalem]{ulem}
\usepackage{tabularx}
\usepackage{tikz}
\usetikzlibrary{shapes.geometric, arrows.meta, calc, backgrounds, positioning}
\usetikzlibrary{decorations.pathreplacing}
\usetikzlibrary{patterns}
\usepackage{pgfplots}
\pgfplotsset{compat=1.18}
\usepackage{amsmath, tikz-cd}
\usepackage{caption,comment}
\usepackage{booktabs}
\usepackage{algorithmicx}
\usepackage{algorithm}
\usepackage{algpseudocode}
\usepackage{fontawesome}
\usepackage[dvipsnames]{xcolor}
\usepackage[textsize=tiny]{todonotes}
\usepackage{standalone}
\usepackage{cancel}

\usepackage{parskip}

\renewcommand{\blind}[2]{{\ifnum\draft=1\color{purple}\fi \ifnum\doubleblind=1#2\fi\ifnum\doubleblind=0#1\fi\ifnum\doubleblind=2$\{$ #1 $\vert$ #2 $\}$\fi}}

\makeatletter
\newcommand{\algmargin}{\the\ALG@thistlm}
\algnewcommand{\parState}[1]{\State%
  \parbox[t]{\dimexpr\linewidth-\algmargin}{\strut #1\strut}}

\numberwithin{equation}{section}
\declaretheoremstyle[bodyfont=\it,qed=\qedsymbol]{noproofstyle}

\declaretheorem[name=Observation,numbered=no]{observation*}

\declaretheorem[numberlike=equation]{fact}

\declaretheorem[numberlike=equation]{theorem}

\declaretheorem[name=Theorem,numbered=no]{theorem*}

\declaretheorem[numberlike=equation]{lemma}
\declaretheorem[name=Lemma,numbered=no]{lemma*}

\declaretheorem[numberlike=equation]{corollary}
\declaretheorem[name=Corollary,numbered=no]{corollary*}

\declaretheorem[numberlike=equation]{proposition}
\declaretheorem[name=Proposition,numbered=no]{proposition*}

\declaretheorem[numberlike=equation]{claim}
\declaretheorem[name=Claim,numbered=no]{claim*}

\declaretheorem[name=Conjecture,numbered=no]{conjecture*}

\declaretheorem[name=Question,numbered=no]{question*}

\declaretheoremstyle[bodyfont=\it]{defstyle}

\declaretheorem[unnumbered,name=Definition,style=defstyle]{definition*}

\declaretheorem[unnumbered,name=Example,style=defstyle]{example*}

\declaretheorem[unnumbered,name=Notation=defstyle]{notation*}

\declaretheorem[numberwithin=section,style=defstyle]{assumption}
\declaretheorem[unnumbered,name=Assumption=defstyle]{assumption*}

\declaretheorem[unnumbered,name=Construction,style=defstyle]{construction*}

\declaretheoremstyle[]{rmkstyle} 

\newtheorem*{remark}{Remark}

\usepackage{xr-hyper}
\usepackage{hyperref}
\hypersetup{hypertexnames=false,colorlinks=true,citecolor=Aquamarine, linkcolor=magenta, urlcolor=blue}

\usepackage[capitalize,noabbrev,nameinlink]{cleveref}

\usepackage{arxiv}
\usepackage[style=alphabetic, backend=biber, minalphanames=3, maxalphanames=4, maxbibnames=99, maxcitenames=4]{biblatex}

\DeclarePairedDelimiter{\paren}{\lparen}{\rparen}

\DeclarePairedDelimiter{\ceil}{\lceil}{\rceil}

\DeclarePairedDelimiterX{\divx}[2]{(}{)}{#1 \;\delimsize\|\; #2}

\newcommand{\hare}{\ensuremath{\mathsf{UCB\text{-}HARE}}}

\newcommand{\Alg}{\ensuremath{\mathsf{A}}}

\newcommand{\mustar}{\mu_\star}
\newcommand{\muhat}{\hat{\mu}}
\newcommand{\Esg}[2]{\mathcal{E}_{\mathsf{SG}}(#1, #2)^{+}}
\newcommand{\Hist}{\mathcal{H}}

\newcommand{\Pow}{\ensuremath{\mathsf{M}}}

\newcommand{\mregret}{\mathfrak{R}}

\newcommand{\1}{\mathbbm{1}}
\newcommand{\E}{\mathbb{E}}
\newcommand{\Pbb}{\mathbb{P}}

\newcommand{\Gauss}{\mathcal{N}}
\newcommand{\Bern}{\mathsf{Ber}}
\newcommand{\Unif}{\mathsf{Unif}}

\newcommand{\KL}{\mathsf{KL}}
\newcommand{\kl}{\mathsf{kl}}

\newcommand{\calE}{\mathcal{E}}

\newcommand{\tbigO}{\widetilde{O}}

\DeclareMathOperator*{\argmax}{arg\,max}

\newenvironment{tightquote}
  {\begin{list}{}{\setlength{\leftmargin}{1em}\setlength{\rightmargin}{1em}}\item\itshape}
  {\end{list}}

\crefname{claim}{Claim}{Claims}
\Crefname{claim}{Claim}{Claims}
\crefname{fact}{Fact}{Facts}

\allowdisplaybreaks
\title{Price of Fairness in Bandits: A Tight Minimax Characterization}

\author{%
  Dhruv Sarkar\textsuperscript{1,2} \quad
  Soumyadeep Dutta\textsuperscript{1} \quad
  Sayak Ray Chowdhury\textsuperscript{3} \\[3pt]
  \normalfont\small\textsuperscript{1}Indian Institute of Technology Kharagpur \\[-1pt]
  \normalfont\small\textsuperscript{2}Mohamed bin Zayed University of Artificial Intelligence \\[-1pt]
  \normalfont\small\textsuperscript{3}Indian Institute of Technology Kanpur \\[2pt]
  \normalfont\small\texttt{dhruv.sarkar223@gmail.com} \\[1pt]
  \normalfont\small\texttt{soumyadeep.mind@gmail.com} \quad\texttt{sayakrc@iitk.ac.in} \\[2pt]
  }

\date{}
\begin{document}

\maketitle

\begin{abstract}
In sequential decision-making problems with bandit feedback, traditional algorithms typically minimize cumulative regret, treating early-round exploration as an acceptable amortized cost. However, in critical sequential settings (e.g., clinical trials), this utilitarian approach exposes early rounds (e.g., patients) to disproportionate and unfair ex-ante losses. To guarantee equitable outcomes in each round, a recent line of work evaluates the sequence of per-round expected rewards through the generalized $p$-mean function, which interpolates between utilitarian welfare ($p=1$), Nash social welfare ($p\to 0$) and Rawlsian fairness ($p\to-\infty$). While the regime $p\ge 0$ is by now well understood, with tight upper and lower bounds, the \emph{strictly fair} regime, parameterized by $q=-p>0$, is not.

The difficulty stems from the inverse-power form of the objective: a negative-power mean is dominated by the smallest per-round expected reward, so a policy that assigns very low expected reward to even one round incurs a welfare loss that cannot be offset by later rounds. To sidestep this, existing algorithms rely on uniform exploration in early rounds. For $\sigma$-sub-Gaussian rewards with non-negative means,
the best known algorithm~\cite{sarkar2025welfarist} incurs regret of order $O\big(k^{(q+1)/2}/\sqrt{T}\big)$, where $k$ is the number of arms and $T$ is the total number of rounds. At the same time, the known lower bound in this case is the classical average-regret limit $\Omega\big(\sigma\sqrt{k/T}\big)$, which does not capture the hardness of the problem when $q >0$. The price of \emph{strict fairness} is therefore open on \emph{both} sides: prior upper bound held only against a loose lower bound, leaving unresolved how much of the
$k^{(q+1)/2}$ dependence is information-theoretically necessary, and how much is due to uniform exploration.

We close this gap by identifying the
exact polynomial penalty on the number of arms as $k^{q/2}$. First, using a needle-in-haystack construction, we prove an algorithm-independent lower bound of
$ \Omega\big(\sigma\sqrt{k^{\max(1,q)}/T}\big)$; for $q>1$ this lifts the classical
$\sqrt{k/T}$ rate to $\sqrt{k^{q}/T}$, showing that the $k^{q/2}$ penalty is information-theoretically unavoidable. Second, we introduce the
\textsf{UCB-HARE} (Harmonic Anchored Rank Exploration) algorithm, which replaces uniform sampling
with an inverse-weighted rank schedule shielded by a certified positive-mean anchor. 
We show that its leading-order regret is $\widetilde{O}\big(\sigma \sqrt{k^{\max(1,q)}/T}\big)$, which
matches the lower bound up to logarithmic factors.
Experiments on synthetic instances confirm that \textsf{UCB-HARE} outperforms the uniform-exploration-based baselines, with
the advantage growing in $q$.

\end{abstract}

\thispagestyle{empty}

\newpage 


\pagenumbering{arabic} 
\addtocontents{toc}{\protect\setcounter{tocdepth}{-1}}
\section{Introduction}


The stochastic multi-armed bandit problem models sequential decision-making under
uncertainty over a horizon of $T$ rounds. A learner faces $k$ distinct actions (arms),
each associated with an unknown probability distribution $\nu_i$ supported on $\mathbb{R}$
with mean $\mu_i$. We denote the problem instance by the vector of distributions
$\nu=(\nu_1,\dots,\nu_k)$ and let $\mu_\star:=\max_{i\in[k]}\mu_i$ denote the optimal mean.
At each round $t\in[T]$, the learner selects an action $I_t\in[k]$ according to a policy (an algorithm)
$\Alg$, which maps the observable history of past actions and rewards to a probability distribution
over $[k]:=\{1,\ldots,k\}$, and receives an independent reward $X_t\sim\nu_{I_t}$. We define the ex-ante reward at round \(t\) as $m_t:=\mathbb E[\mu_{I_t}]$, where the expectation is over both the reward draws and the algorithm's internal randomization (if any).
The learner is evaluated via the arithmetic mean of this sequence, yielding the standard average regret $R_{T, \textsf{avg}} 
    (\Alg, \nu) := \mustar - \frac{1}{T}\sum_{t=1}^T m_t$ \cite{bubeck2012regret}.
    
This utilitarian objective maximizes total payoff, inherently treating early-round exploration as an acceptable amortized cost. 
When rounds correspond to distinct
individuals, such as patients in a clinical trial \cite{Thompson1933}, this amortization
inflicts welfare losses on early participants: an algorithm may achieve asymptotically optimal average regret even after assigning zero expected reward to a sizeable fraction of initial users. To guarantee equitable treatment across all users, recent works embed social
welfare functions directly into the learning objective. Grounded in the classical axioms
of collective welfare, most notably inequality aversion via the Pigou-Dalton transfer
principle \cite{moulin2004fair}, a principled replacement for the arithmetic mean is the
generalized $p$-mean. For an ex-ante reward sequence $(m_1,\ldots,m_T)$, with each $m_t \in \mathbb{R}_{\ge 0}$, it is defined as 
\[
  \Pow_p(m_1,\dots,m_T)\ :=\
  \begin{cases}
    \big(\tfrac1T\sum_{t=1}^T (m_t)^p\big)^{1/p}, & p\neq 0,\\[4pt]
    \big(\prod_{t=1}^T m_t\big)^{1/T}, & p=0,
  \end{cases}
\]
with corresponding $p$-mean regret 
$R_{T,p}(\Alg,\nu):=\mu^\star-\Pow_p(m_1,\dots,m_T)$.
The parameter $p$ controls the strictness of the fairness requirement: $p=1$ recovers
classical average regret, $p\to0$ maximizes the geometric mean (Nash social welfare),
and $p\to-\infty$ approaches the strict Rawlsian maximin ideal.
Throughout the paper, we make the following assumption.

\begin{assumption}[Non-Negative Means]\label{}
The expected rewards satisfy $\mu_i \ge 0$ for all $i \in [k]$, with the optimal arm satisfying $\mu_\star > 0$.
\end{assumption}

This assumption is standard \cite{barman2022fairness, krishna2025pmean, sarkar2025welfarist} in social welfare applications such as clinical trials, where arms are pre-screened for baseline safety, and the goal is to identify the optimal arm from a \textit{non-harmful} pool. Furthermore, fairness metrics derived from the geometric mean (e.g., Nash Social Welfare \cite{barman2022fairness}) are only well-defined over non-negative expected values. Note that bounding the \emph{expected} reward does not preclude negative \emph{individual} outcomes: a treatment which is beneficial on average ($\mu_i \ge 0$) may still induce adverse side effects in specific instances. 


In this work, we study the fundamental limits of the \emph{strictly fair} regime,
parameterized by $q=-p>0$. By convention, if $m_t=0$ for any $t\in[T]$, we set
$\Pow_{-q}(m_1,\dots,m_T)=0$. The challenge of this regime is inverse-power: because negative power means convert small expected rewards into large penalties, the welfare is bottlenecked by the worst single round,
$\Pow_{-q}(m_1,\dots,m_T)\ \le\ T^{1/q}\,\min_{t\in[T]} m_t$.
Consequently, any exploration policy that assigns very low probability to the optimal arm, by driving $m_t$
toward zero for even a small fraction of rounds, faces an unrecoverable welfare collapse.

\subsection{Prior Work and Gaps}

A recent line of work has studied whether classical bandit methods can be adapted to
welfare objectives stronger than average reward. \cite{barman2022fairness} initiated this
direction for Nash regret ($p\to0$), and \cite{krishna2025pmean} extended the framework
to generalized $p$-mean regret for arbitrary $p$.
For the sub-Gaussian model, \cite{sarkar2025welfarist} proposed \textsf{Welfarist-UCB},
which pairs a data-adaptive uniform exploration phase with a subsequent UCB phase -- the
standard optimistic rule that pulls the arm maximizing \(\widehat\mu_i(t)+c_i(t)\), the
empirical mean plus a confidence radius, thereby favoring arms that appear promising or remain
uncertain \cite{bubeck2012regret}. Uniform exploration ensures that all arms have sufficiently many samples before this optimistic rule is
used. This is important for negative-power welfare: rounds with very small ex-ante reward
contribute \((m_t)^{-q}\) inside the objective and can dominate the regret. Writing
\(q=-p>0\), their algorithm attains $\widetilde{O}_q\!\left(\frac{\sigma\,k^{(q+1)/2}}{\sqrt{T}}\right)$
regret.\footnote{A subscript \(q\) in asymptotic notation
(e.g.\ \(\widetilde{O}_q\), \(\Omega_q\), \(\Theta_q\)) indicates that hidden constants may
depend on \(q\).}

This guarantee leaves open the optimal dependence on the number of arms, especially in
the strongly fair regime $q>1$. Uniform exploration assigns the unknown optimal arm only
a $1/k$ share of early pulls. For negative powers, such low selection probabilities are
costly: if the optimal arm is the only arm with a positive mean and is selected with
probability $a_t$ at round $t$, then the corresponding inverse-power contribution scales
as $a_t^{-q}$. Thus, the factor $k^{(q+1)/2}$ may reflect both genuine statistical
difficulty and the inefficiency of exploring all arms symmetrically.

Before this work, the main generally applicable lower bound was the classical
average-regret lower bound
$\Omega\!\left(\sigma\sqrt{\frac{k}{T}}\right)$ \cite{bubeck2012regret},
which extends to all $p\le1$ by monotonicity of power means. This bound provides an
important baseline, but it is not tailored to negative-power welfare. Average regret
measures additive losses over time, whereas a negative-power mean can be dominated by
rounds in which the ex-ante reward is unusually small. As a result, existing lower bounds
did not determine whether the additional dependence on $k$ incurred by uniform
exploration is unavoidable.
This leaves the following question:
\begin{tightquote}
  \itshape
  What is the intrinsic polynomial dependence on $k$ for negative-power welfare, and how
  much of the $k^{(q+1)/2}$ dependence of uniform exploration is information-theoretically
  necessary?
\end{tightquote}

Resolving this question, especially if the minimax scale remains $\sqrt{k/T}$ or rises to $k^{(q+1)/2}/\sqrt{T}$, requires both a sharper lower bound that
captures the inverse-power sensitivity to low-reward rounds, and a more efficient algorithm whose exploration rule is not constrained to be uniform.

\subsection{Our Contributions}

Throughout, we treat the fairness level $q = -p \ge 0$ as fixed and characterize the dependence of
the regret on the number of arms $k$ and time horizon $T$. We establish that the minimax rate is $k^{q/2}/\sqrt{T}$ by first proving an algorithm-independent lower bound and then designing an algorithm that achieves it up to poly-logarithmic factors.


\textbf{1. The Fundamental Price of Strong Fairness.} 
First, we show the polynomial dependence on $k$ is an information-theoretic necessity, and not due to uniform exploration. Suppose each arm $i \in [k]$ has a $\sigma$-sub-Gaussian reward distribution $\nu_i$ with a non-negative mean $\mu_i \ge 0$, i.e., $\mathbb{E}_{X \sim \nu_i}[\exp(\lambda(X - \mu_i))] 
    \le \exp\left(\frac{\sigma^2 \lambda^2}{2}\right)$ for all $\lambda \in \mathbb{R}$.
Define the class
\[
\Esg{k}{\sigma} := 
\left\{ \nu = (\nu_1, \dots, \nu_k) \mid \forall i \in [k],\, 
\nu_i \text{ is } \sigma\text{-sub-Gaussian with mean } 0 \le \mu_i < \infty \right\}.
\]

The minimax $(-q)$-mean regret over this class is defined as 
\begin{align*}
   \mathfrak{R}_{T,-q}(k,\sigma)\ :=\ \inf_{\Alg}\ \sup_{\nu\in \Esg{k}{\sigma}}\
  R_{T,-q}(\Alg,\nu),\quad \text{where} \quad R_{T,p}(\Alg,\nu):=\mu^\star-\Pow_p(m_1,\dots,m_T).
\end{align*}
We focus on the case that each $\mu_i$ and $\sigma$ are constants that do not scale
with $k$ and $T$. 


\begin{theorem}[Minimax lower bound]
\label{thm:lower-combined}
Fix $q=-p>0$. For any number of arms $k\ge 8$ and horizon $T\ge 1$, the $(-q)$-mean regret over $\Esg{k}{\sigma}$ satisfies
\[
  \mathfrak{R}_{T,-q}(k,\sigma)\ \ge\ \Omega_q\!\left(\sigma\sqrt{\frac{k^{\max(1,q)}}{T}}\right).
\]
For the strongly fair regime ($q>1$), this lifts the classical $\sqrt{k/T}$ bound to
exactly $k^{q/2}/\sqrt{T}$.
\end{theorem}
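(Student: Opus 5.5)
The bound is the maximum of two terms, so I will prove each separately. The $\sqrt{k/T}$ part follows from monotonicity of power means: $\Pow_{-q}\le \Pow_1$, so $R_{T,-q}\ge R_{T,\mathsf{avg}}$. The classical minimax average-regret lower bound $\Omega(\sigma\sqrt{k/T})$ then applies. I will instantiate it with Gaussian arms with means $\Delta\ge 0$ shifted by a positive constant, so that all instances lie in $\Esg{k}{\sigma}$. The real work is the $q>1$ regime, where I must show $\Omega_q(\sigma k^{q/2}/\sqrt{T})$.

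For that I use a needle-in-a-haystack family. In the base instance $\nu^{(0)}$ every arm is $\Gauss(0,\sigma^2)$. In instance $\nu^{(j)}$, arm $j$ instead has mean $\varepsilon>0$, so $\mustar=\varepsilon$. Under $\nu^{(j)}$ we have $m_t=\varepsilon\,a_{t,j}$, where $a_{t,j}=\Pbb_j(I_t=j)$, and therefore
\[
R_{T,-q}(\Alg,\nu^{(j)})=\varepsilon\Big(1-\big(\tfrac1T\textstyle\sum_t a_{t,j}^{-q}\big)^{-1/q}\Big).
\]
The key observation is that the negative-power mean is dominated by a single early round. If $a_{t,j}\le 1/2$ for some round $t$, then $\tfrac1T\sum_s a_{s,j}^{-q}\ge 2^q/T$, so the $p$-mean is at most $T^{1/q}/2$. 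That is useless unless $T^{1/q}$ is small, so I will instead use a block of rounds. Let $\tau$ be the length of an initial block. If the average over the first $\tau$ rounds of $a_{t,j}$ is at most $c$, then by convexity of $x^{-q}$ (Jensen), $\sum_{t\le\tau}a_{t,j}^{-q}\ge \tau c^{-q}$. This gives $\Pow_{-q}\le \big(\tfrac{T}{\tau}\big)^{1/q}c$, which is bounded by $1/2$ once $\tau\ge T(2c)^q$.

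Next I choose $j$ using information. Under $\nu^{(0)}$ we have $\sum_j \E_0[N_j(\tau)]=\tau$, so some arm $j$ has $\E_0[N_j(\tau)]\le \tau/k$. By Pinsker and the divergence decomposition, $\E_j[N_j(\tau)]\le \E_0[N_j(\tau)]+\tau\sqrt{\tfrac12\cdot\tfrac{\varepsilon^2}{2\sigma^2}\E_0[N_j(\tau)]}$. With $\varepsilon=\sigma\sqrt{k/\tau}$ times a small constant, this yields $\tfrac1\tau\sum_{t\le\tau}a_{t,j}\le c$ for $c$ of order $1/k+$ a small constant. To make $c\asymp 1/k$ I need the Pinsker term itself to be $O(\tau/k)$, which forces $\varepsilon\asymp\sigma\sqrt{k/\tau}\cdot k^{-1}$, i.e. $\varepsilon\asymp \sigma/\sqrt{k\tau}$. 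Then $c\asymp 1/k$, and the requirement $\tau\ge T(2c)^q$ becomes $\tau\asymp T k^{-q}$ up to constants depending on $q$. Substituting gives $\varepsilon\asymp\sigma/\sqrt{k\cdot Tk^{-q}}=\sigma\sqrt{k^{q-1}/T}$, with regret at least $\varepsilon/2$. This loses a factor $\sqrt{k}$ against the target $\sqrt{k^q/T}$.

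The hard step, and the one I expect to be the main obstacle, is recovering that $\sqrt{k}$. The loss comes from requiring $c\asymp1/k$ through an averaged Pinsker bound. I would try three refinements. First, apply the argument to the fraction $1-1/k$ of arms with small $\E_0[N_j]$ and average the regret over $j$, using $\sum_j\sqrt{\E_0 N_j}\le\sqrt{k\tau}$. Second, replace Jensen over the block by controlling $\sum_t a_{t,j}^{-q}$ round by round: the per-round bound $a_{t,j}\le \Pbb_0(I_t=j)+\sqrt{\KL}$ with $\KL\asymp \varepsilon^2\E_0[N_j(t)]/\sigma^2$ can be used for every $t$ simultaneously, so that $\sum_t a_{t,j}^{-q}$ is large across the whole horizon rather than a short block. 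Third, use a Bretagnolle–Huber event-based bound to get $a_{t,j}\le C/k$ on many rounds. I would aim for a choice with $\varepsilon\asymp \sigma\sqrt{k^q/T}$ and $\sum_t a_{t,j}^{-q}\gtrsim T\,2^q$. The condition $k\ge 8$ supplies the slack needed for the constants. To exclude the degenerate case $m_t=0$ from mattering, I add a small positive baseline to the means, which keeps the instances admissible.
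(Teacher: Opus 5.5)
Your $\sqrt{k/T}$ part via $\Pow_{-q}\le\Pow_1$ is fine and matches the paper. The $k^{q/2}$ part, however, is not proved. By your own computation the Pinsker route gives only $\varepsilon\asymp\sigma\sqrt{k^{q-1}/T}$, which for $1<q\le 2$ does not even improve on $\sigma\sqrt{k/T}$. Your closing sentence (``aim for $\varepsilon\asymp\sigma\sqrt{k^q/T}$ and $\sum_t a_{t,j}^{-q}\gtrsim T\,2^q$'') restates the goal rather than arguing it.

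None of your three refinements can close the gap, because the loss is intrinsic to total variation. Pinsker gives $a_{t,j}\le\Pbb_0(I_t=j)+\sqrt{\KL/2}$, so keeping the success probability at $O(1/k)$ forces $\KL=O(1/k^2)$. In reality, moving a success probability from $1/k$ to $2/k$ already costs $\kl(1/k\,\|\,2/k)\ge(1-\log 2)/k$ nats. Averaging $\sqrt{\E_0[N_j]}$ over $j$ by Cauchy--Schwarz (your first idea) and applying Pinsker round by round (your second) both still need $\KL\lesssim 1/k^2$. That gives a low-success window of only $\Theta(\sigma^2/(k\varepsilon^2))$ rounds and reproduces the same $\sqrt k$ loss. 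Bretagnolle--Huber (your third idea) only controls constant-level testing errors and cannot certify $a_{t,j}\le C/k$.

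The missing idea is to measure information by relative entropy at the $1/k$ scale. The paper does this with a mixture $J\sim\Unif([k])$, in four steps. First, Jensen over instances picks $j$ with $\frac1T\sum_t a_{j,t}^{-q}\ge\frac1T\sum_t s_t^{-q}$, where $s_t=\Pbb(I_t=J)$. Second, Fano gives $I(J;\mathcal{H}_{t-1})\ge\kl(s_t\,\|\,1/k)$ when $s_t\ge1/k$. Third, the mutual information grows by at most $\lambda s_t$ per round, with $\lambda=\mu^2/(2\sigma^2)$, since only pulls of the needle are informative. Fourth, a deterministic information-cost inequality turns these into $\sum_t\bigl((\max\{s_t,1/k\})^{-q}-1\bigr)\ge c_q\min\{T,k^q/\lambda\}$, and choosing $\mu=\sigma\sqrt{k^q/T}$ finishes.

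Within your own block framework the repair is a single substitution. Replace Pinsker by the data-processing inequality $\kl(\E_0[Z]\,\|\,\E_j[Z])\le\KL(\Pbb_0\|\Pbb_j)\le\frac{\varepsilon^2}{2\sigma^2}\E_0[N_j(\tau)]$, applied to $Z=N_j(\tau)/\tau\in[0,1]$. With $\E_0[Z]\le1/k$, this gives $\frac1\tau\sum_{t\le\tau}a_{t,j}<2/k$ whenever $\tau<2(1-\log2)\sigma^2/\varepsilon^2$. That is a window of $\Theta(\sigma^2/\varepsilon^2)$ rounds instead of $\Theta(\sigma^2/(k\varepsilon^2))$. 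Your block-Jensen step with $c=2/k$ then needs $\tau\ge T(4/k)^q$, which is compatible with $\varepsilon\asymp_q\sigma\sqrt{k^q/T}$; if $T(4/k)^q<1$, round $1$ alone already suffices.

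Finally, drop the positive baseline. Zero means are admissible under the $0^{-q}=+\infty$ convention, and a baseline $b$ forces $m_t\ge b$, which destroys the $k^q$ amplification unless $b\ll\varepsilon/k$.
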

We prove this lower bound by considering a class of "needle-in-haystack" type bandit instances that are difficult to distinguish. While our high-level approach is similar to existing regret lower-bound proof techniques for average welfare~\cite{bubeck2012regret}, additional challenges arise due to negative-power welfare. Specifically, we show how it 
penalizes low selection probabilities. Since the learner rarely pulls the optimal arm at the start, this keeps the ex-ante reward low for many rounds, which is exactly what the negative-power objective penalizes, eventually forcing the lower bound.

Crucially, the lower bound is not specific to negative power means. It follows from a
general \emph{information-cost inequality} (Lemma~\ref{lem:info-cost}) that lower-bounds the
cumulative cost of any fairness-sensitive objective whose welfare is bottlenecked by the rate
at which a learner can identify the optimum. We instantiate it here for the $p$-mean
welfare, where it yields the $k^{q/2}$ price; the same principle is can be adapted to general bandit settings, e.g., linear bandits~\cite{abbasi2011improved,sarkar2026improvedalgorithmsnashwelfare}.

\textbf{2. Matching the Limit via \textsf{UCB-HARE} Algorithm.}
Our analysis reveals that uniform exploration is suboptimal in its dependence on $k$: by locking the
optimal arm's selection probability at $1/k$, one keeps its selection probability small for an extended
duration, thereby incurring an avoidable welfare penalty. We propose
\textsf{UCB-HARE} (Harmonic Anchored Rank Exploration), which replaces uniform sampling with "shielded" exploration: it explores a random permutation of the arms via a
harmonically decaying schedule, allocating pulls proportionally to $1/r$ (where $r$ is an
arm's rank) rather than uniformly. This schedule guarantees the rapid discovery of
an intermediate arm with a strictly positive lower confidence bound: our \emph{anchor}.

The core insight is that defending the welfare objective does not require waiting for the global optimum; the anchor provides a sufficient safety net. Once discovered, the algorithm couples every risky exploration pull in the tail of the permutation with a safe exploitation pull of the anchor. This dynamic pairing shields the ex-ante reward sequence, allowing the algorithm to safely continue its search for the optimal arm without ever dropping below a critical welfare baseline.

\begin{theorem}[Instance-Dependent Upper Bound]\label{thm:intro-upper}
Fix $q = -p > 0$. For any fixed instance $\nu \in \Esg{k}{\sigma}$ with optimal mean $\mustar > 0$, \textsf{UCB-HARE} guarantees
\[
    R_{T,-q}(\textsf{UCB-HARE}, \nu) 
    \le 
    \tbigO_q\paren*{ \sigma\sqrt{\frac{H_k K_q(k)}{T}} + \frac{\mustar H_k K_q(k)}{T} },
\]
where $H_k = \sum_{r=1}^k 1/r$ is harmonic number, and $K_q(k) = \sum_{r=1}^k (k/r)^q$ is generalized harmonic sum. 
\end{theorem}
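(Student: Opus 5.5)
The plan is to analyze \hare{} in three phases and then combine per-round bounds on $m_t$ into a bound on $\Pow_{-q}$. Write the regret as $\mustar - \Pow_{-q}(m_1,\dots,m_T)$. If $m_t \ge \mustar(1-\epsilon_t)$ on a set $G$ of rounds and $m_t \ge \beta_t$ on the complement $B$, then, up to a $1/T$ failure event,
\[
\frac1T\sum_t m_t^{-q} \;\le\; \frac{1}{T}\sum_{t\in G}\mustar^{-q}(1-\epsilon_t)^{-q} \;+\; \frac1T\sum_{t\in B}\beta_t^{-q}.
\]
Inverting the power mean via the elementary inequality $(1+x)^{-1/q}\ge 1-x/q$ then gives
\[
R_{T,-q} \lesssim_q \mustar\Big(\tfrac1T\sum_{t\in G}\epsilon_t \;+\; \tfrac{\mustar^{q}}{T}\sum_{t\in B}\beta_t^{-q}\Big).
\]
The two terms of the theorem should come from these two sums: the $\sqrt{\cdot}$ term from the UCB/exploitation rounds, and the $\mustar H_kK_q(k)/T$ term from the exploration rounds.

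\textbf{Step 1 (harmonic exploration phase).} Fix the random permutation $\pi$. Until an anchor is certified, the algorithm samples rank $r$ with probability proportional to $1/r$, i.e.\ probability $1/(rH_k)$. Taking expectation over $\pi$, the optimal arm sits at each rank with probability $1/k$, so $m_t \ge \mustar\,\E_\pi[1/(\mathrm{rank}\cdot H_k)]$. I would rather bound per realization, though: if the optimal arm has rank $r^\star$, then $m_t \ge \mustar/(r^\star H_k)$. The length of this phase is governed by how long it takes some arm with mean $\gtrsim \mustar/2$ (the optimum itself suffices) to accumulate $\approx \sigma^2\log T/\mustar^2$ samples. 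That takes about $r^\star H_k \sigma^2\log T/\mustar^2$ rounds, which gives a contribution $\sum_B \beta_t^{-q} \lesssim (r^\star H_k)^{q+1}\cdot(\text{const})$.

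Averaging over $r^\star$ uniform on $[k]$ is the step I expect to be the main obstacle. Naively this produces $k^{q+1}$-type terms instead of $H_k K_q(k)$. The fix I would try is to charge the phase not to $r^\star$ but to the first rank $r$ whose arm has a positive LCB; since means are instance-fixed constants, the constants can absorb this. Then, more carefully, I would use that $\Pow_{-q}$ is computed on $m_t = \E_\pi[\cdot]$, not per realization. Because $m_t$ averages over $\pi$, we get $m_t \ge \mustar\cdot\frac1k\sum_r \frac{1}{rH_k}\Pr(\text{still exploring})$, and the term $(k/r)^q$ arises from the Jensen gap when summing $m_t^{-q}$. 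This is exactly where $K_q(k)=\sum_r (k/r)^q$ appears, multiplied by the phase length $\propto H_k$.

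\textbf{Step 2 (anchored tail exploration).} Once an anchor $a$ has $\mathrm{LCB}_a>0$, hence $\mu_a \ge c\,\mustar$ on the good event, every risky pull of a tail arm is paired with an anchor pull. This gives $m_t \ge \mu_a/2$ throughout, so these rounds contribute only $O_q(1)$ per round to $\sum_B\beta_t^{-q}$. Their number is at most the exploration budget of order $H_k K_q(k)$ (up to logs), consistent with the second term.

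\textbf{Step 3 (UCB phase).} Apply the standard UCB analysis: with probability $1-1/T$ every arm's confidence interval holds, so $\epsilon_t\mustar \le 2c_{I_t}(t)$. The usual pigeonhole argument gives
\[
\sum_t c_{I_t}(t) \lesssim \sigma\sqrt{N\,T\log T},
\]
where $N$ is the effective number of arms. Here I would weight by the harmonic schedule so that $N$ becomes $H_kK_q(k)$, or bound it crudely by $k\le H_kK_q(k)$ for $q\ge1$. The failure event is handled by noting that the anchor pairing keeps $m_t>0$ and contributes at most $O(1/T)$. Summing the three contributions and inverting the power mean as above yields the stated bound.
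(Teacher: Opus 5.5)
Your opening inversion of the power mean (as in \Cref{lem:inverse-conversion}) and the UCB pigeonhole bound are fine, but the mechanism that produces $K_q(k)$ is missing. Step 1 misreads the schedule. It is deterministic: in block $b$ the scheduled slot pulls $\pi_{r_b}$. Conditional on $\pi$, a block with $r_b\neq r^\star$ can therefore have ex-ante reward essentially $0$ (e.g.\ all other means $0$ and no anchor yet). There is no per-round ``probability $1/(rH_k)$'', and $m_t\ge\mu_\star/(r^\star H_k)$ is false per realization. After averaging over $\pi$, the scheduled slot gives only $m_t\gtrsim\mu_\star/k$, which over the preparation window reproduces the uniform-exploration $k^{q+1}$ rate. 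The ``Jensen gap'' remark does not generate $(k/r)^q$.

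What is needed is the expanding-frontier argument of \Cref{lem:ex-ante-reward}. By prefix balance, every rank $\le a_b=\min\{k,\lfloor (b-1)/8S\rfloor\}$ has $n_\star$ samples by block $b$. So on $\calE\cap\{R_\star\le a_b\}$, which has probability $\ge\frac34\cdot a_b/k$ and is independent of the block coin and of the block-boundary stopping decision, the auxiliary slot pulls an anchor of mean $\ge B\ge 3\mu_\star/4$. Thus $m_t\gtrsim\mu_\star a_b/k$. Each frontier value $r$ lasts $8S$ blocks, so the preparation penalty is $O_q(S\sum_r(k/r)^q)=O_q(SK_q(k))$.

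Your Step 2 premise is also false: $\mathrm{LCB}_a>0$ only gives $\mu_a>0$, not $\mu_a\ge c\,\mu_\star$. An early anchor can have arbitrarily small mean. The factor $(\mu_\star/\mu_a)^q$ cannot be ``absorbed into constants'', because the bound (cf.\ \Cref{thm:upper-main}) hides only $q$-dependent constants and logs. The anchor may be credited only on the event that the optimal arm itself is certified.

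You also misattribute the two terms, and this hides a necessary step. With $S=\lceil H_kn_\star\rceil$ and $n_\star\approx128\sigma^2L/\mu_\star^2$, the exploration contributes $\frac{\mu_\star}{T}SK_q(k)\approx\frac{\sigma^2LH_kK_q(k)}{\mu_\star T}+\frac{\mu_\star H_kK_q(k)}{T}$. The first piece is unbounded as $\mu_\star\to0$. The $\sigma\sqrt{H_kK_q(k)/T}$ term comes from combining it with the trivial bound $R_{T,-q}\le\mu_\star$: with $\mu_0:=\sigma\sqrt{H_kK_q(k)L/T}$, the first piece equals $\mu_0^2/\mu_\star$, and $\min\{\mu_\star,\mu_0^2/\mu_\star\}\le\mu_0$. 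UCB contributes only $\sigma\sqrt{kL/T}\le\mu_0$, so no harmonic reweighting of the UCB phase is needed or meaningful.

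Finally, your linearization $(1-\epsilon_t)^{-q}-1\lesssim_q\epsilon_t$ on UCB rounds requires $m_t\ge c\,\mu_\star$ there, which plain UCB does not give, since early UCB picks can have mean $0$. The paper gets it from the stopping rule $\min_iN_i\ge32\sigma^2L/B^2$ together with $B\le\mu_\star$ on $\calE$. This forces every confidence radius to be $\le\mu_\star/4$, so every UCB pick has mean $\ge\mu_\star/2$ (\Cref{lem:phase2-safe}). The paper also uses the bound $\tau\le T_0=18kS$ on $\calE$, which lets it split the penalty at a deterministic time rather than at the random $\tau$. Your proposal never uses the stopping rule, so both of these ingredients are absent.
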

By the standard asymptotics \(H_k=\Theta(\log k)\) and
\[
K_q(k)
=
\begin{cases}
    \Theta_q(k), & 0<q<1,\\
    \Theta(k\log k), & q=1,\\
    \Theta_q(k^q), & q>1.
\end{cases}
\implies
R_{T,-q}(\hare,\nu)
\le
\begin{cases}
    \widetilde O_q\!\left(\sigma\sqrt{k/T}\right), & 0<q<1,\\[2mm]
    \widetilde O\!\left(\sigma\sqrt{k/T}\right), & q=1,\\[2mm]
    \widetilde O_q\!\left(\sigma\sqrt{k^q/T}\right), & q>1.
\end{cases}
\]
Thus, after poly-logarithmic factors are suppressed, \hare\ matches the lower bound
for every fixed \(q>0\): for \(0<q\le1\), the classical \(\sqrt{k/T}\) term
remains dominant, while for \(q>1\) the negative-power objective raises the
polynomial dependence to \(k^{q/2}/\sqrt T\). 

\textbf{Comparison with Prior Work.} The best known upper bound for $(-q)$-mean regret is $\widetilde{O}_q\!\left(\frac{\sigma\,k^{(q+1)/2}}{\sqrt{T}}\right)$, which is achieved by the \textsf{Welfarist-UCB} algorithm \cite{sarkar2025welfarist}. In contrast, \hare\ achieve the minimax $\widetilde{O}_q\!\left(\frac{\sigma\,k^{q/2}}{\sqrt{T}}\right)$
regret. Table~\ref{tab:rates} summarizes the comparison.



\begin{table}[htbp]
    \centering
    \caption{Regret bounds for $q=-p>0$ (polylog factors, $q$-dependent constants suppressed). For $q>1$, our bounds raise the $\sqrt{k/T}$ rate to $\sqrt{k^{q}/T}$, closing the gap left open by prior work.}  
    \label{tab:rates}
    \begin{tabular}{@{}lcccc@{}}
        \toprule
        \textbf{Regime} & \textbf{Prior Lower Bound} & \textbf{Our Lower Bound} & \textbf{\textsf{Welfarist-UCB}} & \textbf{\textsf{UCB-HARE}(Ours)} \\
        \midrule
        $0 < q < 1$ & $\sigma\sqrt{k/T}$ & $\sigma\sqrt{k/T}$ & $\sigma k^{(q+1)/2}/\sqrt{T}$ & $\sigma\sqrt{k/T}$ \\[4pt]
        $q = 1$     & $\sigma\sqrt{k/T}$ & $\sigma\sqrt{k/T}$ & $\sigma k/\sqrt{T}$           & $\sigma\sqrt{k/T}$ \\[4pt]
        $q > 1$     & $\sigma\sqrt{k/T}$ & $\sigma k^{q/2}/\sqrt{T}$ & $\sigma k^{(q+1)/2}/\sqrt{T}$ & $\sigma k^{q/2}/\sqrt{T}$ \\
        \bottomrule
    \end{tabular}
\end{table}



\subsection{Related Work: Generalized-Mean Welfare Beyond Bandits}
\label{sec:related-nash-pmean}

Nash and generalized-mean welfare are standard objectives in the algorithmic study
of fair allocation. The Nash objective -- the geometric mean \(\left(\prod_i
u_i\right)^{1/n}\), equivalently \(\sum_i\log u_i\) -- originates in Nash's axiomatic
treatment of bargaining \cite{nash1950bargaining}. The generalized-mean family
\(\Pow_p(u_1,\ldots,u_n)=\left(\frac{1}{n}\sum_{i=1}^n u_i^p\right)^{1/p}\) embeds it in a
continuum: \(p=1\) is utilitarian welfare, \(p\to0\) is Nash welfare, and \(p\to-\infty\) is
egalitarian max-min welfare, with smaller \(p\) encoding stronger aversion to unequal
utility profiles \cite{atkinson1970measurement,moulin2004fair}. The objective studied here
is thus not specific to bandit learning, but the standard parametric language for
interpolating between efficiency and fairness.

In divisible allocation and market models, the
Eisenberg--Gale convex program maximizes logarithmic utilities, linking Fisher-market
equilibria to Nash-social-welfare maximization via convex duality
\cite{eisenberg1959consensus,cole2017convex}. The same logarithmic objective underlies
proportional fairness in network resource allocation, and the broader \(\alpha\)-fair family
interpolates between proportional and max-min fairness, which are equivalent, up to monotone
transformations, to the generalized-mean~\cite{kelly1998rate,mo2000fair,lan2010axiomatic}.
In indivisible fair division, maximum Nash welfare is canonical because, for additive
valuations, it simultaneously guarantees Pareto optimality and envy-freeness up to one good
\cite{caragiannis2019unreasonable}, spawning a substantial approximation-algorithms
literature \cite{cole2015nash,anari2017nash}. Closest to our parameterized objective, recent
work studies \(p\)-mean welfare in fair division and online allocation across the full range
from utilitarian to Nash to egalitarian welfare
\cite{barman2020uniform,barman2020pmean,barman2022generalizedmean,eckart2024normalizedpmeans}.

The distinction in the bandit setting is that the welfare vector consists of per-round ex-ante
rewards generated by a learning algorithm, rather than utilities induced by a known
allocation. The question is therefore not merely how to optimize an
inequality-sensitive objective, but how much additional cost is incurred to protect that objective while the learner is still acquiring information about arms.

\section{Technical Overview}\label{sec:tech-overview}


We unfold our technical approach in three stages. The first establishes the fundamental limits of exploration in the strongly fair regime. Second, we present our algorithm, \hare. Finally, its upper bound analysis shows how our method avoids prior inefficiencies to achieve the optimal rate.

\subsection{Lower Bound}\label{subsec:overview-lower}



We obtain a lower bound on the minimax regret by studying a family of instances instead of a single~one. Suppose the true instance $\nu$ is drawn uniformly at random from a finite set
\(\mathcal V := \{\nu^{(1)}, \ldots, \nu^{(k)}\}\). If we can lower-bound the regret of an
arbitrary algorithm averaged over $j \in \{1, \ldots, k\}$, it follows that there must exist at least one specific index $j$ (and hence, an instance $\nu^{(j)}$) for which the same lower bound applies. We construct $\mathcal V$ such that any arm $i$ is optimal for at most one instance in this set -- a ``needle-in-haystack'' configuration.

Fix some \(\mu > 0\). For each $j \in [k]$, we define the instance \(\nu^{(j)} = \bigl(\nu_1^{(j)}, \ldots, \nu_k^{(j)}\bigr)\) by
\[
  \nu_j^{(j)} = \mathcal N(\mu, \sigma^2) \quad \text{and} \quad \nu_i^{(j)} = \mathcal N(0, \sigma^2) \text{ for } i \neq j.
\]
so that the optimal arm for $\nu^{(j)}$ (i.e., the $j$-th arm) fails to be optimal for any other instance $\nu^{(j')}$. We write \(\mathbb P_j\) for the
law under \(\nu^{(j)}\) and \(\mathbb P(\cdot)= \frac{1}{k}\sum_{j=1}^k \mathbb P_j(\cdot)\) be the law averaged over a uniformly random instance index. Both probability measures account for reward randomness and the algorithm's internal randomization, if any. Since
\(\mathcal V\subseteq\Esg{k}{\sigma}\), any lower bound for \(\mathcal V\) also holds over \(\Esg{k}{\sigma}\).

Fix an arbitrary algorithm. Let
\(a_{j,t}:=\mathbb P_j(I_t=j)\) be the probability of pulling the optimal arm at round \(t\)
under \(\nu^{(j)}\).
Since all other arms have mean zero, the ex-ante reward on \(\nu^{(j)}\) is
\(m_t^{(j)}=\mu\,a_{j,t}\), so the negative-power welfare on \(\nu^{(j)}\) depends only on
the sequence $(a_{j,t})_t$. Let
\(s_t:=\frac{1}{k}\sum_{j=1}^k a_{j,t}\) denote the average success
probability at round \(t\). Since \(x\mapsto x^{-q}\) is convex for \(q=-p>0\), Jensen's inequality gives
\(\frac{1}{k}\sum_{j} a_{j,t}^{-q}\ge s_t^{-q}\) for every \(t\); averaging over time and
then over the \(k\) instances, some \(\nu^{(j)}\in\mathcal V\) satisfies
\(\frac{1}{T}\sum_t a_{j,t}^{-q}\ge\frac{1}{T}\sum_t s_t^{-q}\). Since \(x\mapsto x^{-1/q}\)
is decreasing, this yields \((-q)\)-mean welfare of the ex-ante reward sequence
\[
  \Pow_{-q}\bigl(m_1^{(j)},\ldots,m_T^{(j)}\bigr):=\paren*{\frac{1}{T}\sum\nolimits_{t=1}^T (m_t)^{-q}}^{-1/q}
  =\mu\paren*{\frac{1}{T}\sum\nolimits_{t=1}^T a_{j,t}^{-q}}^{-1/q}
  \le\mu\paren*{\frac{1}{T}\sum\nolimits_{t=1}^T s_t^{-q}}^{-1/q}.
\]
The lower bound, therefore, reduces to the question: how large an inverse-power cost \(\sum_t s_t^{-q}\) must a learner incur to
raise \(s_t\) from its uninformed baseline \(1/k\) to a constant scale? 

Two key estimates connect the success probability \(s_t\) to the history $\Hist_{t-1}= \{I_1,X_1,\ldots, I_{t-1},X_{t-1}\}$. Let
\(\mathcal I_t := I(J;\Hist_{t-1})\) denote the mutual information that the history carries about
the arm $J \sim \mathrm{Unif}([k])$.\footnote{Three objects share the symbol \(I\) and should not
be conflated: the \emph{action} \(I_t\in[k]\) selected at round \(t\); the
mutual-information operator \(I(\cdot\,;\cdot)\); and the mutual information \(\mathcal I_t:=I(J;\Hist_{t-1})\) between
the random index and the history. We write \(\mathcal I_t\) in script throughout to
distinguish it from the action \(I_t\).} For \(s\in[1/k,1]\), define
$d_k(s):=\kl\!\left(s\,\middle\|\,\tfrac1k\right)$,
the binary relative entropy between Bernoulli\((s)\) and Bernoulli\((1/k)\). It satisfies
\(d_k(1/k)=0\), is nonnegative, and is increasing on \([1/k,1]\). Because \(s_t\) may dip
below \(1/k\), we track the truncated quantity \(r_t:=\max\{s_t,1/k\}\), so that
\(d_k(r_t)\) is always evaluated on its intended domain.

The first estimate says that a success probability above the random-guessing scale can be
achieved only if the history distinguishes the identity of the positive-mean arm.
\begin{lemma}[Informal, see \Cref{lem:success-info}]
\label{lem:informal-success-evidence}
For every round \(t\), \(\ d_k(r_t)\le\mathcal I_t\).
\end{lemma}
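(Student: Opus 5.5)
The plan is to read Lemma~\ref{lem:informal-success-evidence} as a Fano-type statement and prove it with the data-processing inequality for KL divergence. Write $\mathbb P_0$ for the law of the history $\Hist_{t-1}$ together with the action $I_t$ (including the algorithm's internal randomness) under the averaged measure $\mathbb P=\frac1k\sum_j\mathbb P_j$. Let $J\sim\Unif([k])$ be the random instance index. Then
\[
\mathcal I_t = I(J;\Hist_{t-1}) = \frac1k\sum_{j=1}^k \KL\bigl(\mathbb P_j^{\Hist_{t-1}}\,\big\|\,\mathbb P^{\Hist_{t-1}}\bigr).
\]
This is the standard identity that mutual information is the average divergence to the mixture.

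Since $I_t$ is produced from $\Hist_{t-1}$ by a Markov kernel (the policy) that does not depend on $j$, data processing gives $I(J;\Hist_{t-1})\ge I(J;(\Hist_{t-1},I_t))\ge I(J;\1\{I_t=J\})$. The first inequality is in fact an equality. The indicator $\1\{I_t=J\}$ is a function of $(J,I_t)$, not of the history alone, so I would handle it in the Fano way. Consider the joint law of $(J,I_t)$ and compare it to the product law $\Unif([k])\otimes\mathbb P^{I_t}$; the KL divergence between these is exactly $I(J;I_t)\le\mathcal I_t$. Now apply data processing to the map $(J,I_t)\mapsto\1\{I_t=J\}$. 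Under the joint law this indicator is $\Bern(s_t)$, since $\mathbb P(I_t=J)=\frac1k\sum_j a_{j,t}=s_t$. Under the product law it is $\Bern(1/k)$, because $J$ is uniform and independent of $I_t$. This yields $\kl(s_t\,\|\,1/k)\le\mathcal I_t$.

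To finish, I would split into two cases. If $s_t\ge 1/k$, then $r_t=s_t$ and we are done. If $s_t<1/k$, then $r_t=1/k$ and $d_k(r_t)=0\le\mathcal I_t$ trivially, since mutual information is nonnegative.

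The only delicate step is the measure-theoretic bookkeeping: checking that the product reference law is the right comparison, so that the divergence is indeed bounded by $I(J;\Hist_{t-1})$ and the algorithm's internal randomization is absorbed correctly. Everything else is standard. I expect that identification to be the main point to verify, and I would state it carefully through the chain rule for KL.
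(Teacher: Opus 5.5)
Your proof is correct. Its outer structure is the paper's: the case split at $s_t=1/k$, and the reduction $I(J;I_t)\le I(J;\Hist_{t-1})=\mathcal I_t$ by data processing. You differ only in the Fano step.

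The paper calls its entropy-based converse (\Cref{lem:fano-converse}). It expands $H(E,J\mid\hat J)$ two ways to get $H(J\mid\hat J)\le H_b(s)+(1-s)\log(k-1)$, subtracts this from $H(J)=\log k$, and rearranges the result into $d_k(s)$. You instead get $\kl(s_t\,\|\,1/k)\le I(J;I_t)$ in one step. You push $\Pbb^{(J,I_t)}$ and $\Unif([k])\otimes\Pbb^{I_t}$, whose divergence is by definition $I(J;I_t)$, through the map $(j,i)\mapsto\1\{i=j\}$.

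What each route buys:
\begin{itemize}
\item Yours shows why the reference $1/k$ appears: it is the success rate of any guess independent of $J$. It also skips the algebra, and it carries over unchanged to non-uniform priors or other success events.
\item The paper's is the textbook discrete-entropy argument. It is self-contained, but it is tied to the uniform prior through $H(J)=\log k$.
\end{itemize}

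The bookkeeping you flag is lighter than you fear. The product-law identification is just a definition. The only real check is the Markov chain $J\to\Hist_{t-1}\to I_t$ when the policy carries persistent internal randomness $U$, such as the permutation in \hare. The joint law of $(J,U,\Hist_{t-1})$ factors as $\tfrac1k\Pbb(U)$, times policy terms depending only on $(U,\Hist_{t-1})$, times reward terms depending only on $(J,\Hist_{t-1})$. Hence $U$ and $J$ are conditionally independent given $\Hist_{t-1}$, and so are $I_t$ and $J$.

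One line of your write-up should be deleted: the chain ending in $I(J;\1\{I_t=J\})$, whose last inequality is false in general. Take $t=1$ and let $I_1$ be drawn from a non-uniform $p$ independently of $J$. Then $I(J;(\Hist_0,I_1))=0$. Yet $\1\{I_1=J\}$ given $J=j$ is $\Bern(p_j)$, so $I(J;\1\{I_1=J\})>0$. Your actual argument compares divergences of the pushforward laws, not mutual informations with the indicator, so it is unaffected.
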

To see this, note that if \(s_t<1/k\), then \(r_t=1/k\) and the claim follows from \(d_k(1/k)=0\) and
nonnegativity of mutual information. Otherwise \(s_t\ge 1/k\), so \(r_t=s_t\). Here, the
action \(I_t\) can be viewed as an estimator of the hidden index \(J\). Since $I_t$ is
generated from \(\Hist_{t-1}\) and is independent of \(J\), the
data-processing inequality gives \(I(J;I_t)\le I(J;\Hist_{t-1})=\mathcal I_t\), while this
estimator succeeds with probability \(\mathbb P(I_t=J)=s_t\), so the Fano converse yields
\(I(J;I_t)\ge d_k(s_t)\). Combining the two, we obtain \(\mathcal I_t\ge d_k(s_t)=d_k(r_t)\). \qed

The second estimate controls the one-step growth of \(\mathcal I_t\) in the "needle-in-haystack" problem.
\begin{lemma}[Informal, see \Cref{lem:info-gain}]
\label{lem:informal-one-step-growth}
At every round \(t\), $\mathcal I_{t+1}-\mathcal I_t\le \lambda s_t\le \lambda r_t$, where \(\lambda:=\KL\!\left(\mathcal N(\mu,\sigma^2)\,\middle\|\,\mathcal N(0,\sigma^2)\right)
=\mu^2/(2\sigma^2)\).
\end{lemma}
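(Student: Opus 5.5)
We need to prove Lemma (informal): $\mathcal I_{t+1}-\mathcal I_t \le \lambda s_t$.

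The plan is to expand the increment with the chain rule for mutual information. Since $\Hist_t=(\Hist_{t-1},I_t,X_t)$,
\[
\mathcal I_{t+1}-\mathcal I_t = I(J;I_t\mid \Hist_{t-1}) + I(J;X_t\mid \Hist_{t-1},I_t).
\]
The first term vanishes. The action $I_t$ is drawn by the algorithm from $\Hist_{t-1}$ together with internal randomness independent of $J$, so $J \to \Hist_{t-1}\to I_t$ is a Markov chain.

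The second term is handled by the "compare to a reference distribution" bound. For any fixed conditional reference law $Q$,
\[
I(J;X_t\mid \Hist_{t-1},I_t)=\E\bigl[\KL(P_{X_t\mid J,\Hist_{t-1},I_t}\,\|\,P_{X_t\mid \Hist_{t-1},I_t})\bigr]\le \E\bigl[\KL(P_{X_t\mid J,\Hist_{t-1},I_t}\,\|\,Q)\bigr].
\]
This holds because the mixture (the conditional law given $\Hist_{t-1},I_t$) minimizes the average KL over $Q$; this is the golden formula / variational characterization of mutual information. I take $Q=\mathcal N(0,\sigma^2)$, the null reward law.

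Conditional on $J=j$ and $I_t=i$, the reward is $X_t\sim\nu^{(j)}_i$. This is independent of the past given the arm, because rewards are fresh draws. So the KL term equals $\lambda$ when $i=j$ and $0$ when $i\neq j$. The bound therefore becomes $\lambda\,\Pbb(I_t=J)$. By the definition of the averaged law, $\Pbb(I_t=J)=\frac1k\sum_j\Pbb_j(I_t=j)=\frac1k\sum_j a_{j,t}=s_t$. This gives $\mathcal I_{t+1}-\mathcal I_t\le\lambda s_t\le\lambda r_t$, with the last step trivial since $r_t=\max\{s_t,1/k\}$. The Gaussian KL value is $\lambda=\mu^2/(2\sigma^2)$ by the standard formula.

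The main obstacle is making the conditional-mutual-information manipulations rigorous with continuous rewards and internal randomization. The clean route is to fold the algorithm's internal randomness $U$ into the history, independent of $J$; this keeps $I_t$ a deterministic function of $(\Hist_{t-1},U)$. I would then write everything through densities with respect to Lebesgue measure times counting measure, and justify the variational inequality as $\E[\KL(P\|Q)]-\E[\KL(P\|\bar P)]=\KL(\bar P\|Q)\ge0$, where $\bar P$ is the mixture.
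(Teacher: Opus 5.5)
Your proposal is correct and follows essentially the same route as the paper's proof of \Cref{lem:info-gain}. The chain rule, with the action term vanishing, reduces the increment to $I(J;X_t\mid\Hist_{t-1},I_t)$; the reference-distribution bound of \Cref{lem:bound-mi} with $Q=\mathcal N(0,\sigma^2)$ then leaves $\lambda\,\Pbb(I_t=J)=\lambda s_t$, and your folding of the internal randomness into the history is a harmless extra for rigor.
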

By the chain rule, the one-step increase equals the conditional mutual information between
\(J\) and the new reward, given the past history and the chosen action. Compare the
conditional reward law to the reference \(\mathcal N(0,\sigma^2)\): if \(I_t\neq J\) the reward has law \(\mathcal N(0,\sigma^2)\), so its divergence from the reference is zero,
whereas if \(I_t=J\) the reward has law \(\mathcal N(\mu,\sigma^2)\), at KL-divergence exactly
\(\lambda\). Averaging over \(\{I_t=J\}\) gives
\(\mathcal I_{t+1}-\mathcal I_t\le\lambda\,\mathbb P(I_t=J)=\lambda s_t\); the second
inequality follows from \(s_t\le r_t\). \qed

The two estimates above reduce the lower-bound argument to a deterministic calculation. The
variable \(r_t\) tracks the learner's progress from the random-guessing scale \(1/k\) toward
constant success probability. When \(r_t\) is of order \(r\), the inverse-power penalty is
of order \(r^{-q}-1\), while one round can acquire at most \(\lambda r\) units of information.
Raising \(r_t\) from \(1/k\) to a constant scale thus costs \(r^{-q-1}/\lambda\) per unit of
information demanded, and integrating this rate against the information requirement
\(d_k'(s)\) gives a leading cost of order
$\frac{1}{\lambda}\int_{1/k}^{1/2} s^{-q-1}\,d_k'(s)\,ds$.
Since \(d_k'(s)=\log\!\left(\dfrac{(k-1)s}{1-s}\right)\), this integral already contributes
order \(k^q\) on the subinterval \([2/k,4/k]\), yielding the following deterministic
inequality.
\begin{claim}[Informal, see \Cref{lem:det-cost}]
\label{claim:informal-cost}
Let \(r_t\in[1/k,1]\) and let \(\mathcal I_t\) be nondecreasing with $\mathcal I_1=0,\,d_k(r_t)\le\mathcal I_t$, and $\mathcal I_{t+1}-\mathcal I_t\le\lambda r_t$ for all $t\in[T]$.
Then there exists a constant \(c_q>0\), depending only on \(q\), such that
$\sum_{t=1}^T\paren*{r_t^{-q}-1}\ \ge\ c_q\,\min\paren*{T,\ \frac{k^q}{\lambda}}$.
\end{claim}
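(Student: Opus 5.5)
We prove the claim by a threshold-crossing argument. The idea is that $r_t$ cannot leave the band $[1/k,4/k]$ until enough information has been gathered. While $r_t$ stays in that band, every round pays a penalty of order $k^q$.

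\textbf{Step 1 (information threshold).} Set $s^\ast:=4/k$, which lies in $[1/k,1/2]$ since $k\ge 8$. Let $\tau$ be the first round with $r_\tau> 2/k$ (put $\tau=T+1$ if there is none). Let $\tau'$ be the first round with $r_{\tau'}>4/k$. We need a lower bound on the information needed to reach success $2/k$. Since $d_k$ is increasing on $[1/k,1]$ and $d_k(1/k)=0$, we have $d_k(2/k)\ge \int_{1/k}^{2/k}d_k'(s)\,ds$, with $d_k'(s)=\log\frac{(k-1)s}{1-s}$. On $[1/k,2/k]$ the integrand is $O(1)$, so this yields only $d_k(2/k)=\Theta(1/k)$. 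For the record, $\kl(2/k\|1/k)\approx \frac{2}{k}\log 2-\frac1k$, which is of order $1/k$.

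\textbf{Step 2 (rounds spent below the threshold).} Every round $t<\tau$ has $r_t\le 2/k$. Hence $r_t^{-q}-1\ge (k/2)^q-1\ge c\,k^q$, using $k\ge 8$. The constraint $d_k(r_\tau)\le \mathcal I_\tau$ forces $\mathcal I_\tau\ge d_k(2/k)\ge c'/k$. The increments give $\mathcal I_\tau\le \lambda\sum_{t<\tau}r_t\le \lambda(\tau-1)\cdot 2/k$. Combining the two, $\tau-1\ge c''/\lambda$.

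\textbf{Step 3 (conclusion and the main obstacle).} If $\tau=T+1$, all $T$ rounds pay at least $c\,k^q$, so the sum is at least $c\,k^q T\ge c\min(T,k^q/\lambda)$ whenever $k^q\ge 1$. Otherwise the sum is at least $c\,k^q\min(T,c''/\lambda)$, which dominates $c_q\min(T,k^q/\lambda)$. Without loss of generality we take $\lambda\le$ some constant; the regime $\lambda\gtrsim 1$ is handled by the trivial $\tau\ge 2$, which gives a sum of at least $c k^q\ge c\min(T,k^q/\lambda)$.

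The delicate point is the accounting in Step 2. The information needed to cross $2/k$ is only of order $1/k$. Each round's information cap $\lambda r_t$ is also of order $\lambda/k$. These two factors of $1/k$ must cancel exactly, leaving $\Omega(1/\lambda)$ rounds, each at penalty $k^q$. One must also check $r_1$: since $\mathcal I_1=0$, we get $d_k(r_1)=0$, so $r_1=1/k$ and the first round is always in the cheap-information, high-penalty band. I would verify the elementary estimate $\kl(2/k\|1/k)\ge c/k$ carefully, because any loss there changes the $\lambda$-dependence. The rest is constant bookkeeping, with $c_q$ absorbing $2^{-q}$ and the $-1$ offsets.
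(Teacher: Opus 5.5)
Your proof is correct, and it takes a different route from the paper's.

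\textbf{The paper's route.} The paper stops at the level $d_k(1/2)$. It lower-bounds $\sum_{t<\tau} r_t^{-q}$ by a left-endpoint sum $\frac1\lambda\sum_t f(\mathcal I_t)\delta_t$ with $f=(d_k^{-1})^{-q-1}$. It then compares this sum to $\frac1\lambda\int_0^{d_k(1/2)} f$ and changes variables to obtain $\frac1\lambda\int_{1/k}^{1/2}s^{-q-1}d_k'(s)\,ds$. Only at that point does it discard everything outside $[2/k,4/k]$.

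\textbf{Your route.} You use a single hitting time at $2/k$. Leaving $[1/k,2/k]$ requires information $d_k(2/k)=\Theta(1/k)$. Each round spent inside that band supplies at most $2\lambda/k$. Hence at least $\Omega(1/\lambda)$ rounds are spent there, each at penalty $\Omega_q(k^q)$.

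\textbf{Comparison.} Your argument is more elementary: it needs no Riemann-sum comparison and no inverse of $d_k$. It also gives the stronger bound $\sum_t(r_t^{-q}-1)\ge c_q k^q\min\{T,1/\lambda\}$, which implies the claim because $k^q\ge 1$. The paper's integral route buys generality. It is exactly the abstract Lemma~\ref{lem:info-cost}, which integrates the cost against the information requirement over all scales. That keeps it sharp for penalties whose mass is spread over many scales, such as logarithmic costs, where a single-scale count loses factors. For power costs with $q>0$, the region $s=\Theta(1/k)$ already carries a $q$-dependent constant fraction of the integral, so your one-scale argument loses nothing here.

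\textbf{The step you left open.} You did not verify $d_k(2/k)\ge c/k$. Your Step~1 remark that the integrand is $O(1)$ only yields the upper bound. The lower bound is immediate, though. For $s\ge 1/k$ we have $1-s\le (k-1)/k$, so $d_k'(s)\ge \log(ks)$. Therefore $d_k(2/k)\ge \frac1k\int_1^2\log u\,du=(2\log 2-1)/k$.

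\textbf{Minor clean-ups.} With this estimate, $\tau-1\ge (2\log2-1)/(2\lambda)$ holds for every $\lambda>0$, so the case split on the size of $\lambda$ in Step~3 is unnecessary. Likewise, $s^\ast=4/k$ and $\tau'$ are never used and can be removed.
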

Since \(r_t\ge s_t\) gives \(s_t^{-q}\ge r_t^{-q}\), combining the averaging argument above
with \Cref{claim:informal-cost} 
produces a fixed instance \(\nu^{(j)}\) with
$\frac{1}{T}\sum_{t=1}^T a_{j,t}^{-q}\ \ge\ 1+\Omega_q\paren*{\min\paren*{1,\ \frac{k^q}{\lambda T}}}$.
Substituting \(\lambda=\mu^2/(2\sigma^2)\), the \((-q)\)-mean regret is bounded below by
\(\Omega_q\paren*{\mu\,\min(1, \frac{\sigma^2k^q}{\mu^2 T})}\). Choosing
\(\mu=\sigma\sqrt{k^q/T}\) balances the two terms and yields the negative-power lower bound
\(\Omega_q \paren*{\sigma k^{q/2}/\sqrt{T}}\). Combined with the classical average-regret
baseline \(\Omega\paren*{\sigma\sqrt{k/T}}\) \cite{bubeck2012regret} (which carries over to \((-q)\)-mean regret by monotonicity of generalized means (\Cref{lem:power-monotone})), this gives the minimax lower bound
\(\Omega_q\paren*{\sigma\sqrt{k^{\max\{1,q\}}/T}}\). \qed

\textbf{A general information-cost inequality.}
Claim~\ref{claim:informal-cost} is a special case of a general inequality that makes no reference to the bandit objective. Its proof uses only three features: the per-round penalty \(r^{-q}-1\) is a non-increasing function of
the success probability; the information needed to sustain success probability \(r\) is at
least \(d_k(r)\); and this information can grow by at most \(\lambda r\) per round. Abstracting
these into a cost \(c\), an information requirement \(\phi\), and an acquisition rate
\(\psi\), the same argument gives the following (\Cref{lem:info-cost}): if
\(\phi(r_t)\le\mathcal I_t\) and \(\mathcal I_{t+1}-\mathcal I_t\le\lambda\,\psi(r_t)\) for a
non-increasing cost \(c\), a strictly increasing \(\phi\) with \(\phi(1/k)=0\), and a
non-decreasing \(\psi\), then
\[
  \sum_{t=1}^T c(r_t)\ \ge\ \min\!\left\{\,c(s^\star)\,T,\ \
  \frac1\lambda\int_{1/k}^{s^\star}\frac{c(r)\,\phi'(r)}{\psi(r)}\,\mathrm dr\right\}
\]
for any target scale \(s^\star\). The negative-power bound of Claim~\ref{claim:informal-cost}
is recovered (\Cref{cor:neg-power}) by taking \(c(r)=r^{-q}-1\), \(\phi=d_k\), \(\psi(r)=r\),
and \(s^\star=1/2\), whereupon the integral evaluates to order \(k^q/\lambda\). Stated this way, the objective enters only through the cost \(c\) and the problem's structure
only through the acquisition rate \(\psi\), so the inequality applies well beyond negative
power means. 


\subsection{The Algorithm}\label{subsec:overview-algo}


Our lower bound exposes that regret depends on how fast the optimal arm's pull probability rises above the baseline $1/k$. Yet prior algorithms \cite{sarkar2025welfarist} rely on \textit{reward-agnostic} uniform exploration -- pulling
every arm, including the optimal one, at the same low rate while gathering evidence against the rest. 
Our key is a more judicious initial exploration, designed to
quickly secure a nontrivial \textit{growing} lower bound on the ex-ante reward. 
We achieve this via a two-phase approach:

\textbf{Phase I: Preparation.} 
In this phase, we perform \emph{shielded} exploration of arms by pairing each exploration pull with a safer \emph{anchor} pull. 
For this, first \emph{rank} these arms: draw a uniform random permutation $\pi$ over $[k]$, mapping each rank $r \in [k]$ to a unique arm $\pi_r$.
Next, partition the time horizon into a rigid tiling of 2-round blocks (see \Cref{fig:tiled_horizon}).
Each block $b \ge 1$ spans the two bandit rounds: $t=2b-1$ and $t=2b$. 
Of these, the algorithm reserves one round (the \textit{Scheduled Slot}) 
to explore arms,
and the other round (the \textit{Auxiliary Slot}) to exploit the safest known arm (the \emph{anchor}).


\begin{figure*}[!ht]
\centering
\resizebox{\textwidth}{!}{
\begin{tikzpicture}[x=1cm, y=1cm, >=Stealth, font=\sffamily\small]

    \tikzset{
        sch/.style={rectangle, draw=red!70!black, fill=red!12, line width=0.8pt, rounded corners=3pt, minimum width=1.1cm, minimum height=0.7cm, font=\bfseries, align=center},
        anc/.style={rectangle, draw=green!60!black, fill=green!15, line width=0.8pt, rounded corners=3pt, minimum width=1.1cm, minimum height=0.7cm, font=\bfseries, align=center},
        ucb/.style={rectangle, draw=blue!70!black, fill=blue!15, line width=0.8pt, rounded corners=3pt, minimum width=1.1cm, minimum height=0.7cm, font=\bfseries, align=center},
        blockbox/.style={rectangle, draw=black!40, dashed, line width=0.8pt, fill=black!4, rounded corners=5pt},
        flow/.style={-{Stealth[scale=1.1]}, line width=0.9pt, draw=black!70},
        interblock/.style={-{Stealth[scale=1.1]}, line width=1pt, draw=black!40},
        axis/.style={-{Stealth[scale=1.2]}, line width=1.1pt, draw=black!80}
    }

    \draw [decorate, decoration={brace, amplitude=6pt}, line width=0.9pt, draw=black!70] 
        (-0.1, 2.6) -- (12.8, 2.6) node [midway, above=8pt, font=\bfseries\normalsize, text=black!90] {Phase I: Preparation (HARE) };
    \draw [decorate, decoration={brace, amplitude=6pt}, line width=0.9pt, draw=black!70] 
        (13.4, 2.6) -- (19.2, 2.6) node [midway, above=8pt, font=\bfseries\normalsize, text=black!90] {Phase II: Exploitation (UCB)};

    \draw[line width=1.2pt] (-0.2, 1.5) -- (13.1, 1.5);
    \node[above, font=\bfseries, text=black!80] at (1.10, 2.0) {\textit{Block timeline ($b$) $\longrightarrow$}};

    \draw[axis] (-0.2, -1.2) -- (19.4, -1.2);
    \node[below, font=\bfseries, text=black!80] at (0.4, -1.8) {\textit{Time ($t$) $\longrightarrow$}};

    \draw[blockbox] (0.4, -0.6) rectangle (3.6, 0.6);
    \node[sch] (s1) at (1.15, 0) {Sch}; 
    \node[anc] (a1) at (2.85, 0) {Aux};
    \draw[flow] (1.8, 0) -- (2.2, 0);
    \draw[line width=0.8pt, draw=black!60] (2.0, 1.5) -- (2.0, 0.6);
    
    \draw[blockbox] (4.4, -0.6) rectangle (7.6, 0.6);
    \node[anc] (a2) at (5.15, 0) {Aux}; 
    \node[sch] (s2) at (6.85, 0) {Sch};
    \draw[flow] (5.8, 0) -- (6.2, 0);
    \draw[line width=0.8pt, draw=black!60] (6.0, 1.5) -- (6.0, 0.6);

    \draw[interblock] (3.7, 0) -- (4.3, 0);
    \draw[interblock] (7.7, 0) -- (8.1, 0);
    
    \node[font=\LARGE\bfseries, text=black!70] at (8.5, 0) {\dots};
    \node[fill=white, inner sep=4pt, font=\Large, text=black!70] at (8.5, 1.5) {\dots};
    \node[fill=white, inner sep=4pt, font=\Large, text=black!70] at (8.5, -1.2) {\dots};
    \draw[interblock] (8.9, 0) -- (9.3, 0);

    \draw[blockbox] (9.4, -0.6) rectangle (12.6, 0.6);
    \node[anc] (ab) at (10.15, 0) {Aux}; 
    \node[sch] (sb) at (11.85, 0) {Sch};
    \draw[flow] (10.8, 0) -- (11.2, 0);
    \draw[line width=0.8pt, draw=black!60] (11.0, 1.5) -- (11.0, 0.6);

    \draw[line width=1.5pt, dashed, red!70!black] (13.1, 2.5) -- (13.1, -1.8);
    \draw[flow, line width=1.2pt] (12.7, 0) -- (13.7, 0) node[midway, above=2pt, font=\bfseries\scriptsize] {\textit{Unblock}};
    
    \filldraw[draw=red!70!black, fill=white, line width=1.2pt] (13.1, 1.5) circle (3.5pt);

    \node[draw=red!70!black, densely dotted, line width=1pt, rounded corners=4pt, inner sep=8pt, align=center, font=\small, text=red!70!black] (safetybox) at (17, 1.5) {
        \textbf{Phase I stops at $t=\tau$ (even)} \\[6pt]
        $B_\tau > 0 \quad \text{and} \quad \displaystyle \min_{i \in [k]} N_i \ge \frac{32 \sigma^2 L}{B_\tau^2}$
    };
    
    \draw[-, line width=1.2pt, draw=red!70!black] (13.25, 1.5) to[out=30, in=150] (safetybox.west);

    \node[ucb] (u1) at (14.5, 0) {UCB};
    \node[ucb] (u2) at (16.3, 0) {UCB};
    \node[ucb] (u3) at (18.1, 0) {UCB};
    
    \draw[flow] (15.15, 0) -- (15.65, 0);
    \draw[flow] (16.95, 0) -- (17.45, 0);
    \draw[flow, dashed] (18.75, 0) -- (19.2, 0);

    \foreach \x/\b in {2.0/1, 6.0/2, 11.0/{\ceil{\tau/2}}} {
        \draw[densely dotted, draw=gray!80, line width=0.8pt] (\x, 0.6) -- (\x, 1.5);
        \draw[line width=1pt, draw=black!80] (\x, 1.4) -- (\x, 1.6);
        \node[above, font=\small\bfseries, text=black!90] at (\x, 1.6) {$b=\b$};
    }

    \foreach \x/\t in {1.15/1, 2.85/2, 5.15/3, 6.85/4, 10.15/\tau-1, 11.85/\tau, 14.5/\tau+1, 16.3/\tau+2} {
        \draw[densely dotted, draw=gray!80, line width=0.8pt] (\x, -0.4) -- (\x, -1.2);
        \draw[line width=1pt, draw=black!80] (\x, -1.1) -- (\x, -1.3);
        \node[below, font=\small\bfseries, text=black!90] at (\x, -1.3) {$t=\t$};
    }

\end{tikzpicture}
}
\vskip -3mm
\caption{\textbf{Phase I} demonstrates \emph{harmonic anchored rank} exploration. \textbf{Sch} denotes the \textit{Scheduled Slot}, reserved for exploration while \textbf{Aux} denotes the \textit{Auxiliary Slot}.}
\label{fig:tiled_horizon}
\end{figure*}


The exact ordering of these slots in a block $b$ is decided by
a fair coin $\theta_b \sim \text{Ber}(1/2)$, drawn afresh when that block begins. The \textit{Scheduled Slot} comes first if $\theta_b = 1$, the \textit{Auxiliary Slot} otherwise. 

\noindent \emph{Scheduled Slot.}
Exploration in this slot follows a strict schedule where each block $b$ is assigned a fixed rank $r_b$ (equivalently, arm $\pi_{r_b}$) to explore. This \textit{schedule}, denoted by the sequence $(r_b)_{b \ge 1}$, is generated by iterating an integer $n = 1, 2, 3, \dots$ serially through the natural numbers and appending all divisors $d \le k$ of each $n$ in ascending order (see \cref{fig:harmonic-sequence}). 
Observe that this design exhibits a \emph{harmonic} property. Since any rank $r$ divides a $1/r$ fraction of consecutive integers, its exploration frequency is proportional to $1/r$. Also, the sequence, being generated entirely \textit{a priori}, remains independent of the reward history.

\begin{minipage}{\linewidth}
\centering
\resizebox{\textwidth}{!}{
$
    \def\vp{\vphantom{\xcancel{10}}}
    \begin{array}{r c c c c c c c c c c l}
        r_b : & 
        \underbracket{\; 1, \; \vp} & 
        \underbracket{\;1,\; 2, \vp} & 
        \underbracket{\;1,\; 3, \vp} & 
        \underbracket{\;1,\; 2,\; 4, \vp} & 
        \underbracket{\;1,\; 5, \vp} & 
        \underbracket{\;1,\; 2,\; 3,\; \textcolor{red}{\xcancel{\textcolor{black}{6,}}} \;\vp} & 
        \underbracket{\;1,\; \textcolor{red}{\xcancel{\textcolor{black}{7,}}} \;\vp} & 
        \underbracket{\;1,\; 2,\; 4,\; \textcolor{red}{\xcancel{\textcolor{black}{8,}}} \;\vp} & 
        \underbracket{\;1,\; 3,\; \textcolor{red}{\xcancel{\textcolor{black}{9,}}} \;\vp} & 
        \underbracket{\;1,\; 2,\; 5,\; \textcolor{red}{\xcancel{\textcolor{black}{10,}}} \;\vp} & 
        \dots \\[1.5ex]
        n : & 1 & 2 & 3 & 4 & 5 & 6 & 7 & 8 & 9 & 10 &\dots
    \end{array}
$
}
\vskip -2mm
\captionof{figure}{(\textit{Harmonic schedule for $k = 5$ arms})  The sequence $r_b$ iteratively appends divisors of $n$, discarding any divisor $> k$. Each block $b$ maps to a rank $r_b$.}
\label{fig:harmonic-sequence}
\end{minipage}



\noindent \emph{Auxiliary Slot.}
This slot pulls the \emph{safest known arm}, if available, to defend the welfare floor. Let $c(n):=\sigma\sqrt{\frac{2\log(8kT/\delta)}{n}}$ denote the confidence radius after $n$ samples ($\delta \in (0,1)$ is the confidence level).
Writing $\hat{\mu}_i(t)$ and $N_i(t)$ for the empirical mean and number of pulls of arm $i$ up to (and including) round $t$, it computes the maximum lower confidence bound
\[ B_{t-1}:=\max\nolimits_{i\in[k]}\{\hat{\mu}_i(t-1)-c(N_i(t-1))\}, \]
where $t$ is its allotted round. If $B_{t-1}>0$, the maximizing arm is certified as the \textit{anchor} and is pulled. Otherwise, no \textit{anchor} has yet been certified, and the scheduled arm $\pi_{r_b}$ is pulled a second time.

Phase I continues blockwise, until an anchor has been found \textit{and} sufficient exploration has been performed across all arms. Formally, its termination occurs at stopping time $\tau$, defined as the first even round $t$ such that
\begin{equation}\label{eq:stop-condition}
    B_t > 0 \quad \text{and} \quad \min\nolimits_{i \in [k]} N_i(t) \ge 32\sigma^2 \log(8kT/\delta)/B_t^2.
\end{equation}
The first condition certifies the anchor, while the second adaptively requires every confidence radius to shrink below its certified margin. Together they bound the duration of Phase I, confining the inverse-power penalty to a finite transient before transitioning to UCB-based exploitation.

\textbf{Phase II: Exploitation.}
Upon satisfying \eqref{eq:stop-condition}, the block structure is discarded. For all subsequent rounds $t > \tau$, the algorithm employs the standard UCB index to pull arms, i.e., it selects the arm 
\[
I_t := \argmax\nolimits_{i \in [k]} \lbrace\muhat_i(t-1) + c(N_i(t-1))\rbrace.
\]

\textbf{Remarks on Algorithm Design.}
\textit{(i) Block boundaries.} The stopping condition is evaluated only at even rounds
(block boundaries), so the event \(\{\tau>t\}\) depends solely on the history prior to
block \(b\) and is independent of the intra-block coin flip \(\theta_b\). This preserves the
strict \(1/2\) marginal probabilities that the ex-ante reward bound (\Cref{lem:ex-ante-reward})
relies on; evaluating mid-block could truncate the auxiliary slot and break them.
\textit{(ii) Dynamic thresholding.} The anchor threshold \(B_{t-1}\) is read at the moment the
auxiliary slot executes, incorporating any scheduled observation already made in the same
block. The pseudo-code of \textsf{UCB-HARE} is presented in \Cref{alg:hare}.

\subsection{Upper Bound}\label{subsec:overview-upper}

We begin with a deterministic conversion that reduces the nonlinear \((-q)\)-mean regret to
a cumulative \textit{inverse penalty}. Define it as \(A_T:=\sum_{t=1}^T\bigl[(\mu_\star/m_t)^q-1\bigr]\), where \(m_t:=\E[\mu_{I_t}]\),
\(\mu_\star:=\max_{i\in[k]}\mu_i\).
\begin{lemma}[Inverse penalty conversion; informal, see \Cref{lem:inverse-conversion}]
\label{lem:informal-inverse-conversion}
\(R_{T,-q}\le\frac{\mu_\star}{q}\cdot\frac{A_T}{T}\).
\end{lemma}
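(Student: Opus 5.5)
We need to show $R_{T,-q}\le \frac{\mu_\star}{q}\cdot\frac{A_T}{T}$ with $A_T=\sum_t[(\mu_\star/m_t)^q-1]$. The plan is to normalize and then use a single convexity inequality. Write $x:=\frac1T\sum_{t=1}^T(\mu_\star/m_t)^q$, so that $x=1+A_T/T$. By definition,
\[
\Pow_{-q}(m_1,\dots,m_T)=\Bigl(\tfrac1T\sum_t m_t^{-q}\Bigr)^{-1/q}=\mu_\star\,x^{-1/q}.
\]
Consequently $R_{T,-q}=\mu_\star\bigl(1-x^{-1/q}\bigr)$. The degenerate cases are handled first. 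If some $m_t=0$, the convention gives $\Pow_{-q}=0$, and we interpret $(\mu_\star/0)^q=+\infty$, so $A_T=+\infty$ and the bound is trivial. Since $\mu_\star>0$, all other terms are well defined.

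The one real step is the scalar inequality
\[
1-x^{-1/q}\le \frac{x-1}{q},\qquad x>0.
\]
We only need it on the range of $x$ that actually occurs, but it holds for every $x>0$. Consider $f(x)=x^{-1/q}$. It is convex on $(0,\infty)$, with $f(1)=1$ and $f'(1)=-1/q$. The tangent-line inequality therefore gives $x^{-1/q}\ge 1-(x-1)/q$, which rearranges to the claim. This holds even for $x<1$, where $1-x^{-1/q}<0$ and the right-hand side is also negative but larger. We could restrict to $x\ge1$ using $m_t\le\mu_\star$, but this is not needed.

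Multiplying by $\mu_\star>0$ and substituting $x-1=A_T/T$ yields $R_{T,-q}\le \frac{\mu_\star}{q}\cdot\frac{A_T}{T}$. There is no genuine obstacle. The only points needing care are that the sign and direction of the convexity bound work for all $x>0$, and the zero-reward convention.
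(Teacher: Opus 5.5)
Your proposal is correct and matches the paper's proof: both factor $\mu_\star$ out to get $R_{T,-q}=\mu_\star\bigl(1-(1+A_T/T)^{-1/q}\bigr)$ and then apply the tangent-line inequality for the convex map $x\mapsto x^{-1/q}$ at $x=1$. Your added treatment of the $m_t=0$ convention, and the observation that the inequality holds for all $x>0$, are harmless refinements of the same argument.
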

\noindent To see this, factor \(\mu_\star\) out of the \((-q)\)-mean to get
\(\Pow_{-q}(m_1,\ldots,m_T)=\mu_\star\,(1+A_T/T)^{-1/q}\); the bound then follows from the
elementary inequality \(1-(1+u)^{-1/q}\le u/q\) for \(u\ge0\). \qed

Now we show that \textsf{UCB-HARE} keeps the ex-ante reward bounded away from zero quickly
enough during preparation, and then behaves like standard UCB after certification. Both
parts rest on the standard high-probability event on which the algorithm's confidence
intervals are valid. Let \(L:=\log(8kT/\delta)\) and \(c(n):=\sigma\sqrt{2L/n}\), and define
\(\calE:=\{\,\forall i\in[k],\ \forall n\le T:\ |\muhat_{i,n}-\mu_i|\le c(n)\,\}\). Applying
sub-Gaussian concentration to each of the \(kT\) arm--count pairs and a union bound gives
\(\Pr(\calE)\ge 1-\delta/4\). On \(\calE\) every lower confidence bound is valid,
\(L_i(t):=\muhat_i(t)-c(N_i(t))\le\mu_i\), so whenever \(B_t:=\max_i L_i(t)>0\), any maximizing
arm \(a\) satisfies \(\mu_a\ge L_a(t)=B_t>0\)  - certifying an \emph{anchor} with strictly
positive true mean.

The next lemma explains why Phase I cannot last too long. Define
\(n_\star:=\max\{\lceil 128\sigma^2L/\mu_\star^2\rceil,\,1\}\), \(S:=\;\lceil H_k n_\star\rceil\), and
\(T_0:=18kS\).
\begin{lemma}[Finite Preparation Window; informal, see \Cref{lem:certification,lem:prep-ends}]
\label{lem:informal-transition}
On \(\calE\), an anchor exists once the optimal arm has received \(n_\star\) samples, and
the harmonic schedule forces Phase I to terminate by round \(T_0=18kH_kn_\star\).
\end{lemma}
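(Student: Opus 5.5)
The plan has two parts, both carried out on the event \(\calE\). The first part shows that the anchor is certified once the optimal arm \(i^\star\) has \(n_\star\) samples. The second part shows that the harmonic schedule gives \(i^\star\), and every other arm, enough samples by round \(T_0\).

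For certification, suppose \(N_{i^\star}(t)\ge n_\star\ge 128\sigma^2L/\mu_\star^2\). Then \(c(N_{i^\star}(t))\le \sigma\sqrt{2L/n_\star}\le \mu_\star/8\). On \(\calE\) this gives \(L_{i^\star}(t)\ge \mu_\star-2c(N_{i^\star}(t))\ge 3\mu_\star/4\), so \(B_t\ge 3\mu_\star/4>0\). This does more than certify an anchor: it lower-bounds the margin \(B_t\) by a constant fraction of \(\mu_\star\). We need that lower bound for the stopping rule \eqref{eq:stop-condition}, whose threshold is \(32\sigma^2L/B_t^2\le 32\sigma^2L\cdot 16/(9\mu_\star^2)\le 57\sigma^2L/\mu_\star^2\le n_\star\). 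Once \(B_t\ge 3\mu_\star/4\), the stopping rule therefore only asks that every arm have at least \(n_\star\) samples. Two caveats need checking. First, \(B_t\) is a max, so it can only be larger; the threshold can only shrink, which helps. Second, \(B_t\) may drop later as other arms' LCBs change, but the bound via \(i^\star\) persists since \(N_{i^\star}\) is nondecreasing.

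Next I count samples from the schedule. In the prefix of the divisor sequence generated by \(n=1,\dots,M\), rank \(r\) appears \(\lfloor M/r\rfloor\) times. The prefix length is \(\sum_{r\le k}\lfloor M/r\rfloor\le MH_k\) blocks. Before certification, each block also pulls the scheduled arm in the auxiliary slot, which only adds samples; after certification the auxiliary slot goes to the anchor. So I conservatively count only scheduled-slot pulls. The rank-\(k\) arm is the bottleneck, and it receives at least \(\lfloor M/k\rfloor\) pulls. Taking \(M=k n_\star\) gives every arm, including \(i^\star\) at whatever rank \(\pi\) assigns it, at least \(n_\star\) samples. This happens after at most \(kn_\star H_k\le kS\) blocks, i.e. \(2kS\) rounds, plus rounding slack. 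The constant \(18\) in \(T_0\) absorbs the slack from floors, the ceiling in \(S\), and finishing the current block (stopping is checked only at even rounds). Combining with the certification step, at the first even round \(t\le T_0\) with all \(N_i(t)\ge n_\star\), condition \eqref{eq:stop-condition} holds, so \(\tau\le T_0\).

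The main obstacle is the quantifier mismatch in the stopping rule. The algorithm uses the current \(B_t\), not \(\mu_\star\), so I must ensure the required sample count never exceeds what the schedule supplies. I resolve this by the lower bound \(B_t\ge 3\mu_\star/4\), valid on \(\calE\) once \(i^\star\) has \(n_\star\) samples; since all arms reach \(n_\star\) simultaneously by round \(2kS\), the two conditions are met at the same time. A secondary check is the pre-certification doubling of scheduled pulls. It only increases the counts, so the monotone counting argument is unaffected.
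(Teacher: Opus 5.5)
Your proposal is correct and follows the paper's route. You certify the anchor through the optimal arm's LCB at margin $3\mu_\star/4$, so the stopping threshold $32\sigma^2L/B_t^2$ is at most $n_\star$; you then use the harmonic schedule to give every arm $n_\star$ scheduled pulls by a deterministic even round. The only difference is the schedule count: you count the exact divisor prefix up to $n=kn_\star$, whereas the paper invokes the prefix-balance lemma at block $8kS+1$. Your count gives the sharper $\tau\le 2\sum_{r\le k}\lfloor kn_\star/r\rfloor\le 2kH_kn_\star$, which yields the stated $18kH_kn_\star$ window directly, without the ceiling in $S$.
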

\noindent To see this, suppose the optimal arm \(i_\star\) has \(n_\star\) samples. Its
confidence radius is then \(c(N_{i_\star})\le\sigma\sqrt{2L/(128\sigma^2L/\mu_\star^2)}=\mu_\star/8\). So on \(\calE\),
\[
  L_{i_\star}(t)=\muhat_{i_\star}(t)-c(N_{i_\star}(t))\ge\mu_\star-2c(N_{i_\star}(t))\ge\tfrac{3}{4}\mu_\star,
\]
giving \(B_t>0\) and hence an anchor. For the duration, the prefix-balance property of the
harmonic schedule guarantees that by block \(8kS+1\) every rank (and thus every arm) has
at least \(n_\star\) samples. At that point the optimal arm certifies \(B_t\ge 3\mu_\star/4\),
and the stopping threshold satisfies
\(32\sigma^2L/B_t^2\le 32\sigma^2L/(3\mu_\star/4)^2=\tfrac{512}{9}\sigma^2L/\mu_\star^2\le n_\star\).
So the stopping condition holds at this block boundary, and since each block spans two
rounds, Phase I ends by \(T_0=18kS\). \qed

We split the cumulative \textit{inverse penalty} at this deterministic time,
\[
  A_T\le
  \underbrace{\sum\nolimits_{t=1}^{T\wedge T_0}\Bigl[(\mu_\star/m_t)^q-1\Bigr]}_{A_T^{\mathrm{prep}}}
  +\underbrace{\sum\nolimits_{t=T_0+1}^{T}\Bigl[(\mu_\star/m_t)^q-1\Bigr]}_{A_T^{\mathrm{UCB}}},
\]
with \(A_T^{\mathrm{UCB}}:=0\) if \(T\le T_0\). This is to avoid conditioning on the random
stopping time \(\tau\). We first control the preparation term. For block \(b\), define the
expanding frontier \(a_b:=\min\{k,\lfloor(b-1)/8S\rfloor\}\): by the start of block \(b\),
every arm whose random rank is at most \(a_b\) has accumulated enough samples to certify an
anchor, were it the optimal arm.
\begin{claim}[Growing reward floor; informal, see \Cref{lem:ex-ante-reward}]
\label{claim:informal-ex-ante}
There is a universal constant \(c_0>0\) (one may take \(c_0=1/4\)) such that for every round
\(t\), with \(b=\lceil t/2\rceil\), \(\ m_t\ge c_0\mu_\star\max\{1/k,\ a_b/k\}\).
\end{claim}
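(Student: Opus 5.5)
The plan is to lower-bound $m_t=\E[\mu_{I_t}]$ by restricting to the event that round $t$ pulls the optimal arm $i_\star$, or an anchor whose mean is certified to be comparable to $\mu_\star$. Two mechanisms are at work, one for each term of the $\max$.

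\textbf{Baseline term $1/k$.} The permutation $\pi$ is uniform and independent of the schedule $(r_b)$, so $\Pr(\pi_{r_b}=i_\star)=1/k$ for every block $b$. Remark (i) says $\{\tau>2b-2\}$ is determined before block $b$ and is independent of the coin $\theta_b$. Hence, on $\{\tau\ge 2b\}$, round $t$ is the Scheduled Slot with probability exactly $1/2$. In that case the pull is $\pi_{r_b}$, and this gives $m_t\ge \tfrac12\cdot\tfrac1k\mu_\star$ from the Phase I contribution.

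On $\{\tau<t\}$ we are in Phase II. There the UCB pull satisfies $\mu_{I_t}\ge \mu_\star-2c(N_{I_t})$ on $\calE$. The stopping rule \eqref{eq:stop-condition} gives $c(N_i)\le B_\tau/4\le\mu_\star/4$ for every arm, so $\mu_{I_t}\ge\mu_\star/2$ on $\calE$.

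Combining the two phases and discounting the failure probability of $\calE$ (at most $\delta/4$, with $\delta$ chosen small, e.g.\ $\delta\le 1/T$), every round gets $m_t\ge c_0\mu_\star/k$. There is a subtlety: Phase I may have ended mid-way through the previous block's analysis. I would handle this by conditioning on the history before block $b$ and using the bound separately on each piece of $\{\tau\ge 2b\}\cup\{\tau\le 2b-2\}$.

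\textbf{Growing term $a_b/k$.} Let $F_b=\{\mathrm{rank}^{-1}(i_\star)\le a_b\}$, which by uniformity of $\pi$ has probability $a_b/k$. I would invoke the prefix-balance property behind Lemma~\ref{lem:informal-transition}: after $8rS$ blocks, every rank $r'\le r$ has been scheduled at least $n_\star$ times. Therefore, on $F_b\cap\calE$, arm $i_\star$ has at least $n_\star$ samples before block $b$. The certification argument then gives $B\ge \tfrac34\mu_\star$ at the Auxiliary Slot.

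The certified anchor $a$ satisfies $\mu_a\ge L_a\ge B\ge \tfrac34\mu_\star$. The Auxiliary Slot is played with probability $1/2$ (again by independence of $\theta_b$), so $m_t\ge \tfrac12\cdot\tfrac34\mu_\star\Pr(F_b\cap\calE)$. In Phase II the bound $\mu_\star/2$ already dominates.

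\textbf{Main obstacle.} The hardest step is the independence bookkeeping. The event $F_b$ depends on $\pi$, while the anchor identity and $\tau$ depend on rewards and on $\pi$. I need $\Pr(F_b\cap\calE\cap\{\text{Aux slot at }t\})\ge \tfrac12(a_b/k-\delta/4)$. The route I would try first: $\theta_b$ is independent of $(\pi,\text{history before block }b)$. Furthermore, $\calE$ is defined via the reward tables $\muhat_{i,n}$, which are independent of $\pi$. So $\Pr(F_b\cap\calE)\ge a_b/k-\delta/4$.

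The slot ordering also matters. If the Auxiliary Slot comes second, $B_{t-1}$ includes the scheduled pull, which only adds samples, so the bound still holds. Absorbing the $\delta$ loss and the mixing of the two phases into the constant yields $c_0=1/4$ for suitably small $\delta$. I would verify this constant last.
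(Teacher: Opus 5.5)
Your architecture is the paper's: the scheduled slot gives the $1/k$ term, and the frontier-plus-certification argument gives the $a_b/k$ term. The coin $\theta_b$ is independent of the pre-block history, including $\{\tau\ge 2b\}$, $\calE$ is independent of $\pi$, and Phase II safety covers rounds after $\tau$. Two steps, however, do not go through as written.

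\textbf{The $1/k$ term.} You claim the Phase I piece alone yields $\tfrac12\cdot\tfrac1k\mu_\star$ on $\{\tau\ge 2b\}$. That implicitly uses $\Pbb(\pi_{r_b}=i_\star\mid\tau\ge 2b)=1/k$, which is false in general. The time $\tau$ is a function of $\pi$ and the reward tables. The rank of $i_\star$ governs how fast it is sampled and certified as an anchor, and hence when the stopping rule fires. Conditioning on the pre-block history, as you propose, fixes $\pi$. It therefore only gives $\tfrac{\mu_\star}{2}\Pbb(R_\star=r_b,\ \tau\ge 2b)$, not $\tfrac{\mu_\star}{2k}\Pbb(\tau\ge 2b)$.

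The fix, which is how the paper argues, is to intersect with $A_t:=\calE\cap\{R_\star=r_b\}$ \emph{before} splitting by phase:
\begin{itemize}
\item On $A_t\cap\{\tau\ge 2b\}$ the expected reward is at least $\mu_\star/2$, since $\theta_b$ is independent of $\pi$, the rewards, and $\{\tau\ge 2b\}$.
\item On $A_t\cap\{\tau\le 2b-2\}$ it is at least $\mu_\star/2$ by Phase II safety.
\end{itemize}
So $m_t\ge\tfrac{\mu_\star}{2}\Pbb(A_t)$ with no independence between $R_\star$ and $\tau$ required. You already use this device for the $a_b/k$ term (``in Phase II the bound $\mu_\star/2$ already dominates''); the $1/k$ term needs it too.

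\textbf{The additive $\delta$ loss.} You lower-bound $\Pbb(F_b\cap\calE)$ by $a_b/k-\delta/4$, and the baseline event by $1/k-\delta/4$. This loss is harmless only when $\delta=O(1/k)$. The claim does not assume that: \hare\ runs with any $\delta\in(0,1)$. Your suggested $\delta\le 1/T$ does not guarantee it when $T<k$, and then $1/k-\delta/4$ can be negative.

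You have already noted that $\calE$ depends only on the reward tables and is therefore independent of $\pi$, so use the product instead:
\begin{itemize}
\item $\Pbb(F_b\cap\calE)=\Pbb(\calE)\,a_b/k\ge\tfrac34\cdot\tfrac{a_b}{k}$, and $\Pbb(A_t)\ge\tfrac{3}{4k}$.
\item Hence $m_t\ge\tfrac{\mu_\star}{2}\cdot\tfrac{3}{4k}=\tfrac{3\mu_\star}{8k}$, and $m_t\ge\tfrac{3\mu_\star}{8}\cdot\tfrac{3a_b}{4k}=\tfrac{9a_b\mu_\star}{32k}$.
\end{itemize}
This gives $c_0=1/4$ for every $\delta\le 1$, with no smallness condition on $\delta$.
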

\noindent To see this, let \(R_\star\) be the rank of the optimal arm under the initial
permutation, so \(R_\star\) is uniform on \([k]\). The ex-ante reward has two sources. The
\emph{scheduled slot} is the current round with probability \(1/2\) (the within-block coin
flip), and the scheduled arm is optimal with probability \(1/k\), giving a baseline
contribution of order \(\mu_\star/k\). The \emph{auxiliary slot} contributes when
\(R_\star\le a_b\): the optimal arm then has \(n_\star\) samples by the start of block \(b\),
which on \(\calE\) certifies an anchor of mean at least \(3\mu_\star/4\). Since
\(\Pr(R_\star\le a_b)=a_b/k\), the round is the auxiliary slot with probability \(1/2\), and
the block-boundary stopping rule makes the continuation of Phase I independent of that coin
flip, this yields an auxiliary contribution of order \(\mu_\star a_b/k\). Finally, if Phase I
has already ended before round \(t\), the Phase-II rule is at least as safe (shown below in \cref{lem:informal-phase2-safe}), so
in all cases \(m_t\ge c_0\mu_\star\max\{1/k,\ a_b/k\}\). \qed

The reward floor yields a bound on the preparation penalty.
\begin{claim}[Preparation inverse penalty; informal, see \Cref{lem:prep-penalty}]
\label{claim:informal-prep-penalty}
There exists a constant \(C_q>0\), depending only on \(q\), such that
\[
  A_T^{\mathrm{prep}}=\sum\nolimits_{t=1}^{T\wedge T_0}\Bigl[(\mu_\star/m_t)^q-1\Bigr]
  \le C_q\,S\,K_q(k),
  \, \text{where}\,\, K_q(k):=\sum\nolimits_{r=1}^k\Bigl(\tfrac{k}{r}\Bigr)^q.
\]
\end{claim}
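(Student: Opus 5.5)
The plan is to feed the growing reward floor of Claim~\ref{claim:informal-ex-ante} into the definition of $A_T^{\mathrm{prep}}$ and sum block by block. Since $m_t \ge c_0\mu_\star\max\{1/k, a_b/k\}$ with $b=\lceil t/2\rceil$, each preparation round satisfies
\[
  \Bigl(\frac{\mu_\star}{m_t}\Bigr)^q - 1 \;\le\; c_0^{-q}\,\Bigl(\frac{k}{\max\{1,a_b\}}\Bigr)^q .
\]
I drop the $-1$, which only helps. The bound then depends on $t$ only through the frontier $a_b=\min\{k,\lfloor (b-1)/8S\rfloor\}$. This is a deterministic, nondecreasing step function of $b$, so the sum becomes a deterministic calculation.

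Next I group the rounds $t\le T\wedge T_0 \le 18kS$ by the value of the frontier. There are two cases.
\begin{itemize}
\item \emph{Frontier zero.} Here $a_b=0$ and we use the baseline floor $1/k$. This covers blocks $b\le 8S$, that is, at most $16S$ rounds, each costing at most $c_0^{-q}k^q$.
\item \emph{Frontier $r\in\{1,\dots,k-1\}$.} The frontier equals $r$ for exactly $8S$ consecutive blocks, i.e.\ $16S$ rounds, each costing at most $c_0^{-q}(k/r)^q$.
\end{itemize}
Once $a_b=k$, every remaining round up to $T_0$ costs at most $c_0^{-q}$ per round. There are at most $T_0=18kS$ such rounds, for a total of $18kS\,c_0^{-q}$.

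Summing the pieces gives
\[
  A_T^{\mathrm{prep}} \le c_0^{-q}\Bigl(16S\,k^q + 16S\sum_{r=1}^{k-1}(k/r)^q + 18kS\Bigr).
\]
Each term is at most a constant multiple of $S\,K_q(k)$:
\begin{itemize}
\item $k^q$ is the $r=1$ term of $K_q(k)$;
\item $\sum_{r<k}(k/r)^q \le K_q(k)$;
\item $k\le K_q(k)$, since every summand $(k/r)^q\ge1$.
\end{itemize}
Hence $A_T^{\mathrm{prep}}\le (32+18)c_0^{-q}\,S\,K_q(k)$, so we may take $C_q=50\cdot 4^q$ with $c_0=1/4$.

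The main obstacle is making sure the floor from Claim~\ref{claim:informal-ex-ante} applies to \emph{every} $t\le T\wedge T_0$ and not only to rounds still in Phase~I. Rounds after the random stopping time $\tau$ but before $T_0$ must also be covered. I will rely on the claim as stated ("in all cases"), which includes the Phase~II safety argument.

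A related subtlety is the off-event $\calE^c$, on which the anchor may be bad. Since $m_t$ is an unconditional expectation, I would check that the claim's constant $c_0$ already absorbs the loss from $\calE^c$ via $\Pr(\calE)\ge 1-\delta/4$. If it does not, I would keep only the scheduled-slot baseline contribution. That baseline is $\mu_\star/(2k)$ and holds unconditionally before $\tau$, so only the auxiliary contribution needs $\calE$, and it still yields a frontier-dependent floor up to constants.
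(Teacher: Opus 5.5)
Your proposal is correct and follows essentially the same route as the paper: plug in the floor $m_t \ge \tfrac14\mu_\star\max\{1/k,a_b/k\}$ from \Cref{lem:ex-ante-reward}, drop the $-1$, and sum over the $8S$-block plateaus of the frontier $a_b$, bounding each group by a constant times $S\,K_q(k)$. Your two worries are already handled inside that lemma: it covers rounds after $\tau$ through the Phase~II safety argument, and it absorbs $\calE^c$ through $\Pr(\calE)\ge 3/4$. The only difference is that you bound the saturated rounds crudely by $T_0=18kS$ rather than by the $kS$ blocks $b\in[8kS+1,9kS]$, which gives $C_q=50\cdot 4^q$ instead of the paper's $34\cdot 4^q$.
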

\noindent To see this, note the window \(t\le T\wedge T_0\) spans at most \(9kS\) blocks
(\(T_0=18kS\), two rounds per block), and by \Cref{claim:informal-ex-ante} every round in
block \(b\) has \((\mu_\star/m_t)^q\le C_q\max\{1/k,\,a_b/k\}^{-q}\). We partition the blocks
by the frontier value \(a_b\): the first \(8S\) blocks have \(a_b=0\) and contribute
\(O_q(Sk^q)\); each value \(a_b=r\) for \(r=1,\ldots,k-1\) persists for \(O(S)\) blocks,
contributing \(O_q\bigl(S(k/r)^q\bigr)\); and the saturated blocks (\(a_b=k\)) add \(O_q(S)\).
Summing up, we get $A_T^{\mathrm{prep}}\le C_q S\sum_{r=1}^k\Bigl(\tfrac{k}{r}\Bigr)^q=C_q S\,K_q(k)$. \qed

It remains to control the rounds after \(T_0\). The stopping rule ensures that, once Phase
II begins, every arm has a confidence radius small relative to \(\mu_\star\).
\begin{lemma}[Phase-II safety; informal, see \Cref{lem:phase2-safe}]
\label{lem:informal-phase2-safe}
On \(\calE\), if Phase I stops at round \(\tau\), then every arm selected by the UCB rule in
Phase II has a true mean of at least \(\mu_\star/2\).
\end{lemma}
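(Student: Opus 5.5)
The plan is to show that, on $\calE$, every arm $i$ that UCB can select at a round $t>\tau$ satisfies $\mu_i \ge \mu_\star/2$. Two facts drive this: the stopping rule \eqref{eq:stop-condition}, which forces every confidence radius to be small relative to $B_\tau$, and the validity of the confidence intervals on $\calE$. Sample counts never decrease, so the radius bound from time $\tau$ persists for all $t>\tau$.

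\textbf{Step 1: the anchor margin is comparable to $\mu_\star$.} On $\calE$ every lower confidence bound is valid, so $B_\tau = \max_i L_i(\tau) \le \max_i \mu_i = \mu_\star$. The stopping condition gives $N_i(\tau) \ge 32\sigma^2 L/B_\tau^2$ for every $i$. Hence
\[
c(N_i(\tau)) = \sigma\sqrt{2L/N_i(\tau)} \le \sigma\sqrt{2L B_\tau^2/(32\sigma^2 L)} = B_\tau/4 \le \mu_\star/4 .
\]
Since $N_i(t)\ge N_i(\tau)$ and $c(\cdot)$ is decreasing, we get $c(N_i(t-1)) \le B_\tau/4 \le \mu_\star/4$ for all $i$ and all $t>\tau$.

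\textbf{Step 2: compare UCB indices.} Fix a Phase II round $t$ and let $i_\star$ be an optimal arm. On $\calE$, $U_{i_\star}(t-1) := \muhat_{i_\star}(t-1) + c(N_{i_\star}(t-1)) \ge \mu_{i_\star} = \mu_\star$. For any arm $i$, the index satisfies $U_i(t-1) \le \mu_i + 2c(N_i(t-1)) \le \mu_i + \mu_\star/2$. If $I_t = i$, then $U_i(t-1)\ge U_{i_\star}(t-1)\ge \mu_\star$, which gives $\mu_i \ge \mu_\star - 2c(N_i(t-1)) \ge \mu_\star/2$.

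\textbf{Step 3: the point to check.} The delicate part is Step 1's use of $B_\tau \le \mu_\star$ rather than only $B_\tau>0$. The stopping threshold is stated in terms of $B_\tau$, so we must tie it to $\mu_\star$ through the validity of the LCBs on $\calE$. Otherwise the radius is only known to be at most $B_\tau/4$ for an unknown $B_\tau$. Once $B_\tau\le\mu_\star$ is in hand, the factor $4$ gives slack: with $2c\le\mu_\star/2$ the conclusion $\mu_i\ge\mu_\star/2$ follows.

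Finally, we must ensure the empirical means at each count are covered by $\calE$, which holds since $N_i(t-1)\le T$. This also handles ties in the argmax, because any maximizer satisfies the same inequality.
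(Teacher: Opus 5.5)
Your proof is correct and follows essentially the same route as the paper. Both arguments use validity of the lower confidence bounds on $\calE$ to get $B_\tau\le\mu_\star$, which turns the stopping threshold into $c(N_i)\le\mu_\star/4$ for every arm throughout Phase II; both then compare the selected arm's UCB index with the optimal arm's to obtain $\mu_\star-\mu_i\le 2c(N_i(t-1))\le\mu_\star/2$ (the paper also proves $B_\tau\ge 2\mu_\star/3$ along the way, which this statement does not need).
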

\noindent To see this, recall that at the stopping time \(B_\tau>0\) and
\(\min_i N_i(\tau)\ge 32\sigma^2L/B_\tau^2\). On \(\calE\) no lower confidence bound exceeds
its true mean, so \(B_\tau\le\mu_\star\); hence every arm has at least
\(32\sigma^2L/\mu_\star^2\) samples and confidence radius at most \(\mu_\star/4\). Now let
\(i\) be any arm selected by UCB in Phase II. Its index dominates the optimal arms',
\(\muhat_i(t)+c(N_i(t))\ge\muhat_{i_\star}(t)+c(N_{i_\star}(t))\ge\mu_\star\), so
\[
  \mu_i\ge\muhat_i(t)-c(N_i(t))=\bigl(\muhat_i(t)+c(N_i(t))\bigr)-2c(N_i(t))
  \ge\mu_\star-\tfrac{\mu_\star}{2}=\tfrac{\mu_\star}{2}.
\] 

This safety property lets the Phase-II penalty be controlled by UCB pseudo-regret.
\begin{claim}[Phase-II inverse penalty; informal, see \Cref{lem:phase2-cost}]
\label{claim:informal-phase2-cost}
There exists a constant \(C_q>0\), depending only on \(q\), such that
\[
A_T^{\mathrm{UCB}}=\sum\nolimits_{t=T_0+1}^{T}\Bigl[(\mu_\star/m_t)^q-1\Bigr]
  \le C_q\Bigl(\tfrac{\sigma}{\mu_\star}\sqrt{kTL}+T\delta\Bigr).
\]
\end{claim}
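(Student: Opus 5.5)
We bound $A_T^{\mathrm{UCB}}$ by splitting each $m_t$ over the good event $\calE$ and its complement, turning the inverse penalty into a linear gap via Phase-II safety, and then applying the standard UCB pseudo-regret analysis.

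\textbf{Step 1: linearize the inverse penalty.} For $t>T_0$, Lemma~\ref{lem:informal-transition} gives $\tau\le T_0<t$ on $\calE$, so round $t$ is a Phase-II UCB round. By Lemma~\ref{lem:informal-phase2-safe}, the selected arm then satisfies $\mu_{I_t}\ge\mu_\star/2$. Write $m_t=\E[\mu_{I_t}\1_{\calE}]+\E[\mu_{I_t}\1_{\calE^c}]\ge \E[\mu_{I_t}\1_{\calE}]$. Define the conditional gap $g_t:=\mu_\star-\E[\mu_{I_t}\1_{\calE}]/\Pr(\calE)$. Then
\[
m_t\ge \Pr(\calE)(\mu_\star-g_t)\ge(1-\delta/4)(\mu_\star-g_t),
\qquad 0\le g_t\le\mu_\star/2 .
\]
On $u\in[1/2,1]$ the function $u\mapsto u^{-q}$ is $C_q$-Lipschitz. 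Combining this with $(1-\delta/4)^{-q}\le 1+C_q\delta$ gives
\[
(\mu_\star/m_t)^q-1\le C_q\bigl(g_t/\mu_\star+\delta\bigr).
\]
Summing over $t\in(T_0,T]$ yields
\[
A_T^{\mathrm{UCB}}\le C_q\Bigl(\tfrac{1}{\mu_\star}\textstyle\sum_t g_t+T\delta\Bigr).
\]

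\textbf{Step 2: bound the cumulative gap by the UCB pseudo-regret.} Up to the $1/\Pr(\calE)\le 2$ factor, $\sum_t g_t$ is at most $\E\bigl[\sum_{t>\tau}(\mu_\star-\mu_{I_t})\1_{\calE}\bigr]$. On $\calE$, every pull of a suboptimal arm $i$ with gap $\Delta_i=\mu_\star-\mu_i$ during Phase II satisfies
\[
\mu_\star\le \muhat_i+c(N_i)\le\mu_i+2c(N_i),
\]
so $\Delta_i\le 2c(N_i(t-1))=2\sigma\sqrt{2L/N_i(t-1)}$. This holds regardless of how many samples arm $i$ gathered in Phase I. Summing over the successive pull counts of arm $i$ gives a per-arm total gap of at most $\sum_{n\le N_i(T)}2\sigma\sqrt{2L/n}\le 4\sigma\sqrt{2L\,N_i(T)}$. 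Cauchy--Schwarz with $\sum_i N_i(T)\le T$ then bounds the total by $O(\sigma\sqrt{kTL})$, which is deterministic on $\calE$. Dividing by $\mu_\star$ gives the $\frac{\sigma}{\mu_\star}\sqrt{kTL}$ term.

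\textbf{Main obstacle.} The delicate step is Step 1: the expectation $m_t$ mixes $\calE$ with $\calE^c$, and on $\calE^c$ the reward could be near zero. I would avoid controlling $\calE^c$ at all by using only $\mu_{I_t}\ge0$ there, which is exactly where the $T\delta$ term comes from. I would also take care that the bound $g_t\le\mu_\star/2$, coming from Lemma~\ref{lem:informal-phase2-safe}, holds pointwise on $\calE$, so the Lipschitz linearization is valid uniformly in $t$.
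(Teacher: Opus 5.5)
Your proposal is correct and follows the paper's proof of Lemma~\ref{lem:phase2-cost} essentially step for step: lower-bound $m_t$ through the good event $\calE$ using Phase-II safety, linearize the inverse penalty with the local Lipschitz bound, and control the conditional pseudo-regret on $\calE$ via $\mu_\star-\mu_{I_t}\le 2c(N_{I_t}(t-1))$ plus Cauchy--Schwarz. The differences are minor. You absorb $\Pr(\calE)$ multiplicatively via $(1-\delta/4)^{-q}\le 1+C_q\delta$, whereas the paper first shows $m_t\ge\mu_\star/4$ and then splits $\mu_\star-m_t\le\E[\mu_\star-\mu_{I_t}\mid\calE]+\mu_\star\Pr(\calE^c)$ additively. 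Also, your per-arm sum $\sum_{n\le N_i(T)}c(n)$ implicitly needs $N_i(\tau)\ge 1$, so that no Phase-II pull sees $c(0)=+\infty$. The paper states this explicitly and derives it from the stopping condition.
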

\noindent To see this, assume \(T>T_0\) (else the sum is empty). By
\Cref{lem:informal-transition}, all rounds \(t>T_0\) lie in Phase II, and on \(\calE\),
\Cref{lem:informal-phase2-safe} gives selected mean at least \(\mu_\star/2\); since
\(\Pr(\calE)\ge 1-\delta/4\), this yields the unconditional floor \(m_t\ge\mu_\star/4\). On
\(m_t\in[\mu_\star/4,\mu_\star]\) the inverse-power map is locally Lipschitz, so
\((\mu_\star/m_t)^q-1\le C_q(\mu_\star-m_t)/\mu_\star\). The numerator \(\mu_\star-m_t\) is
the expected pseudo-regret at round \(t\); on \(\calE\) the standard UCB argument bounds its
cumulative sum over Phase II by \(O(\sigma\sqrt{kTL})\), while \(\calE^c\) contributes at
most \(O(\mu_\star T\delta)\). Dividing by \(\mu_\star\) gives the displayed bound. \qed

Combining \Cref{claim:informal-prep-penalty,claim:informal-phase2-cost} gives $A_T\le C_q\Bigl(S\,K_q(k)+\tfrac{\sigma}{\mu_\star}\sqrt{kTL}+T\delta\Bigr)$, where $S= \lceil H_kn_\star \rceil$.
Since \(n_\star\le 1+128\sigma^2L/\mu_\star^2\), the preparation term obeys
\(S K_q(k)\le H_kK_q(k)\,(1+C\sigma^2L/\mu_\star^2)\), and applying
\Cref{lem:informal-inverse-conversion} yields the $(-q)$-mean regret
\[
  R_{T,-q}\le C_q\left[
    \frac{\mu_\star H_kK_q(k)}{T}
    +\frac{\sigma^2L\,H_kK_q(k)}{\mu_\star T}
    +\sigma\sqrt{\frac{kL}{T}}
    +\mu_\star\delta\right].
\]
Because \(H_kK_q(k)\ge k\), the third term satisfies \(\sigma\sqrt{kL/T}\le\mu_0\), where
\(\mu_0:=\sigma\sqrt{H_kK_q(k)L/T}\). For the second term, note that it equals exactly
\(\mu_0^2/\mu_\star\), which is large only when \(\mu_\star\) is small; there, however, the
trivial bound \(R_{T,-q}\le\mu_\star\) takes over. Combining the two, we get
$\min\Bigl\{\mu_\star,\ \tfrac{\mu_0^2}{\mu_\star}\Bigr\}\le\mu_0$,
so the second and third terms together contribute at most \(O(\mu_0)\). The first and
fourth terms carry through unchanged. Setting $\delta=1/T$ gives
\[
  R_{T,-q}\le C_q\left[\sigma\sqrt{\frac{H_kK_q(k)L}{T}}+\frac{\mu_\star H_kK_q(k)}{T}\right].
\]


\subsection*{Paper Organization}

In \Cref{sec:prelims}, we collect the main technical ingredients used throughout
the paper. We begin with basic properties of $(-q)$-mean regret, including the
reductions that convert the nonlinear welfare objective into cumulative
inverse-penalty bounds. We then review the information-theoretic tools needed
for the minimax lower bound, and record the deterministic consequences of the
harmonic schedule that drives the exploration phase of \hare. With these
preliminaries established, \Cref{sec:lower-bounds} proves the lower bound for
the strongly fair regime, showing how the inverse-power objective amplifies the
cost of slow identification of the optimal arm. \Cref{sec:algo} presents the
pseudocode of \hare\ together with its main guarantee. The matching upper bound
for \hare\ is proved in \Cref{sec:upper-proof}, where we combine the reward
floor created during preparation with the safety of the subsequent UCB phase.
Finally, \Cref{sec:expt} presents our experimental evaluation and compares
\hare\ against uniform-exploration-based baselines.
\clearpage

\addtocontents{toc}{\protect\setcounter{tocdepth}{2}}

\section{Preliminaries}\label{sec:prelims}
We operate in the standard $k$-armed stochastic bandit setting over a horizon $T$. Let $m_t := \E[\mu_{I_t}]$ denote the ex-ante expected reward at round $t$, and let $\mustar := \max_{i \in [k]} \mu_i$. In this section, we establish some prerequisites for our analysis.\Cref{subsec:regret} introduces the welfare objective and the specific reductions we use to bound it. \Cref{subsec:infothy} reviews the information-theoretic tools necessary for our minimax lower bounds. Finally, \Cref{subsec:harmony} defines the deterministic schedule driving our algorithmic upper bound.

\subsection{The $p$-Mean Regret}\label{subsec:regret}

Our objective is to evaluate collective welfare using the generalized $p$-mean functional \cite{krishna2025pmean, moulin2004fair} in the negative (fair) regime, which penalizes suboptimal assignments to early rounds. Parameterizing this regime by $q = -p > 0$, we define the $(-q)$-mean welfare of the ex-ante reward sequence as:
\[
    \Pow_{-q}(m_1, \dots, m_T) := \left(\frac{1}{T} \sum_{t=1}^T m_t^{-q} \right)^{-1/q}.
\]
The $(-q)$-mean regret of an algorithm $\Alg$ on instance $\nu$ is its welfare shortfall relative to the optimal fixed-arm policy:
\[
    R_{T,-q}(\Alg, \nu) := \mustar - \Pow_{-q}(m_1, \ldots, m_T).
\]

We begin with three properties. First, \Cref{lem:power-monotone} establishes monotonicity of the welfare function, which we use in our lower bound proof in \Cref{sec:lower-bounds}. 

\begin{lemma}[Power mean monotonicity]\label{lem:power-monotone}
Let $x_1,\ldots,x_T\ge0$. If $p\le r$, then
\[
    \Pow_p(x_1,\ldots,x_T)\le \Pow_r(x_1,\ldots,x_T),
\]
with the usual continuous interpretation at $p=0$.
\end{lemma}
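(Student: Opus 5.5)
The plan is to derive the inequality from convexity via Jensen's inequality, reducing everything to the comparison of two power means with exponents of the same sign. First I dispose of degenerate cases. If some $x_t=0$ and $p\le 0$, the convention gives $\Pow_p=0$, so the inequality is trivial. For $p>0$ and zero entries, the formulas remain continuous, so I would work with all $x_t>0$ and pass to the limit $x_t\downarrow 0$. The power means are continuous in the entries on $[0,\infty)^T$ for $p>0$, and for $p\le 0$ the limit is $0$ on the boundary, consistent with the convention. The case $p=0$ I would likewise treat by continuity: $\Pow_p\to\Pow_0$ as $p\to 0$, which is the standard limit $\frac{1}{p}\log\frac1T\sum x_t^p\to\frac1T\sum\log x_t$, obtained via L'Hôpital. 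So it suffices to prove the claim for $p<r$, both nonzero, with positive entries.

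For positive entries and nonzero $p<r$, set $y_t:=x_t^p$ and $\varphi(y):=y^{r/p}$. There are three sign patterns:
\begin{itemize}
\item If $0<p<r$, then $r/p>1$, so $\varphi$ is convex. Jensen gives $\frac1T\sum x_t^r\ge(\frac1T\sum x_t^p)^{r/p}$, and taking the increasing $1/r$-th power yields $\Pow_r\ge\Pow_p$.
\item If $p<r<0$, then $0<r/p<1$, so $\varphi$ is concave. Jensen gives $\frac1T\sum x_t^r\le(\frac1T\sum x_t^p)^{r/p}$. Since $1/r<0$, the map $u\mapsto u^{1/r}$ is decreasing, which reverses the inequality to $\Pow_r\ge\Pow_p$.
\item If $p<0<r$, then $r/p<0$, so $\varphi$ is convex on $(0,\infty)$. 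Jensen gives $\frac1T\sum x_t^r\ge(\frac1T\sum x_t^p)^{r/p}$, and the increasing $1/r$-th power again yields $\Pow_r\ge\Pow_p$. Alternatively, one can route this case through $\Pow_0$ using the AM–GM inequality applied to $x_t^r$ and to $x_t^{p}$.
\end{itemize}

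The main obstacle is bookkeeping rather than mathematics. I need to track the sign of $r/p$ and of the outer exponent so that the direction of each inequality flips correctly, and I need the boundary conventions for zero entries and $p=0$ to be compatible with the continuity argument. For the latter, I would check monotone convergence of each side as the perturbation vanishes.
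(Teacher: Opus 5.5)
Your proposal is correct and follows essentially the same route as the paper, which invokes the classical Jensen-based generalized mean inequality for positive coordinates and handles zero coordinates by continuity together with the convention that negative-order means vanish. Your three sign cases for $\varphi(y)=y^{r/p}$, the trivial handling of zero entries when $p\le 0$, and the limiting treatment of the exponent $0$ simply spell out the details the paper leaves implicit.
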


\begin{proof}
For positive coordinates, this is the classical generalized mean inequality derived via Jensen's inequality. For coordinates equal to zero, the result follows by continuity, under the convention that negative-order means evaluate to zero if any coordinate is zero.
\end{proof}


The remaining two results aid \Cref{sec:upper-proof}, where we establish upper bounds for \hare. \Cref{lem:inverse-conversion} first isolates the outer non-linearity of the regret into a cumulative \textit{inverse penalty}. \Cref{lem:lipschitz-inverse} then handles the remaining inner non-linearity, reducing this penalty to the standard linear pseudo-regret by exploiting its local Lipschitz continuity when rewards are bounded away from zero.

\begin{lemma}[First Linearization: Regret to Penalty]\label{lem:inverse-conversion}
Let $q>0$, $\mustar>0$, and recall $m_t\in(0,\mustar]$ for all $t$. Define
$
    A_T:=\sum_{t=1}^T\left[\left(\frac{\mustar}{m_t}\right)^q-1\right].
$
Then
\[
    R_{T,-q}(\Alg, \nu)
    \le
    \frac{\mustar}{q}\frac{A_T}{T}.
\]
\end{lemma}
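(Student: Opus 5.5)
The plan is to factor $\mustar$ out of the $(-q)$-mean and reduce the claim to a one-variable elementary inequality. Since each $m_t\in(0,\mustar]$, we can write $m_t^{-q}=\mustar^{-q}(\mustar/m_t)^q$. Hence
\[
\frac1T\sum_{t=1}^T m_t^{-q}=\mustar^{-q}\Bigl(1+\frac{A_T}{T}\Bigr),
\qquad\text{and so}\qquad
\Pow_{-q}(m_1,\dots,m_T)=\mustar\Bigl(1+\frac{A_T}{T}\Bigr)^{-1/q}.
\]
Each summand $(\mustar/m_t)^q-1$ is nonnegative because $m_t\le\mustar$, so $u:=A_T/T\ge 0$. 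The regret therefore equals
\[
R_{T,-q}(\Alg,\nu)=\mustar\bigl(1-(1+u)^{-1/q}\bigr).
\]

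It remains to show the elementary inequality $1-(1+u)^{-1/q}\le u/q$ for all $u\ge0$. Set $g(u):=u/q-1+(1+u)^{-1/q}$. Then $g(0)=0$, and
\[
g'(u)=\frac1q\Bigl(1-(1+u)^{-1/q-1}\Bigr)\ge0 \quad\text{for } u\ge 0,
\]
so $g\ge0$ on $[0,\infty)$. Equivalently, one can note that $x\mapsto(1+x)^{-1/q}$ is convex, so it lies above its tangent line $1-x/q$ at $x=0$. Multiplying the inequality by $\mustar$ gives $R_{T,-q}\le \frac{\mustar}{q}\cdot\frac{A_T}{T}$.

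There is no real obstacle. The only points needing care are the domain assumption $m_t>0$, which is stated in the lemma and makes $A_T$ finite, and the sign of $u$, which follows from $m_t\le\mustar$. If some $m_t=0$ were allowed, then $A_T=\infty$ and the bound would hold trivially, but the hypothesis excludes this case anyway.
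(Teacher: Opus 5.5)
Your proposal is correct and follows essentially the same route as the paper: factor $\mustar$ out to get $\Pow_{-q}=\mustar(1+A_T/T)^{-1/q}$, then apply $1-(1+u)^{-1/q}\le u/q$ for $u\ge 0$. The paper justifies this inequality by the same tangent-line argument for the convex map $u\mapsto(1+u)^{-1/q}$ that you give as your alternative; your derivative check of $g$ is an equivalent verification.
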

\begin{proof}
Factoring out $\mustar$ and substituting $A_T$,
\begin{align*}
    R_{T,-q}(\Alg, \nu) 
    &= \mustar - \Pow_{-q}(m_1, \ldots, m_T)\\
    &= \mustar \left[ 1 - \left( \frac{1}{T} \sum_{t=1}^T \left(\frac{\mustar}{m_t}\right)^q \right)^{-1/q} \right] \\
    &= \mustar \left[ 1 - \left( 1 + \frac{A_T}{T} \right)^{-1/q} \right] \\
    &\le \mustar \left[ 1 - \left( 1 - \frac{A_T}{qT} \right) \right]
    = \frac{\mustar}{q} \frac{A_T}{T},
\end{align*}
where the inequality uses $(1+u)^{-1/q} \ge 1 - u/q$ for $u \ge 0$, which holds because $u \mapsto (1+u)^{-1/q}$ is convex when $q > 0$, and $1-u/q$ is its tangent line at $u=0$.
\end{proof}


\begin{lemma}[Second Linearization: Penalty to Pseudo-Regret]\label{lem:lipschitz-inverse}
For every $q>0$, there exists $C_q>0$ such that for all $x\in[1/4,1]$,
\[
    x^{-q}-1\le C_q(1-x).
\]
Consequently, if $y\in[\mustar/4,\mustar]$, then
\[
    \left(\frac{\mustar}{y}\right)^q-1
    \le
    C_q\frac{\mustar-y}{\mustar}.
\]
\end{lemma}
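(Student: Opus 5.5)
The plan is to study the one-variable function $f(x):=x^{-q}-1$ on $[1/4,1]$ and bound it by a line through the endpoint $x=1$. Since $f(1)=0$, the claim is that the secant slope $f(x)/(1-x)$ is bounded on $[1/4,1)$. Because $q>0$, the map $x\mapsto x^{-q}$ is convex on $(0,\infty)$, so $f$ is convex on $[1/4,1]$. A convex function on an interval lies below the chord joining its endpoints. The chord from $(1/4,f(1/4))$ to $(1,0)$ is
\[
\ell(x)=\frac{f(1/4)}{3/4}(1-x)=\frac{4}{3}\bigl(4^q-1\bigr)(1-x),
\]
which gives $f(x)\le C_q(1-x)$ with the explicit choice $C_q:=\tfrac43(4^q-1)>0$. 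As a sanity check, the secant slope at $x\to1$ tends to $|f'(1)|=q$, and by convexity it is no larger than the full chord slope. This is consistent, since $\tfrac43(4^q-1)\ge q$ follows from $4^q-1\ge q\ln 4$.

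For the consequence, I would set $x:=y/\mustar$. The assumption $y\in[\mustar/4,\mustar]$ with $\mustar>0$ gives exactly $x\in[1/4,1]$. Substituting yields $(\mustar/y)^q-1=x^{-q}-1\le C_q(1-y/\mustar)=C_q(\mustar-y)/\mustar$.

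No step is a real obstacle. The only point needing care is choosing the chord bound rather than a tangent bound: because $f$ is convex, a tangent line at $x=1$ lies \emph{below} $f$. The required direction therefore has to come from the chord over the whole interval, which is where the lower endpoint $1/4$, and hence the $q$-dependence of $C_q$, enters.
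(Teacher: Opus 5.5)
Your proof is correct, and it takes a slightly different route from the paper's.

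The paper uses the Mean Value Theorem. It writes $x^{-q}-1 = q\,c^{-q-1}(1-x)$ for some $c\in(x,1)$, then bounds the derivative at its worst point $c>1/4$, which gives $C_q=q\,4^{q+1}$.

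You instead use convexity of $x\mapsto x^{-q}$ to place $f$ below its chord on $[1/4,1]$, which gives $C_q=\tfrac43(4^q-1)$.

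Your constant is optimal. Secant slopes of a convex function toward the fixed endpoint $1$ are monotone, so $f(x)/(1-x)$ is maximized at $x=1/4$, where your bound holds with equality. It is also always smaller than the paper's, by the factor $3q\,4^q/(4^q-1)$, which exceeds $2$ for every $q>0$. For example, at $q=1$ you get $4$ where the paper gets $16$.

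The MVT route does not need convexity. It works for any $C^1$ function whose derivative is bounded on the interval, which matches the paper's "local Lipschitz" framing. Since the constant is simply absorbed into $C_q$ in the Phase II penalty bound, either choice suffices for the paper's purposes.

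Your remark that a tangent line at $x=1$ would give the wrong direction is accurate. By contrast, the paper's first linearization lemma can use a tangent bound, because there it needs a lower bound on a convex function.
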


\begin{proof}
Let $f(x) = x^{-q}$. The bound holds for $x=1$. For $x \in [1/4, 1)$, the Mean Value Theorem guarantees some $c \in (x, 1)$ such that $f(1) - f(x) = f'(c)(1 - x)$. As $f(1)=1$, we get
\begin{align*}
    x^{-q} - 1 
    &= q c^{-q-1}(1 - x) \\
    &< q 4^{q+1}(1 - x),
\end{align*}
where the inequality follows because $c \mapsto c^{-q-1}$ is strictly decreasing and $c > x \ge 1/4$. Setting $C_q := q 4^{q+1}$ establishes the primary inequality. 

The consequence follows by substituting $x = y/\mustar$ for any $y \in [\mustar/4, \mustar]$.
\end{proof}

\subsection{Useful Results From Information Theory}\label{subsec:infothy}

To establish the fundamental limits of \textit{fair} exploration, we rely on standard information-theoretic quantities. For two probability distributions $P$ and $Q$ defined on a common measurable space, denote the  Kullback-Leibler (KL) divergence by $\KL(P \,\|\, Q)$. For Bernoulli distributions with parameters $p, q \in (0, 1)$, we denote the binary relative entropy by 
\[ 
    \kl(p \,\|\, q) := p \log\left(\frac{p}{q}\right) + (1-p) \log\left(\frac{1-p}{1-q}\right). 
\]
For our lower bound analysis, we require two structural properties of mutual information. The first allows us to upper-bound the conditional mutual information with respect to an arbitrary reference distribution. This is crucial for isolating the information gain exclusively to rounds where the optimal arm is pulled.

\vspace{0.5em}
\begin{lemma}\label{lem:bound-mi}
Let $X$ and $Y$ be random variables, and let $P_{Y|X}$ denote the conditional distribution of $Y$ given $X$. For any fixed reference distribution $Q$ over the support of $Y$, the mutual information satisfies:
\[
    I(X; Y) \le \E_X \big[ \KL(P_{Y \mid X} \,\|\, Q) \big].
\]
\end{lemma}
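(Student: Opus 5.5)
The plan is to prove the ``golden formula'' $I(X;Y)=\E_X[\KL(P_{Y\mid X}\,\|\,P_Y)]$ and then show that replacing the true marginal $P_Y$ by an arbitrary reference $Q$ can only increase the right-hand side, by exactly $\KL(P_Y\,\|\,Q)\ge 0$.

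\emph{Reduction to the case $P_Y\ll Q$.} If $\E_X[\KL(P_{Y\mid X}\,\|\,Q)]=\infty$ there is nothing to prove, so assume it is finite. Then $P_{Y\mid X=x}\ll Q$ for $P_X$-almost every $x$. Averaging over $x$ gives $P_Y=\E_X[P_{Y\mid X}]\ll Q$, so all densities below exist.

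\emph{Decomposition.} Write $p(y\mid x)=\frac{dP_{Y\mid X=x}}{dQ}(y)$ and $p(y)=\frac{dP_Y}{dQ}(y)=\E_X[p(y\mid X)]$. Mutual information is $I(X;Y)=\E_X[\KL(P_{Y\mid X}\,\|\,P_Y)]$, which is the definition $\KL(P_{XY}\,\|\,P_X\otimes P_Y)$ disintegrated over $X$. Since $\frac{dP_{Y\mid X=x}}{dP_Y}=p(y\mid x)/p(y)$, we have pointwise
\[
\log\frac{p(y\mid x)}{p(y)}=\log p(y\mid x)-\log p(y).
\]
Integrate against $P_{Y\mid X=x}$ and then average over $x$. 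By the tower property, $\E_X\E_{Y\mid X}[\log p(Y)]=\E_{Y\sim P_Y}[\log p(Y)]=\KL(P_Y\,\|\,Q)$. This yields
\[
\E_X\bigl[\KL(P_{Y\mid X}\,\|\,Q)\bigr]=I(X;Y)+\KL(P_Y\,\|\,Q).
\]
Gibbs' inequality gives $\KL(P_Y\,\|\,Q)\ge 0$, and the claim follows.

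\emph{Main obstacle.} The delicate step is justifying the split of the logarithm, since it can produce $\infty-\infty$. I would handle this by working with nonnegative integrands. Write $\log\frac{p(y\mid x)}{q}$ as $\log\frac{p(y\mid x)}{p(y)}+\log\frac{p(y)}{1}$, where $Q$ has density $1$ relative to itself. Finiteness of the left-hand side, together with the standard fact that the negative parts of $\log$-density ratios integrate finitely ($x\log x\ge -1/e$), lets Fubini/Tonelli apply. An alternative that avoids measure-theoretic care altogether is the Donsker--Varadhan or convexity argument: the map $Q\mapsto \E_X[\KL(P_{Y\mid X}\,\|\,Q)]$ is minimized at $Q=P_Y$, where it equals $I(X;Y)$. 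In the paper's Gaussian application all densities are explicit and bounded, so the interchange is routine there.
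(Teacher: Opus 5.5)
Your proposal is correct and is essentially the paper's proof. The paper likewise starts from $I(X;Y)=\E_X[\KL(P_{Y\mid X}\,\|\,P_Y)]$, splits the log-likelihood ratio against $Q$ through $P_Y$ to get $\E_X[\KL(P_{Y\mid X}\,\|\,Q)]=I(X;Y)+\KL(P_Y\,\|\,Q)$, and drops the nonnegative last term; your extra steps (reducing to $P_Y\ll Q$ when the right-hand side is finite, and using $x\log x\ge -1/e$ to control the negative parts and rule out $\infty-\infty$) are sound and make rigorous what the paper does only formally.
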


\begin{proof}
By definition, $I(X; Y) = \E_X [\KL(P_{Y \mid X} \,\|\, P_Y)]$, where $P_Y$ is the marginal distribution of $Y$. For any reference distribution $Q$, we can expand the divergence as $\KL(P_{Y \mid X} \,\|\, Q) = \KL(P_{Y \mid X} \,\|\, P_Y) + \E_{Y \mid X}[\log(P_Y(Y)/Q(Y))]$. Taking the expectation over $X$ yields $\E_X[\KL(P_{Y \mid X} \,\|\, Q)] = I(X; Y) + \KL(P_Y \,\|\, Q)$. Since KL divergence is non-negative, the bound follows.
\end{proof}

The second property establishes a lower bound on the mutual information required to identify a uniformly drawn hypothesis. We express this via the function $d_k : [1/k, 1] \to \mathbb{R}_+$, defined as
\[
    d_k(s) := s \log(ks) + (1-s) \log\left(\frac{k(1-s)}{k-1}\right) = \kl\left(s \,\middle\|\, \frac{1}{k}\right).
\]
Note that $d_k(s)$ is strictly increasing on $[1/k, 1]$.

\begin{lemma}[Mutual Information Fano Converse]\label{lem:fano-converse}
Let $J$ be a random variable drawn uniformly from $[k]$. Let $\hat{J}$ be any estimator of $J$, and define the success probability as $s := \Pbb(\hat{J} = J)$. Then the mutual information satisfies:
\[
    I(J; \hat{J}) \ge d_k(s).
\]
\end{lemma}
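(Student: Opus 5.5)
The plan is to use the data-processing inequality for KL divergence, applied to the joint law of $(J,\hat J)$ compared with the product of its marginals. By definition,
\[
I(J;\hat J)=\KL\bigl(P_{J,\hat J}\,\big\|\,P_J\otimes P_{\hat J}\bigr).
\]
We push both measures forward through the deterministic map $(j,\hat j)\mapsto \1\{j=\hat j\}$. This map can only decrease KL divergence, so
\[
I(J;\hat J)\ge \KL\bigl(\Bern(s)\,\|\,\Bern(s')\bigr),
\]
where $s'$ denotes the success probability under the product measure.

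The first step is to compute $s$ and $s'$.

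\begin{itemize}
\item Under $P_{J,\hat J}$, the event $\{J=\hat J\}$ has probability $s$ by definition.
\item Under $P_J\otimes P_{\hat J}$, the event has probability $\sum_{j}\Pbb(J=j)\Pbb(\hat J=j)$.
\item Since $J$ is uniform on $[k]$, this equals $\frac1k\sum_j\Pbb(\hat J=j)=\frac1k$, whatever the marginal of $\hat J$.
\end{itemize}

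This gives $I(J;\hat J)\ge \kl(s\,\|\,1/k)$. If $\hat J$ may take values outside $[k]$, those values simply never match $J$, and the computation is unchanged.

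The second step is to match this to $d_k(s)$. By the identity stated right before the lemma, $\kl(s\,\|\,1/k)=d_k(s)$ for $s\in[1/k,1]$, with the usual convention $0\log 0=0$ at the endpoint $s=1$. For $s<1/k$ the function $d_k$ is not defined in the paper; I would remark that the bound then holds trivially with the $\kl$ form, since $\kl\ge0$ and $I\ge 0$. This matches how the paper uses the truncation $r_t=\max\{s_t,1/k\}$.

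I expect no serious obstacle. The only care needed is to justify the data-processing inequality for KL under a measurable map, which is standard, possibly via the log-sum inequality on the two-cell partition $\{J=\hat J\}$ and $\{J\neq\hat J\}$. The other point to check is that independence under the product measure gives exactly $1/k$, which uses uniformity of $J$ and nothing about $\hat J$.
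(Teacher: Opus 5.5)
Your argument is correct, and it takes a different route from the paper. The paper proves the entropy form of Fano. With $E=\1\{J\neq\hat J\}$, it expands $H(E,J\mid\hat J)$ in two ways to get $H(J\mid\hat J)\le H_b(s)+(1-s)\log(k-1)$. It then subtracts this from $H(J)=\log k$ and regroups the terms into $d_k(s)$.

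You instead write $I(J;\hat J)=\KL(P_{J,\hat J}\,\|\,P_J\otimes P_{\hat J})$ and apply data processing to the single test $\1\{J=\hat J\}$. This lands on $\kl(s\,\|\,1/k)$ with no algebra. Your route has three advantages:
\begin{itemize}
\item it is a one-line reduction to a two-point divergence;
\item it extends to non-uniform priors, with baseline $\max_j\Pbb(J=j)$;
\item it explains the baseline $1/k$ as the success probability of an estimator independent of $J$, which is exactly the role of $s_0$ in \Cref{lem:info-cost}.
\end{itemize}
The paper's route uses only elementary discrete entropy identities and yields the classical Fano inequality along the way. Its step $H(J\mid E=1,\hat J)\le\log(k-1)$, however, silently assumes $\hat J\in[k]$.

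Two side remarks in your proposal are wrong as written. Neither affects the lemma where the paper uses it, namely $\hat J=I_t\in[k]$ and $s\ge 1/k$.

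First, if $\hat J$ can take values outside $[k]$, the computation is \emph{not} unchanged. The product-measure match probability becomes $s':=\frac1k\Pbb(\hat J\in[k])\le\frac1k$. To pass from $\kl(s\,\|\,s')$ to $\kl(s\,\|\,1/k)$ when $s\ge 1/k$, you need the extra fact that $\kl(s\,\|\,\cdot)$ is nonincreasing on $(0,s]$. For $s<1/k$ the $\kl$-form bound can then genuinely fail. Take $\hat J$ independent of $J$, equal to an out-of-range symbol with probability $1/2$ and to a uniform element of $[k]$ otherwise. Then $I(J;\hat J)=0$, while $s=1/(2k)$ and $\kl(s\,\|\,1/k)>0$.

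Second, for $s<1/k$ the justification ``$\kl\ge0$ and $I\ge0$'' does not order the two sides. The correct statements are these. When $\hat J\in[k]$, your first step already gives $I(J;\hat J)\ge\kl(s\,\|\,1/k)$ for every $s\in[0,1]$. For the truncated quantity $d_k(\max\{s,1/k\})$ used in \Cref{lem:success-info}, the case $s<1/k$ is trivial because $d_k(1/k)=0$.
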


\begin{proof}
Let $E = \1\{J \neq \hat{J}\}$ denote the binary error indicator, where $\Pbb(E = 0) = s$. We bound the conditional entropy $H(J \mid \hat{J})$ by expanding the joint conditional entropy $H(E, J \mid \hat{J})$ via the chain rule in two ways.

First, $H(E, J \mid \hat{J}) = H(J \mid \hat{J}) + H(E \mid J, \hat{J})$. As $E$ is deterministically computed from $J$ and $\hat{J}$, the second term vanishes, yielding $H(E, J \mid \hat{J}) = H(J \mid \hat{J})$.

Second, expanding with respect to $E$ yields $H(E, J \mid \hat{J}) = H(E \mid \hat{J}) + H(J \mid E, \hat{J})$. We bound the components on the right-hand side individually. Conditioning reduces entropy, so 
$$H(E \mid \hat{J}) \le H(E) = H_b(s) := -s \log s - (1-s) \log(1-s),$$ 
where $H_b$ is the binary entropy function.

For the remaining term, we condition on the realization of $E$:
\begin{enumerate}
    \item If $E=0$, $J$ is exactly $\hat{J}$, yielding zero conditional entropy.
    \item If $E=1$, $J$ must take one of the remaining $k-1$ values, which has entropy at most $\log(k-1)$.
\end{enumerate}
Averaging over the indicator $E$ yields $H(J \mid E, \hat{J}) \le s(0) + (1-s)\log(k-1)$. Equating the two expansions bounds the conditional entropy, recovering Fano's inequality:
\[
    H(J \mid \hat{J}) \le H_b(s) + (1-s)\log(k-1).
\]

By definition, the mutual information is $I(J; \hat{J}) = H(J) - H(J \mid \hat{J})$. Since $J$ is uniformly distributed over $[k]$, its prior entropy is $H(J) = \log k$. We substitute our bounds and group the terms as follows:
\begin{align*}
    I(J; \hat{J}) 
    &\ge \log k - \Big[ H_b(s) + (1-s)\log(k-1) \Big] \\
    &= \big(s \log k + (1-s) \log k\big) + s \log s + (1-s) \log(1-s) - (1-s) \log(k-1)  \\
    &= s(\log k + \log s) + (1-s) \big[ \log k + \log(1-s) - \log(k-1) \big] \\
    &= s \log(k s) + (1-s) \log\left(\frac{k(1-s)}{k-1}\right) \\
    &= d_k(s).
\end{align*}
\end{proof}

\subsection{Harmonic Schedule}\label{subsec:harmony}

To coordinate exploration, our algorithm relies on a deterministic, rank-based sequence which we call the \textit{harmonic schedule}. For a fixed number of arms $k$ and any integer $n \ge 1$, define the truncated divisor set $\mathcal{D}_n := \{d \in [k] : d \mid n\}$. 

The harmonic schedule, denoted $(r_b)_{b \ge 1}$, is constructed by concatenating the sets $\mathcal{D}_1, \mathcal{D}_2, \dots$ in ascending order of $n$. For example, setting $k=5$, the sequence discards any divisors strictly greater than $k$, yielding:

\vspace{0.5em}
\begin{minipage}{\linewidth}
\centering
\resizebox{\textwidth}{!}{
$
    \def\vp{\vphantom{\xcancel{10}}} 
    \begin{array}{r c c c c c c c c c c l}
        r_b : & 
        \underbracket{\; 1, \; \vp} & 
        \underbracket{\;1,\; 2, \vp} & 
        \underbracket{\;1,\; 3, \vp} & 
        \underbracket{\;1,\; 2,\; 4, \vp} & 
        \underbracket{\;1,\; 5, \vp} & 
        \underbracket{\;1,\; 2,\; 3,\; \textcolor{red}{\xcancel{\textcolor{black}{6,}}} \;\vp} & 
        \underbracket{\;1,\; \textcolor{red}{\xcancel{\textcolor{black}{7,}}} \;\vp} & 
        \underbracket{\;1,\; 2,\; 4,\; \textcolor{red}{\xcancel{\textcolor{black}{8,}}} \;\vp} & 
        \underbracket{\;1,\; 3,\; \textcolor{red}{\xcancel{\textcolor{black}{9,}}} \;\vp} & 
        \underbracket{\;1,\; 2,\; 5,\; \textcolor{red}{\xcancel{\textcolor{black}{10,}}} \;\vp} & 
        \dots \\[1.5ex]
        n : & 1 & 2 & 3 & 4 & 5 & 6 & 7 & 8 & 9 & 10 &\dots
    \end{array}
$
}
\end{minipage}
\vspace{1em}

The primary advantage of this schedule is the following lemma, which allows us to track the pull count of each rank.

\begin{lemma}[Prefix Balance Lemma]\label{lem:prefix-balance}
For any rank $r \in [k]$ and block $b \ge 0$, denote the number of scheduled pulls allocated to rank $r$ up to and including block $b$, by $N_r^{\rm sch}(b) := \#\{s \le b : r_s = r\}$. Then, 
\begin{equation*}
    N_r^{\rm sch}(b)
    \ge
    \frac{b}{rH_k} - 2,
\end{equation*}
where $H_k := \sum_{r=1}^k 1/r$.
\end{lemma}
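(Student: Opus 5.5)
We prove the Prefix Balance Lemma by counting integers. The schedule is built by concatenating the divisor sets $\mathcal{D}_1,\mathcal{D}_2,\dots$. So the first $b$ blocks consist of the complete sets $\mathcal{D}_1,\dots,\mathcal{D}_{n}$ for some $n\ge 0$, followed by a (possibly empty) proper prefix of $\mathcal{D}_{n+1}$. Here $n$ is the largest integer with $\sum_{m=1}^{n}|\mathcal{D}_m|\le b$. Rank $r$ occurs exactly once in $\mathcal{D}_m$ when $r\mid m$ and never otherwise. Hence
\[
  N_r^{\rm sch}(b)\ \ge\ \#\{m\le n: r\mid m\}=\lfloor n/r\rfloor\ \ge\ \frac{n}{r}-1.
\]
It therefore suffices to show that $n$ is not much smaller than $b/H_k$.

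For this we bound the total length of the sets in the prefix. Counting pairs $(d,m)$ with $d\in[k]$, $d\mid m$ and $m\le N$ gives
\[
  \sum_{m=1}^{N}|\mathcal{D}_m|=\sum_{d=1}^{k}\lfloor N/d\rfloor\le N H_k .
\]
By maximality of $n$, the blocks of $\mathcal{D}_1,\dots,\mathcal{D}_{n+1}$ together exceed $b$. Thus $b<\sum_{m=1}^{n+1}|\mathcal{D}_m|\le (n+1)H_k$, i.e.\ $n> b/H_k-1$. Substituting,
\[
  N_r^{\rm sch}(b)\ \ge\ \frac{n}{r}-1\ >\ \frac{b}{rH_k}-\frac1r-1\ \ge\ \frac{b}{rH_k}-2,
\]
using $r\ge1$. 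The edge case $b=0$ is trivial, since the right-hand side is then negative.

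The only delicate step is handling the partial last set $\mathcal{D}_{n+1}$. The fix is to define $n$ as the number of \emph{complete} sets and to use the strict inequality $b<\sum_{m\le n+1}|\mathcal{D}_m|$. This costs one extra unit of $n$, which accounts for the $1/r$ in the slack; the floor accounts for the other unit. Together they give the constant $2$.
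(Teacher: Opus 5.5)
Your proof is correct and follows essentially the same route as the paper: both count the complete divisor sets contained in the first $b$ blocks, bound their total length via $\sum_{d\le k}\lfloor N/d\rfloor\le NH_k$, and lose one unit to the partial last set and one unit to the floor. The paper phrases this through the index $n_b$ of the divisor set containing block $b$, giving $b\le n_bH_k$ and $N_r^{\rm sch}(b)\ge\lfloor (n_b-1)/r\rfloor$; here $n_b$ plays the role of your $n+1$.
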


\begin{proof}
Based on our harmonic schedule, let block $b$ be associated with integer $n_b$ (recall \cref{subsec:overview-algo}). Comparing index $b$ against that of the last block mapped to $n_b$, we have
\[
b \leq \sum_{n=1}^{n_b} |\mathcal{D}_n|
= \sum_{r=1}^k \left\lfloor \frac{n_b}{r} \right\rfloor
\leq n_b \sum_{r=1}^k \frac{1}{r}
= n_b H_k
\]
and hence $n_b \ge b / H_k$.

For any rank $r \in [k]$, its occurrences in the schedule up to block $b$ must at least include every occurrence within the divisor sets $\mathcal{D}_1, \dots, \mathcal{D}_{n_b - 1}$ strictly prior to $n_b$. Therefore,
\[
N_r^{\rm sch}(b) 
\geq \sum_{n=1}^{n_b - 1} \mathbbm{1}\{r \in \mathcal{D}_n\} 
= \sum_{n=1}^{n_b - 1} \mathbbm{1}\{r \mid n\} 
= \left\lfloor \frac{n_b - 1}{r} \right\rfloor
\geq \frac{n_b}{r} - 2
\geq \frac{b}{r H_k} - 2.
\]
\end{proof}

We also preemptively record here the asymptotic behavior of a certain \textit{harmonic penalty sum} induced by the schedule (used later in \Cref{sec:upper-proof}), which governs the final regret regime.

\begin{fact}[Asymptotics of the Harmonic Penalty]\label{fact:harmonic-penalty}
For any $q > 0$ and $k \ge 1$, define the harmonic penalty sum as $K_q(k) := \sum_{r=1}^k (k/r)^q$. Then:
\[
    K_q(k) = 
    \begin{cases}
        \Theta_q(k), & 0 < q < 1, \\
        \Theta(k \log k), & q = 1, \\
        \Theta_q(k^q), & q > 1.
    \end{cases}
\]
\end{fact}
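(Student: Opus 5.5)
The plan is to compare $K_q(k)=k^q\sum_{r=1}^k r^{-q}$ with the generalized harmonic sum $\zeta_k(q):=\sum_{r=1}^k r^{-q}$. Throughout, we use the standard integral comparison for the decreasing function $x\mapsto x^{-q}$:
\[
\int_1^{k+1}x^{-q}\,dx\;\le\;\zeta_k(q)\;\le\;1+\int_1^k x^{-q}\,dx .
\]
The factor $k^q$ is then multiplied back in, and the three regimes are handled separately.

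For $q>1$, the lower bound is immediate from the $r=1$ term, giving $K_q(k)\ge k^q$. For the upper bound, $\zeta_k(q)\le 1+\int_1^\infty x^{-q}dx=q/(q-1)$, so $K_q(k)\le \tfrac{q}{q-1}k^q$. The constant blows up as $q\downarrow1$, which is consistent with the subscript $q$ in $\Theta_q$.

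For $q=1$, we have $\zeta_k(1)=H_k\in[\log(k+1),\,1+\log k]$. Hence $K_1(k)=kH_k=\Theta(k\log k)$ with absolute constants, valid for $k\ge2$; the case $k=1$ is trivial under the usual convention $\log k\to\max(1,\log k)$.

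For $0<q<1$, the upper bound comes from $\zeta_k(q)\le 1+\frac{k^{1-q}-1}{1-q}\le \frac{k^{1-q}}{1-q}$, which gives $K_q(k)\le k/(1-q)$. For the lower bound, each term satisfies $r^{-q}\ge k^{-q}$, so $\zeta_k(q)\ge k^{1-q}$ and $K_q(k)\ge k$.

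The only mildly delicate points are bookkeeping ones. First, the $q$-dependent constants degenerate at $q=1$ from both sides, so $\Theta_q$ has to be read with $q$ fixed. Second, small-$k$ cases such as $k=1$ at $q=1$ must be absorbed into the constants. Neither of these is a genuine obstacle.
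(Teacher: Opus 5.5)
Your proof is correct and follows the paper's argument: factor out $k^q$, sandwich $\sum_{r\le k} r^{-q}$ using the integral comparison for $x\mapsto x^{-q}$, and read off the three regimes. The differences are minor. You take the lower bounds termwise ($r=1$ for $q>1$, and $r^{-q}\ge k^{-q}$ for $q<1$) rather than from $\int_1^{k+1}x^{-q}\,dx$. You also flag the degenerate case $k=1$, $q=1$, where $k\log k=0$ and the paper's statement, as written for all $k\ge 1$, implicitly needs the usual convention.
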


\begin{proof}
We can factor the sum as $K_q(k) = k^q \sum_{j=1}^k j^{-q}$. The sum $S_q(k) := \sum_{j=1}^k j^{-q}$ can be bounded by comparing it to the integral of the monotonically decreasing function $f(x) = x^{-q}$ over the continuous interval $[1, k]$:
\[
    \int_1^{k+1} x^{-q} \, dx \le S_q(k) \le 1 + \int_1^k x^{-q} \, dx.
\]
Evaluating the integrals partitions the sequence into three standard regimes:
\begin{enumerate}
    \item \textbf{Sub-linear ($0 < q < 1$):} The integral evaluates to $\frac{x^{1-q}}{1-q}$, yielding $S_q(k) = \Theta_q(k^{1-q})$. Multiplying by the leading $k^q$ factor gives $k^q \cdot \Theta_q(k^{1-q}) = \Theta_q(k)$.
    \item \textbf{Logarithmic ($q = 1$):} The integral evaluates to $\log x$, bounding the harmonic number as $\log(k+1) \le S_1(k) \le 1 + \log k$, yielding $S_1(k) = \Theta(\log k)$. Multiplying by $k^1$ yields $\Theta(k \log k)$.
    \item \textbf{Convergent ($q > 1$):} The improper integral $\int_1^\infty x^{-q} \, dx = \frac{1}{q-1}$ is finite, implying $S_q(k)$ converges to a constant dependent on $q$. Multiplying by $k^q$ gives $\Theta_q(k^q)$.
\end{enumerate}
Substituting these bounds establishes the claim.
\end{proof}
\section{Our Lower Bound and Proof}\label{sec:lower-bounds}

In this section, we establish the information-theoretic limits of strongly fair exploration. We formally restate our main lower bound theorem below, which highlights that the minimax regret inflates from $\sqrt{k/T}$ to $\sqrt{k^q/T}$.

\begin{theorem}[Main Lower Bound, Restated]\label{thm:lower-main-restated}
For any fairness parameter $q > 0$, number of arms $k \ge 8$, and time horizon $T \ge 1$, the minimax $(-q)$-mean regret of any algorithm $\Alg$ is bounded as
\[
    \mregret_{T,-q}(k, \sigma) := \inf_\Alg \sup_{\nu \in \Esg{k}{\sigma}} R_{T,-q}(\Alg, \nu) \ge c_q \sigma \sqrt{\frac{k^q}{T}},
\]
where $c_q > 0$ is a constant depending only on $q$. 
\end{theorem}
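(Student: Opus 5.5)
We follow the needle-in-haystack route sketched in \Cref{subsec:overview-lower}. Fix $\mu>0$ (to be chosen) and the Gaussian family $\mathcal V=\{\nu^{(1)},\dots,\nu^{(k)}\}\subseteq\Esg{k}{\sigma}$, where $\nu^{(j)}_j=\Gauss(\mu,\sigma^2)$ and all other arms are $\Gauss(0,\sigma^2)$. For an arbitrary algorithm, set $a_{j,t}=\Pbb_j(I_t=j)$, so that $m^{(j)}_t=\mu a_{j,t}$. Let $s_t=\frac1k\sum_j a_{j,t}$ and $r_t=\max\{s_t,1/k\}$. By convexity of $x\mapsto x^{-q}$ and averaging over $j$, some $j$ satisfies $\frac1T\sum_t a_{j,t}^{-q}\ge\frac1T\sum_t s_t^{-q}\ge\frac1T\sum_t r_t^{-q}$. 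The regret on $\nu^{(j)}$ is therefore at least $\mu\bigl[1-(1+\frac1T\sum_t(r_t^{-q}-1))^{-1/q}\bigr]$.

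The information-theoretic step establishes the two estimates for $\mathcal I_t=I(J;\Hist_{t-1})$, with $J$ uniform. First, $d_k(r_t)\le\mathcal I_t$: this follows from data processing ($I_t$ depends on $J$ only through $\Hist_{t-1}$) and \Cref{lem:fano-converse} applied with $\hat J=I_t$, plus the trivial case $s_t<1/k$. Second, $\mathcal I_{t+1}-\mathcal I_t\le\lambda s_t$ with $\lambda=\mu^2/(2\sigma^2)$: by the chain rule the increment is $I(J;X_t\mid \Hist_{t-1},I_t)$, and \Cref{lem:bound-mi} with reference $Q=\Gauss(0,\sigma^2)$ gives $\E[\lambda\1\{I_t=J\}]=\lambda s_t$.

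The main obstacle is the deterministic calculation converting these estimates into $\sum_t(r_t^{-q}-1)\ge c_q\min(T,k^q/\lambda)$. We will argue as follows. Let $t_1$ be the first round with $r_t\ge 4/k$. If no such round exists in $[T]$, then every $r_t<4/k$, so each term is at least $(k/4)^q-1\ge c_q k^q$ (using $k\ge8$). This gives a total of at least $c_qT\ge c_q\min(T,k^q/\lambda)$. Otherwise, $\mathcal I_{t_1}\ge d_k(4/k)$, and since $d_k(4/k)\ge d_k(2/k)+\int_{2/k}^{4/k}\log\frac{(k-1)s}{1-s}\,ds\gtrsim 1/k$, information of order $1/k$ must be gathered during rounds with $r_t<4/k$. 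Each such round contributes at most $4\lambda/k$ of information. Hence at least $\Omega(1/\lambda)$ rounds satisfy $r_t<4/k$, each costing $\ge c_qk^q$, so the total is at least $c_q k^q/\lambda$. (One must check $d_k(4/k)=\Omega(1/k)$ directly, e.g.\ $d_k(4/k)\ge\kl(4/k\|1/k)\ge(4/k)\log 4-3/k\cdot$const; we will verify the constant with $\kl(p\|q)\ge p\log(p/q)+q-p$.)

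Writing $A=\frac1T\sum_t(r_t^{-q}-1)\ge c_q\min(1,k^q/(\lambda T))$, and using that $1-(1+u)^{-1/q}\ge c'_q\min(u,1)$, the regret is at least $c_q\mu\min(1,2\sigma^2k^q/(\mu^2T))$. Choosing $\mu=\sigma\sqrt{k^q/T}$ makes both terms of order one, yielding $c_q\sigma\sqrt{k^q/T}$. This $\mu$ is a valid nonnegative mean, so the instances lie in $\Esg{k}{\sigma}$, which completes the proof.
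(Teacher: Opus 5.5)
Your proposal is correct. It coincides with the paper everywhere except the deterministic cost step: the same Gaussian needle-in-haystack family, the Jensen/pigeonhole reduction to $\frac1T\sum_t s_t^{-q}$, the Fano estimate $d_k(r_t)\le\mathcal I_t$, the one-step bound $\mathcal I_{t+1}-\mathcal I_t\le\lambda s_t$, and the choice $\mu=\sigma\sqrt{k^q/T}$.

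The difference is in how you turn these two estimates into $\sum_t(r_t^{-q}-1)\ge c_q\min\{T,k^q/\lambda\}$. The paper stops at the first time $\mathcal I_t$ reaches $d_k(1/2)$. It bounds $\sum_{t<\tau} r_t^{-q}$ below by the left-endpoint Riemann sum $\frac1\lambda\sum_t f(\mathcal I_t)\delta_t$, with $f(u)=(d_k^{-1}(u))^{-q-1}$, and compares this to $\frac1\lambda\int_{1/k}^{1/2}s^{-q-1}d_k'(s)\,ds$. Only then does it restrict to $[2/k,4/k]$, and it packages the argument as the general information-cost inequality (Lemma~\ref{lem:info-cost}).

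You use a single threshold instead. Before the first round with $r_t\ge 4/k$, every round costs at least $(k/4)^q-1\ge 4^{-q}(1-2^{-q})k^q$ and gains at most $4\lambda/k$ nats. Meanwhile $d_k(4/k)\ge(4\log 4-3)/k$; your bound $\kl(p\,\|\,q)\ge p\log(p/q)+q-p$ gives exactly this, with the ``const'' equal to $1$, and $d_k(4/k)=\kl(4/k\,\|\,1/k)$ holds with equality. So at least $(4\log 4-3)/(4\lambda)$ such rounds must occur.

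This is a one-point discretization of the paper's integral. It is shorter, avoids the Riemann-sum and change-of-variables bookkeeping, and gives the slightly stronger bound $c_q k^q\min\{T,1/\lambda\}$. Nothing is lost here, because for fixed $q>0$ the integrand $s^{-q-1}d_k'(s)$ is concentrated at $s=\Theta(1/k)$; the paper itself only uses $[2/k,4/k]$. The integral form is what buys generality. For other triples $(c,\phi,\psi)$ whose weighted density $c\phi'/\psi$ is spread across many scales, a single threshold can lose logarithmic factors that the integral keeps.
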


We first establish the classical baseline for $p \le 1$.

\begin{proposition}[Average Regret Baseline]\label{prop:avg-lb-main}
For any power parameter $p \le 1$, number of arms $k \ge 2$, and horizon $T \ge 1$:
\[
    \mregret_{T,p}(k, \sigma) \ge C \sigma \sqrt{\frac{k}{T}},
\]
where $C > 0$ is a universal constant.
\end{proposition}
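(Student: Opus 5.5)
The plan is to reduce to the case $p=1$ and then prove the classical average-regret lower bound on the Gaussian class. For the reduction: for every algorithm and instance, \Cref{lem:power-monotone} gives $\Pow_p(m_1,\dots,m_T)\le \Pow_1(m_1,\dots,m_T)$ whenever $p\le 1$. Hence $R_{T,p}(\Alg,\nu)\ge R_{T,\textsf{avg}}(\Alg,\nu)$. The ex-ante rewards are nonnegative on $\Esg{k}{\sigma}$, and for negative $p$ the zero-coordinate convention only makes $\Pow_p$ smaller. Taking $\sup_\nu$ and then $\inf_\Alg$ shows it suffices to prove $\mregret_{T,1}(k,\sigma)\ge C\sigma\sqrt{k/T}$.

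For $p=1$ I would use the standard needle-in-haystack argument with a reference instance, as in \cite{bubeck2012regret}. Fix $\Delta>0$ to be chosen later. Let $\nu^{(0)}$ have all arms distributed as $\mathcal N(0,\sigma^2)$, and let $\nu^{(j)}$ agree with $\nu^{(0)}$ except that arm $j$ is $\mathcal N(\Delta,\sigma^2)$. All means are nonnegative, so these instances lie in $\Esg{k}{\sigma}$. Under $\nu^{(j)}$ the average regret equals $\Delta\bigl(1-\E_j[N_j(T)]/T\bigr)$. I then average over $j$ uniform on $[k]$. Since $\sum_j \E_0[N_j(T)]=T$, the average of $\E_0[N_j(T)]$ is $T/k$. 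For each $j$, Pinsker's inequality together with the bandit divergence decomposition gives
\[
\E_j[N_j(T)]-\E_0[N_j(T)]\le T\sqrt{\tfrac12 \KL(\Pbb_0\,\|\,\Pbb_j)} = T\sqrt{\tfrac12\,\E_0[N_j(T)]\,\tfrac{\Delta^2}{2\sigma^2}}.
\]
This uses that $N_j(T)/T\in[0,1]$ and that only pulls of arm $j$ differ between $\nu^{(0)}$ and $\nu^{(j)}$. Averaging over $j$ and applying Jensen (concavity of $\sqrt{\cdot}$) yields
\[
\frac1k\sum_j \E_j[N_j(T)]\le \frac Tk+\frac{T\Delta}{2\sigma}\sqrt{\frac{T}{k}}.
\]

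Therefore some $j$ satisfies
\[
R_{T,1}(\Alg,\nu^{(j)})\ge \Delta\Bigl(1-\tfrac1k-\tfrac{\Delta}{2\sigma}\sqrt{T/k}\Bigr).
\]
Choosing $\Delta=\tfrac{\sigma}{2}\sqrt{k/T}$ makes the subtracted terms at most $1/2+1/4$ for $k\ge2$, which gives regret at least $\tfrac{\sigma}{8}\sqrt{k/T}$. This holds for every algorithm, including randomized ones, because Pinsker's inequality applies to the full law of history and internal randomness.

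I expect the only delicate step to be the divergence decomposition $\KL(\Pbb_0\,\|\,\Pbb_j)=\E_0[N_j(T)]\,\KL(\mathcal N(0,\sigma^2)\,\|\,\mathcal N(\Delta,\sigma^2))$. It follows from the chain rule over rounds, in the same spirit as \Cref{lem:bound-mi}, and it is where the Gaussian KL $\Delta^2/(2\sigma^2)$ enters. If a complete derivation is not required, I would instead simply cite the classical result from \cite{bubeck2012regret} for $p=1$ and state the monotonicity reduction.
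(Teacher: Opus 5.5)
Your proposal is correct and follows the paper's route: reduce $p\le 1$ to average regret via power-mean monotonicity (\Cref{lem:power-monotone}), then apply the classical Gaussian needle-in-haystack lower bound of \cite{bubeck2012regret}. The paper only cites that second step, while you carry it out explicitly (Pinsker plus the divergence decomposition against the all-zero reference instance). Your computation is sound and gives $C=1/8$ for all $k\ge 2$ and $T\ge 1$, with no $T\ge k$ restriction because the means in $\Esg{k}{\sigma}$ are unbounded above.
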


\begin{proof}
For $p \le 1$, power mean monotonicity (\Cref{lem:power-monotone}) bounds the $p$-mean by the arithmetic mean. Thus, the generalized regret reduces to the usual average regret:
\[
    R_{T,p}(\Alg, \nu) \ge \mustar - \frac{1}{T}\sum_{t=1}^T m_t^\nu.
\]
To lower bound the minimax over the sub-gaussian class $\Esg{k}{\sigma}$, we restrict the supremum to the subset of \textit{Gaussian} instances with variance $\sigma^2$. As this Gaussian family is inherently $\sigma$-sub-Gaussian, the standard information-theoretic reduction over this hard subset \cite{bubeck2012regret} directly establishes the $\Omega(\sigma\sqrt{k/T})$ baseline.
\end{proof}

For our main proof, we focus on $k$ difficult ``one-good-arm'' instances and average the regret over them. Fix $q>0$, $k\ge8$ and $\mu>0$. For each $j\in[k]$, define the underlying instance $\nu^{(j)}$ by:
\[
    \nu_i^{(j)} =
    \begin{cases}
    \Gauss(\mu,\sigma^2), & i=j,\\
    \Gauss(0,\sigma^2), & i\neq j.
    \end{cases}
\]
Let $\Pbb_j$ denote the probability measure under instance $\nu^{(j)}$. Let $J \sim \Unif([k])$ 
index a uniformly random instance, and define $\Pbb$ as the law averaged over this random choice,
\[
\Pbb(\cdot) := \frac{1}{k}\sum_{j=1}^k \Pbb_j(\cdot)
\]
under which $J$ is first drawn and then instance $\nu^{(J)}$ is executed.
We define the algorithm's instance-specific success probability $a_{j,t}$ and average success probability $s_t$ at round $t$ as:
\[
    a_{j,t} := \Pbb_j(I_t=j),
    \qquad
    s_t := \Pbb(I_t=J) = \frac{1}{k}\sum_{j=1}^k a_{j,t}.
\]

\begin{lemma}[A Fixed-Instance Reduction]\label{lem:bayes-reduction}
There exists an instance $j\in[k]$ such that the time-averaged penalty is bounded below by the ensemble averaged penalty:
\[
    \frac{1}{T}\sum_{t=1}^T a_{j,t}^{-q} \ge \frac{1}{T}\sum_{t=1}^T s_t^{-q},
\]
where we define $0^{-q} := +\infty$.
\end{lemma}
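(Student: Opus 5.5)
The plan is to combine Jensen's inequality applied pointwise in time with an averaging (pigeonhole) argument over the $k$ instances. The map $\varphi(x)=x^{-q}$ is convex and nonincreasing on $(0,1]$ for $q>0$. Extending it by $\varphi(0)=+\infty$ keeps it convex on $[0,1]$ as an extended-real function.

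\textbf{Step 1: Jensen at each round.} Fix a round $t$. Since $s_t=\frac1k\sum_j a_{j,t}$, Jensen's inequality for the convex $\varphi$ gives
\[
\frac1k\sum_{j=1}^k a_{j,t}^{-q}\ \ge\ s_t^{-q}.
\]
There are two degenerate cases to handle.
\begin{itemize}
\item If some $a_{j,t}=0$, the left side is $+\infty$ and the inequality is trivial.
\item If all $a_{j,t}>0$, then $s_t>0$ and we are in the classical finite case.
\end{itemize}
So the inequality holds in $[0,+\infty]$ for every $t$.

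\textbf{Step 2: Sum over time and exchange sums.} Summing over $t\in[T]$ and dividing by $T$ gives
\[
\frac1k\sum_{j=1}^k\Big(\frac1T\sum_{t=1}^T a_{j,t}^{-q}\Big)\ \ge\ \frac1T\sum_{t=1}^T s_t^{-q}.
\]
The exchange of the two finite sums is legitimate because all terms are nonnegative. This remains true even when some terms equal $+\infty$.

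\textbf{Step 3: Pick the instance.} Define $F_j:=\frac1T\sum_t a_{j,t}^{-q}\in[0,+\infty]$. The left side of Step 2 is the average of $F_1,\dots,F_k$. An average of finitely many extended reals cannot exceed their maximum. So choosing $j\in\argmax_j F_j$ gives $F_j\ge\frac1k\sum_{j'}F_{j'}\ge\frac1T\sum_t s_t^{-q}$, which is the claim.

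The only delicate point is the infinite values. If some $F_j=+\infty$, pick that $j$ and the statement holds trivially. Otherwise every $a_{j,t}>0$, hence every $s_t>0$, and all quantities are finite, so the ordinary Jensen and averaging argument applies.
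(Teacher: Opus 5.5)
Your proposal is correct and follows the paper's proof essentially step for step. Both apply Jensen's inequality for the convex map $x\mapsto x^{-q}$ at each round, average over $t\in[T]$, and then use a pigeonhole (maximum-exceeds-average) argument over the $k$ instances, with your explicit handling of the $+\infty$ cases from zero success probabilities being a minor tidying of a point the paper leaves implicit.
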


\begin{proof}
Because the map $x \mapsto x^{-q}$ is strictly convex on $(0,\infty)$, Jensen's inequality implies that for any round $t$:
\[
    \frac{1}{k}\sum_{j=1}^k a_{j,t}^{-q} \ge \left(\frac{1}{k}\sum_{j=1}^k a_{j,t}\right)^{-q} = s_t^{-q}.
\]
Averaging both sides over all rounds $t \in [T]$ yields:
\[
    \frac{1}{k}\sum_{j=1}^k \left( \frac{1}{T}\sum_{t=1}^T a_{j,t}^{-q} \right) \ge \frac{1}{T}\sum_{t=1}^T s_t^{-q}.
\]
By the pigeonhole principle, the maximum over the $k$ instances must be at least as large as the ensemble average, meaning there must exist at least one index $j \in [k]$ satisfying the claim.
\end{proof}

To bound the success probability $s_t$, we track the mutual information collected by any algorithm on $J$ prior to round $t$, denoted as $\mathcal{I}_t := I(J; \mathcal{H}_{t-1})$.
Recall from our preliminaries (\Cref{subsec:infothy}) that the function $d_k(s) := \kl(s \,\|\, 1/k)$ represents the minimum information required to guess the optimal arm with success probability $s$. 
\begin{lemma}[Information Lower Bound on Success]\label{lem:success-info}
Let $r_t := \max\{s_t, 1/k\}$. Then 
\[
d_k(r_t) \le \mathcal{I}_t.
\]
\end{lemma}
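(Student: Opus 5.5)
The plan is to split according to whether $s_t$ lies below or above the random-guessing scale $1/k$. Both cases rest on the Fano converse (\Cref{lem:fano-converse}) and the data-processing inequality.

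\emph{Case $s_t<1/k$.} Here $r_t=1/k$. Since $d_k(1/k)=\kl(1/k\,\|\,1/k)=0$ and mutual information is nonnegative, we get $d_k(r_t)=0\le \mathcal I_t$, and this case is done.

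\emph{Case $s_t\ge 1/k$.} Now $r_t=s_t$, and I would treat the action $I_t$ as an estimator $\hat J:=I_t$ of the hidden index $J$. The key structural fact concerns the joint law $\Pbb$ of $(J,\Hist_{t-1},I_t)$: the algorithm chooses $I_t$ as a (possibly randomized) function of $\Hist_{t-1}$ and its own internal randomness. That internal randomness is independent of $J$, so $J\to\Hist_{t-1}\to I_t$ forms a Markov chain. To be careful, one can make the internal randomness explicit as an independent variable $U_t$. Then $I_t=f_t(\Hist_{t-1},U_t)$, and since $U_t$ is independent of $(J,\Hist_{t-1})$ we have $I(J;\Hist_{t-1},U_t)=I(J;\Hist_{t-1})$. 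If the history is already defined to include past internal randomness, the same reasoning applies to the fresh randomness at round $t$.

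The data-processing inequality then gives $I(J;I_t)\le I(J;\Hist_{t-1})=\mathcal I_t$. By definition, $\Pbb(I_t=J)=s_t$. Applying \Cref{lem:fano-converse} with $\hat J=I_t$ yields $I(J;I_t)\ge d_k(s_t)$. Chaining the two inequalities gives $d_k(r_t)=d_k(s_t)\le I(J;I_t)\le\mathcal I_t$.

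The main obstacle is bookkeeping rather than mathematics: one must justify the Markov chain property, i.e.\ that the randomness in $I_t$ beyond $\Hist_{t-1}$ carries no information about $J$. Making the independent seed explicit, as described above, settles this. A secondary point is that \Cref{lem:fano-converse} is stated with $d_k$ on $[1/k,1]$. This is exactly why the truncation $r_t$ is introduced, so the second case only ever evaluates $d_k$ inside its domain. The endpoint $s_t=1$ is harmless, since there $d_k(1)=\log k$ under the convention $0\log 0=0$.
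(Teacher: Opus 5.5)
Your proposal is correct and follows the paper's proof essentially verbatim. It uses the same case split on $s_t$ versus $1/k$, handles the trivial case via $d_k(1/k)=0$ and nonnegativity of mutual information, and handles the main case by chaining the data-processing inequality $I(J;I_t)\le\mathcal I_t$ with \Cref{lem:fano-converse}. Your explicit independent seed $U_t$ simply spells out the Markov-chain justification that the paper states in one line.
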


\begin{proof}
If $s_t < 1/k$, then $r_t = 1/k$. Because $d_k(1/k) = 0$ and mutual information is strictly non-negative, the bound $d_k(r_t) \le \mathcal{I}_t$ holds trivially.

If $s_t \ge 1/k$, then $r_t = s_t$. We can view the algorithm's action $I_t$ as an estimator of the optimal arm $J$. Because $I_t$ is generated using only the observed history $\mathcal{H}_{t-1}$ and independent randomization, the Data Processing Inequality implies $I(J; I_t) \le I(J; \mathcal{H}_{t-1}) = \mathcal{I}_t$. Finally, \Cref{lem:fano-converse} bounds the information from below as
\[
    \mathcal{I}_t \ge I(J; I_t) \ge d_k(s_t) = d_k(r_t).
\]
\end{proof}

\begin{lemma}[One-Step Information Gain]\label{lem:info-gain}
Let $\lambda := \KL(\Gauss(\mu, \sigma^2) \,\|\, \Gauss(0, \sigma^2)) = \frac{\mu^2}{2\sigma^2}$. Then for every round $t$, the information gain satisfies:
\[
    \mathcal{I}_{t+1} - \mathcal{I}_t \le \lambda s_t.
\]
\end{lemma}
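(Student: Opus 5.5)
We will follow the chain-rule sketch from the overview, making each step rigorous. By the chain rule for mutual information, with $\Hist_t = (\Hist_{t-1}, I_t, X_t)$,
\[
\mathcal I_{t+1}-\mathcal I_t = I(J;I_t \mid \Hist_{t-1}) + I(J; X_t \mid \Hist_{t-1}, I_t).
\]
The first term vanishes. The action $I_t$ is a function of $\Hist_{t-1}$ and the algorithm's internal randomization, and that randomization is independent of $J$ given $\Hist_{t-1}$. So $J$ and $I_t$ are conditionally independent given $\Hist_{t-1}$, which gives $I(J;I_t\mid\Hist_{t-1})=0$. The whole increment is therefore the conditional information carried by the new reward.

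To bound $I(J;X_t\mid \Hist_{t-1},I_t)$, we apply \Cref{lem:bound-mi} conditionally. Fix a realization $(h,i)$ of $(\Hist_{t-1},I_t)$ and take $X=J$, $Y=X_t$ under the conditional law given $(h,i)$. The reference distribution is $Q=\Gauss(0,\sigma^2)$; it may depend on the conditioning, but we do not need that freedom. Given $(h,i,J)$, the reward $X_t$ has law $\nu^{(J)}_i$: this is $\Gauss(\mu,\sigma^2)$ if $i=J$ and $\Gauss(0,\sigma^2)$ otherwise. Hence
\[
\KL\big(P_{X_t\mid h,i,J}\,\|\,Q\big)=\lambda\,\1\{i=J\}.
\]
\Cref{lem:bound-mi} then gives $I(J;X_t\mid \Hist_{t-1}=h,I_t=i)\le \lambda\,\Pbb(J=i\mid h,i)$. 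We average over $(h,i)$ under $\Pbb$ using the tower property, and obtain
\[
I(J;X_t\mid\Hist_{t-1},I_t)\le \lambda\,\Pbb(I_t=J)=\lambda s_t.
\]
Combining this with the vanishing first term proves the claim. The inequality $\lambda s_t\le\lambda r_t$ follows immediately, and $\lambda=\mu^2/(2\sigma^2)$ is the standard Gaussian KL formula.

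The main point of care is the conditional-independence step $I(J;I_t\mid\Hist_{t-1})=0$, together with the measure-theoretic justification for applying \Cref{lem:bound-mi} pointwise in the conditioning variables when the rewards are continuous. We handle the first by modeling the algorithm's internal randomness as a variable $U_t$ that is independent of $(J,\Hist_{t-1})$, with $I_t=f_t(\Hist_{t-1},U_t)$. If the algorithm's randomness is instead included in the history, the same argument applies with $U_t$ absorbed into $\Hist_{t-1}$. For the second, we use regular conditional distributions; these exist because all spaces involved are standard Borel, so \Cref{lem:bound-mi} holds for each conditional law and integrates.
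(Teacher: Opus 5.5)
Your proof is correct and follows the paper's argument. The paper likewise uses the chain rule to reduce the increment to $I(J;X_t\mid\Hist_{t-1},I_t)$, with the action term vanishing by conditional independence of the internal randomization. It then applies \Cref{lem:bound-mi} with reference $\Gauss(0,\sigma^2)$, so the divergence is $\lambda\1\{I_t=J\}$, which averages to $\lambda s_t$; your treatment of the conditional-independence step and of regular conditional distributions just spells out what the paper asserts in one line.
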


\begin{proof}
By the chain rule of mutual information and the conditional independence of an algorithm's internal randomization, the step-wise information gain is exactly $\mathcal{I}_{t+1} - \mathcal{I}_t = I(J; Y_t \mid \mathcal{H}_{t-1}, I_t)$. 

We invoke \Cref{lem:bound-mi} using the suboptimal arm distribution $Q = \Gauss(0, \sigma^2)$ as the reference distribution. As the true reward distribution matches $Q$ exactly whenever $J \neq I_t$, the conditional KL divergence vanishes entirely on suboptimal pulls, isolating the penalty to just the optimal pulls. Hence,
\begin{align*}
    I(J; Y_t \mid \mathcal{H}_{t-1}, I_t) 
    &\le \E_{J, I_t, \mathcal{H}_{t-1}} \Big[ \KL\big(P_{Y_t \mid J, I_t, \mathcal{H}_{t-1}} \,\|\, Q\big) \Big] 
        && \text{(conditional reference bound)} \\
    &= \E_{J, I_t} \Big[ \1\{J = I_t\} \KL(\Gauss(\mu, \sigma^2) \,\|\, \Gauss(0, \sigma^2)) \Big] 
        && \text{(marginalizing out $\mathcal{H}_{t-1}$)} \\
    &= \lambda \, \E_{J, I_t} \big[ \1\{J = I_t\} \big] = \lambda s_t.
        && \text{(definition of success probability)}
\end{align*}
\end{proof}

\begin{lemma}[Main Information-Cost Inequality]\label{lem:det-cost}
Let $q > 0$ and $k \ge 8$. Suppose a sequence $r_t \in [1/k, 1]$ and a non-decreasing sequence $\mathcal{I}_t$ satisfy $\mathcal{I}_1 = 0$, $d_k(r_t) \le \mathcal{I}_t$, and $\mathcal{I}_{t+1} - \mathcal{I}_t \le \lambda r_t$ for all $t \in [T]$. Then there exists a constant $c_q > 0$, depending only on $q$, such that:
\[
    \sum_{t=1}^T (r_t^{-q} - 1) \ge c_q \min\left\{T, \frac{k^q}{\lambda}\right\}.
\]
\end{lemma}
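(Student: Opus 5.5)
The argument only needs the window of success probabilities $[1/k,\,4/k]$. While $r_t$ stays below $4/k$, each round pays a penalty $r_t^{-q}-1 \ge (k/4)^q - 1 \ge \tfrac12 (k/4)^q$, using $k\ge 8$. To move $r_t$ above a threshold, the learner must first accumulate information, and information accrues at rate at most $\lambda r_t$ per round. I therefore plan a level-set argument that takes one threshold $\theta = 2/k$ and counts the rounds spent below $4/k$.

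\textbf{Step 1: information needed.} Set $\Delta := d_k(2/k) > 0$. I will show $\Delta \ge c/k$ for a universal constant $c$. The derivative is $d_k'(s) = \log\frac{(k-1)s}{1-s}$, which is at least $\log\frac{(k-1)}{k-1}\cdot$ something positive on $[1.5/k,\,2/k]$; concretely, $\frac{(k-1)(1.5/k)}{1-1.5/k} = \frac{1.5(k-1)}{k-1.5} \ge 1.5$. Integrating over this interval of length $0.5/k$ gives $\Delta \ge \frac{0.5\log 1.5}{k}$.

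\textbf{Step 2: case split.} Let $t^\star$ be the first round with $r_t \ge 2/k$, and set $t^\star = T+1$ if no such round exists.

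If $t^\star > T$, then every round has $r_t < 2/k$. The sum is then at least $T\cdot\tfrac12(k/4)^q \ge c_q T$, and the claim holds.

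Otherwise, $\mathcal I_{t^\star} \ge d_k(r_{t^\star}) \ge \Delta$, because $d_k$ is increasing on $[1/k,1]$. Since $\mathcal I_1 = 0$,
\[
\Delta \le \sum_{t<t^\star}\bigl(\mathcal I_{t+1}-\mathcal I_t\bigr) \le \lambda \sum_{t<t^\star} r_t < \lambda (t^\star-1)\cdot \tfrac{2}{k}.
\]
This gives $t^\star - 1 \ge \frac{k\Delta}{2\lambda} \ge \frac{c}{\lambda}$. Each of these rounds has penalty at least $\tfrac12(k/4)^q$, so the sum is at least $c_q' \frac{k^q}{\lambda}$. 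Combining the two cases gives $\sum_t (r_t^{-q}-1) \ge c_q\min\{T,\,k^q/\lambda\}$, with $c_q = \min\{1,c\}\,4^{-q}/2$.

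\textbf{Main obstacle.} The only delicate point is getting $\Delta = \Omega(1/k)$ with a constant that does not depend on $k$, so that the $1/k$ cancels the $2/k$ acquisition rate and exactly $k^q/\lambda$ survives. This is why I work near $s \approx 2/k$, where $d_k'$ is bounded below by a constant, rather than at $1/2$, where $d_k$ is of order $\log k$ but the rate is of order $\lambda$ and the penalty is only $O(1)$. I also need to check that $t^\star - 1$ does not exceed $T$ in a way that breaks the count; this is harmless, because $t^\star \le T$ in this case. Monotonicity of $\mathcal I_t$ is not actually needed beyond the telescoping step.
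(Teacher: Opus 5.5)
Your proof is correct up to constants, and it takes a genuinely different and more elementary route than the paper's.

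\textbf{The paper's route.} The paper stops at the first time $\mathcal I_t$ reaches $d_k(1/2)$ and uses a left-endpoint Riemann-sum comparison. It writes $\sum_{t<\tau} r_t^{-q}\ge\frac1\lambda\sum_t f(\mathcal I_t)\delta_t$ with $f(u)=(d_k^{-1}(u))^{-q-1}$. It then turns this into $\frac1\lambda\int_{1/k}^{1/2}s^{-q-1}d_k'(s)\,ds$ by substituting $u=d_k(s)$, and only at the end restricts the integral to $[2/k,4/k]$.

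\textbf{Your route.} You use a single level $2/k$. Reaching it needs $d_k(2/k)=\Omega(1/k)$ information. While $r_t<2/k$, that information accrues at rate below $2\lambda/k$. So $\Omega(1/\lambda)$ rounds are forced, each costing $\Omega_q(k^q)$.

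\textbf{Comparison.} Your version avoids the inverse $d_k^{-1}$, the partition/Riemann-sum step, and even the monotonicity of $\mathcal I_t$, since telescoping alone suffices. The paper's route instead yields the general integral formula of Lemma~\ref{lem:info-cost}. That formula stays sharp for cost/information/rate triples whose integral is spread over many scales, where a single threshold would lose factors such as logarithms. Here the integrand is dominated by $s\asymp 1/k$, so one level loses only constants.

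\textbf{Two constant slips to fix.} First, the inequality $(k/4)^q-1\ge\frac12(k/4)^q$ needs $(k/4)^q\ge 2$. This fails for $q<1$ at $k=8$, so your stated $c_q=\min\{1,c\}4^{-q}/2$ is wrong for small $q$. A sanity check confirms this: for fixed $k$, each term satisfies $r_t^{-q}-1\le k^q-1\to0$ as $q\to0$, so any valid $c_q$ must vanish as $q\to 0$. The fix is easy because you only use rounds with $r_t<2/k\le 1/4$. For those rounds, $r_t^{-q}-1\ge(1-4^{-q})r_t^{-q}\ge(1-4^{-q})(k/2)^q$. This puts a factor $(1-4^{-q})$ into $c_q$, and the same bound covers your first case. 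Second, with $\Delta\ge c/k$ you get $t^\star-1>k\Delta/(2\lambda)\ge c/(2\lambda)$, not $c/\lambda$. This again affects only the constant.
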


\begin{proof}
Define the continuous extension $D(s) := d_k(s)$, which is strictly increasing on the interval $[1/k, 1]$. We split the analysis into two cases based on the fact that in the first round, the information crosses a constant threshold. Let $\tau := \inf\{t \in [T] : \mathcal{I}_t \ge D(1/2)\}$, setting $\tau = +\infty$ if this never occurs.

\textbf{Case 1 ($\tau = +\infty$):} For all rounds $t$, we have $D(r_t) \le \mathcal{I}_t < D(1/2)$. Because $D$ is strictly increasing, this implies $r_t < 1/2$. Therefore, $r_t^{-q} - 1 \ge 2^q - 1$. Summing this constant over all $T$ rounds immediately satisfies the bound.

\textbf{Case 2 ($\tau < +\infty$):} We focus strictly on the rounds before the threshold ($t < \tau$). Define the truncated step-wise increments $\delta_t := \min\{\mathcal{I}_{t+1}, D(1/2)\} - \mathcal{I}_t \ge 0$. By our premise, $\delta_t \le \mathcal{I}_{t+1} - \mathcal{I}_t \le \lambda r_t$. Since $r_t < 1/2$ prior to $\tau$, we can bound the penalty term as $r_t^{-q} - 1 \ge (1 - 2^{-q})r_t^{-q}$. 

Because $D$ is increasing and $D(r_t) \le \mathcal{I}_t$, we know $r_t \le D^{-1}(\mathcal{I}_t)$. We introduce the non-increasing function $f(u) := (D^{-1}(u))^{-q-1}$ to smoothly lower bound the discrete sum via a left-endpoint Riemann integral:
\begin{align*}
    \sum_{t < \tau} r_t^{-q} 
    &\ge \frac{1}{\lambda} \sum_{t < \tau} r_t^{-q-1} \delta_t 
        && \text{(since $\lambda r_t \ge \delta_t$)} \\
    &\ge \frac{1}{\lambda} \sum_{t < \tau} f(\mathcal{I}_t) \delta_t 
        && \text{(applying $r_t \le D^{-1}(\mathcal{I}_t)$)} \\
    &\ge \frac{1}{\lambda} \int_0^{D(1/2)} f(u) \, du. 
        && \text{(left-endpoint \textit{upper-sum} inequality)}
\end{align*}
We evaluate this integral using the change of variables $u = D(s)$, which gives $du = D'(s)ds$:
\begin{align*}
    \int_0^{D(1/2)} f(u) \, du 
    &= \int_{1/k}^{1/2} s^{-q-1} D'(s) \, ds \\
    &\ge \int_{2/k}^{4/k} s^{-q-1} \log\left(\frac{sk}{2}\right) \, ds 
        && \text{(since $D'(s) \ge \log(sk/2)$ for $k \ge 8$)} \\
    &= \frac{k^q}{2^q} \int_1^2 u^{-q-1} \log u \, du \ge C_q k^q. 
        && \text{(substituting $u = sk/2$)}
\end{align*}
Applying the strictly positive integral constant $C_q$ yields $\sum_{t < \tau} (r_t^{-q} - 1) \ge c_q k^q / \lambda$. Decreasing $c_q$ if necessary to satisfy both cases simultaneously completes the proof.
\end{proof}

\begin{proof}[\textbf{Proof of \Cref{thm:lower-main-restated}}]
Fix an arbitrary algorithm $\Alg$. We instantiate the $k$ underlying instances with $\mu := \sigma\sqrt{k^q/T}$, which sets the KL-divergence multiplier to $\lambda = \mu^2 / (2\sigma^2) = k^q / (2T)$.

Notice that replacing any $s_t < 1/k$ with $r_t = 1/k$ strictly decreases the inverse-power mean. As such, we apply \Cref{lem:info-gain}, \Cref{lem:success-info}, and \Cref{lem:det-cost} to the unconditional success probabilities, and get
\[
    \frac{1}{T}\sum_{t=1}^T s_t^{-q} 
    \ge 1 + \frac{c_q}{T} \min\left\{T, \frac{k^q}{\lambda}\right\} 
    = 1 + c_q \min\{1, 2\} = 1 + c_q.
\]
By \Cref{lem:bayes-reduction}, there exists at least one specific instance $j \in [k]$ whose time-averaged penalty $\mathcal{A}_j := \frac{1}{T}\sum_{t=1}^T a_{j,t}^{-q}$ satisfies $\mathcal{A}_j \ge 1 + c_q$.

Under this specific instance $\nu^{(j)}$, the optimal arm is $j$ with mean $\mustar = \mu$, and the expected reward of the algorithm at round $t$ is exactly $m_t^{(j)} = \mu a_{j,t}$. If $\mathcal{A}_j = +\infty$, the $(-q)$-mean welfare is strictly zero and the regret is trivially $\mu$. Otherwise, we compute the algorithm's expected regret directly from the definition of the $(-q)$-mean:
\begin{align*}
    R_{T,-q}(\Alg, \nu^{(j)}) 
    &= \mu - \Pow_{-q}(m_1^{(j)}, \ldots, m_T^{(j)}) \\
    &= \mu - \mu \left( \frac{1}{T} \sum_{t=1}^T a_{j,t}^{-q} \right)^{-1/q} 
        && \text{(factoring out $\mu$)} \\
    &= \mu \big( 1 - \mathcal{A}_j^{-1/q} \big)  \\
    &\ge \mu \big( 1 - (1 + c_q)^{-1/q} \big). 
        && \text{(since $\mathcal{A}_j \ge 1 + c_q$)}
\end{align*}
Defining the strictly positive constant $c_q' := 1 - (1 + c_q)^{-1/q}$, the regret is bounded by $c_q' \mu = c_q' \sigma \sqrt{k^q / T}$. Because $\Alg$ was chosen arbitrarily, taking the infimum over all algorithms completes the proof.
\end{proof}

\begin{proof}[\textbf{Proof of \Cref{thm:lower-combined}}]
Let $q = |p|$. \Cref{prop:avg-lb-main} bounds the regret by $\Omega(\sqrt{k/T})$ while \Cref{thm:lower-main-restated} bounds it by $\Omega(\sqrt{k^q/T})$. Taking the maximum of these two lower bounds and absorbing the respective constants establishes the result across all regimes.
\end{proof}
\subsection{An Information-Cost Principle}
\label{sec:info-cost}

We isolate the deterministic argument that underlies the lower bound. The statement
below abstracts the following tradeoff. A sequential procedure is summarized at round
\(t\) by a scalar success level \(s_t\). Achieving a larger value of \(s_t\) requires the
history to contain a corresponding amount of information about a hidden index, while
the information gained in one round is itself limited by the current value of \(s_t\).
Any objective that penalizes small values of \(s_t\) must therefore incur a cumulative
cost before \(s_t\) can reach a constant scale.

The lemma is independent of the particular bandit construction. The objective enters
only through a nonincreasing penalty function \(c\), while the statistical model enters
through an information requirement \(\phi\) and an upper bound \(\lambda\psi(s_t)\) on
the one-step information gain.

\begin{lemma}[Information-Cost Inequality]
\label{lem:info-cost}
Fix a baseline \(s_0\in(0,1)\) and a target level \(s^\star\in(s_0,1]\). Let
\((s_t)_{t=1}^T\) be a sequence in \([s_0,1]\), and let
\((\mathcal I_t)_{t=1}^{T+1}\) be a nondecreasing sequence with
\(\mathcal I_1=0\). Let
\begin{itemize}
    \item \(c:[s_0,1]\to\mathbb R_{\ge0}\) be nonincreasing;
    \item \(\phi:[s_0,1]\to\mathbb R_{\ge0}\) be absolutely continuous and strictly
    increasing, with \(\phi(s_0)=0\);
    \item \(\psi:[s_0,1]\to\mathbb R_{>0}\) be nondecreasing.
\end{itemize}
Suppose that, for some \(\lambda>0\), the following two inequalities hold for every
\(t\in[T]\):
\[
    \phi(s_t)\le \mathcal I_t,
    \qquad\qquad
    \mathcal I_{t+1}-\mathcal I_t\le \lambda\,\psi(s_t).
\]
Then
\[
    \sum_{t=1}^{T} c(s_t)
    \ \ge\
    \min\left\{
        c(s^\star)T,\,
        \frac1\lambda
        \int_{s_0}^{s^\star}
        \frac{c(s)\phi'(s)}{\psi(s)}\,ds
    \right\}.
\]
\end{lemma}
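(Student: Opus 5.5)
The plan is to mirror the proof of \Cref{lem:det-cost}, replacing the explicit functions by \(c,\phi,\psi\). Define the first crossing time \(\tau:=\inf\{t\in[T]:\mathcal I_t\ge\phi(s^\star)\}\), with \(\tau=+\infty\) if the threshold is never reached, and split into two cases.

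\emph{Case \(\tau=+\infty\).} For every \(t\) we have \(\phi(s_t)\le\mathcal I_t<\phi(s^\star)\). Strict monotonicity of \(\phi\) then gives \(s_t<s^\star\), and since \(c\) is nonincreasing, \(c(s_t)\ge c(s^\star)\). Summing over \(t\) yields \(\sum_t c(s_t)\ge c(s^\star)T\), which is the first term of the minimum.

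\emph{Case \(\tau\le T\).} Since \(c\ge0\), it suffices to lower bound \(\sum_{t<\tau}c(s_t)\); the integral term will appear here. Set the truncated increments \(\delta_t:=\min\{\mathcal I_{t+1},\phi(s^\star)\}-\min\{\mathcal I_t,\phi(s^\star)\}\ge0\). For \(t<\tau\) these satisfy \(\delta_t\le\lambda\psi(s_t)\) and telescope to \(\sum_{t<\tau}\delta_t=\phi(s^\star)-\mathcal I_1=\phi(s^\star)\). For \(t<\tau\) we also have \(\phi(s_t)\le\mathcal I_t<\phi(s^\star)\), so \(s_t\le\phi^{-1}(\mathcal I_t)=:\sigma_t<s^\star\). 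Introduce \(g(s):=c(s)/\psi(s)\), which is nonincreasing because \(c\) is nonincreasing and nonnegative while \(\psi\) is positive and nondecreasing. Then
\[
c(s_t)=g(s_t)\psi(s_t)\ge \tfrac1\lambda g(s_t)\delta_t\ge\tfrac1\lambda g(\sigma_t)\,\delta_t .
\]
Now put \(f(u):=g(\phi^{-1}(u))\) for \(u\in[0,\phi(s^\star)]\); it is nonincreasing in \(u\). The intervals \([\mathcal I_t,\mathcal I_t+\delta_t]\) for \(t<\tau\) tile \([0,\phi(s^\star)]\), and \(f(\mathcal I_t)\) is the left-endpoint value of a nonincreasing function. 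Hence \(\sum_{t<\tau}f(\mathcal I_t)\delta_t\ge\int_0^{\phi(s^\star)}f(u)\,du\). Finally, change variables \(u=\phi(s)\), which is valid for absolutely continuous strictly increasing \(\phi\) with \(\phi(s_0)=0\) (a monotone substitution for the Lebesgue integral of a monotone integrand). This turns the integral into \(\int_{s_0}^{s^\star}g(s)\phi'(s)\,ds\) and gives the second term.

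The main obstacle is making the Riemann-sum step rigorous. First, the tiling must be checked: for \(t<\tau\) all \(\mathcal I_t<\phi(s^\star)\), and the step into \(\tau\) is truncated at \(\phi(s^\star)\), so the intervals cover \([0,\phi(s^\star)]\) exactly with disjoint interiors. Second, the change of variables needs justification when \(\phi'\) may vanish or \(\phi\) may be only absolutely continuous. I would handle this with the standard formula \(\int_{\phi(a)}^{\phi(b)}h(u)\,du=\int_a^b h(\phi(s))\phi'(s)\,ds\) for nonnegative Borel \(h\) and absolutely continuous increasing \(\phi\), applied with \(h=f\), where \(f\circ\phi=g\) on \([s_0,s^\star]\).
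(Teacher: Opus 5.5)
Your proposal is correct and follows essentially the same route as the paper's proof. Both use the crossing-time case split, the truncated increments $\delta_t$, the nonincreasing ratio $g=c/\psi$ composed with $\phi^{-1}$, the left-endpoint bound over the tiling of $[0,\phi(s^\star)]$, and the substitution $u=\phi(s)$. The differences are cosmetic. You take the infimum over $[T]$ rather than $\{1,\dots,T+1\}$, which only moves the crossing-at-$T+1$ situation into the first case. You also justify the change of variables for absolutely continuous $\phi$ explicitly, where the paper leaves it implicit.
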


\begin{proof}
Let \(\overline{\mathcal I}:=\phi(s^\star)\), and define
\[
    \tau:=\inf\{t\in\{1,\ldots,T+1\}:\mathcal I_t\ge \overline{\mathcal I}\},
\]
with the convention that \(\tau=+\infty\) if the set is empty.

First suppose that \(\tau=+\infty\). Then for every \(t\in[T]\),
$\phi(s_t)\le \mathcal I_t<\phi(s^\star)$.
Since \(\phi\) is strictly increasing, this implies \(s_t<s^\star\). As \(c\) is
non-increasing, \(c(s_t)\ge c(s^\star)\) for all \(t\), and hence
\[
    \sum_{t=1}^T c(s_t)\ge c(s^\star)T.
\]

It remains to consider the case \(\tau<+\infty\). For every \(t<\tau\), we have
\(\mathcal I_t<\overline{\mathcal I}\), and therefore \(s_t<s^\star\). Define the
truncated increment
\[
    \delta_t
    :=
    \min\{\mathcal I_{t+1},\overline{\mathcal I}\}-\mathcal I_t,
    \qquad t=1,\ldots,\tau-1.
\]
Then \(\delta_t\ge0\), and $\delta_t
    \le
    \mathcal I_{t+1}-\mathcal I_t
    \le
    \lambda\psi(s_t)$.
Since \(c\ge0\) and \(\psi>0\), it follows that
\[
    c(s_t)
    \ge
    \frac1\lambda\,\frac{c(s_t)}{\psi(s_t)}\,\delta_t.
\]
Set $g(s):=\frac{c(s)}{\psi(s)}$.
Because \(c\) is nonincreasing and nonnegative, while \(\psi\) is positive and
nondecreasing, we have \(g\) is nonincreasing. The inequality
\(\phi(s_t)\le \mathcal I_t\), together with the monotonicity of \(\phi\), implies
\[
    s_t\le \phi^{-1}(\mathcal I_t).
\]
Thus $g(s_t)\ge g(\phi^{-1}(\mathcal I_t))$.
Define
\[
    f(u):=g(\phi^{-1}(u)),
    \qquad u\in[0,\overline{\mathcal I}].
\]
The function \(f\) is nonincreasing. Hence, for each \(t<\tau\), $c(s_t)
    \ge
    \frac1\lambda f(\mathcal I_t)\delta_t$.
    
The intervals $[\mathcal I_t,\mathcal I_t+\delta_t]$, $t=1,\ldots,\tau-1$, form a partition of \([0,\overline{\mathcal I}]\), up to endpoints of measure zero:
they start at \(\mathcal I_1=0\), each interval ends at the next information level
unless the threshold \(\overline{\mathcal I}\) is reached, and the final interval is
truncated exactly at \(\overline{\mathcal I}\). Since \(f\) is nonincreasing,
\[
    f(\mathcal I_t)\delta_t
    \ge
    \int_{\mathcal I_t}^{\mathcal I_t+\delta_t} f(u)\,du.
\]
Summing over \(t<\tau\), we obtain
\[
    \sum_{t<\tau} c(s_t)
    \ge
    \frac1\lambda
    \sum_{t<\tau} f(\mathcal I_t)\delta_t
    \ge
    \frac1\lambda
    \int_{0}^{\overline{\mathcal I}} f(u)\,du.
\]
Substituting \(u=\phi(s)\) gives
\[
    \int_{0}^{\overline{\mathcal I}} f(u)\,du
    =
    \int_{s_0}^{s^\star}
    \frac{c(s)}{\psi(s)}\phi'(s)\,ds.
\]
Therefore
\[
    \sum_{t=1}^T c(s_t)
    \ge
    \sum_{t<\tau} c(s_t)
    \ge
    \frac1\lambda
    \int_{s_0}^{s^\star}
    \frac{c(s)\phi'(s)}{\psi(s)}\,ds.
\]
Combining the two cases proves the claimed lower bound.
\end{proof}

\begin{corollary}[Negative-power instantiation; cf.\ Lemma~\ref{lem:det-cost}]
\label{cor:neg-power}
Fix \(q>0\), \(k\ge8\), and \(\lambda>0\). Let \((s_t)_{t=1}^T\) be a sequence in
\([0,1]\), and $r_t:=\max\{s_t,1/k\}$.
Suppose that a nondecreasing sequence \((\mathcal I_t)_{t=1}^{T+1}\), with
\(\mathcal I_1=0\), satisfies
\[
    d_k(r_t)\le \mathcal I_t,
    \qquad
    \mathcal I_{t+1}-\mathcal I_t\le \lambda r_t
    \qquad\text{for all }t\in[T],
\]
where \(d_k(s):=\operatorname{kl}(s\,\|\,1/k)\). Then there exists a constant
\(c_q>0\), depending only on \(q\), such that
\[
    \sum_{t=1}^{T}\bigl(r_t^{-q}-1\bigr)
    \ \ge\
    c_q\min\left\{T,\frac{k^q}{\lambda}\right\}.
\]
\end{corollary}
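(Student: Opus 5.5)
We apply \Cref{lem:info-cost} with baseline \(s_0:=1/k\), target \(s^\star:=1/2\), and the sequence \((r_t)\) in place of \((s_t)\). Since \(r_t\in[1/k,1]\) by construction, the sequence lies in the required domain. We choose
\[
    c(s):=s^{-q}-1,\qquad \phi(s):=d_k(s),\qquad \psi(s):=s.
\]
With these choices the two hypotheses of \Cref{lem:info-cost} are exactly the assumed inequalities \(d_k(r_t)\le\mathcal I_t\) and \(\mathcal I_{t+1}-\mathcal I_t\le\lambda r_t\).

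The regularity conditions are routine.
\begin{itemize}
    \item \(c\) is nonnegative and nonincreasing on \([1/k,1]\).
    \item \(\psi\) is positive and nondecreasing.
    \item \(\phi=d_k\) is smooth on \([1/k,1)\), with \(d_k(1/k)=0\) and derivative \(d_k'(s)=\log\bigl((k-1)s/(1-s)\bigr)\). This derivative is positive for \(s>1/k\), so \(d_k\) is strictly increasing.
    \item At the endpoint \(s=1\), \(d_k'\) has only a logarithmic singularity, which is integrable. Hence \(d_k\) is absolutely continuous on \([1/k,1]\). Alternatively, it suffices to restrict everything to \([1/k,1/2]\), where the argument only uses \(\phi\).
\end{itemize}

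The lemma then yields
\[
    \sum_t (r_t^{-q}-1)\ \ge\ \min\Bigl\{(2^q-1)T,\ \tfrac1\lambda\int_{1/k}^{1/2}(s^{-q}-1)\,s^{-1}\,d_k'(s)\,ds\Bigr\}.
\]
The first term is already of the form \(c_q T\).

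The main work is lower-bounding the integral by a constant times \(k^q\). We restrict to the subinterval \([2/k,4/k]\), which lies inside \([1/k,1/2]\) because \(k\ge8\). On this subinterval we use three bounds.
\begin{itemize}
    \item Since \(s\le 4/k\le 1/2\), we have \(s^{-q}\ge 2^q\), so \(s^{-q}-1\ge(1-2^{-q})s^{-q}\).
    \item We have \(d_k'(s)\ge\log((k-1)s)\ge\log(sk/2)\), because \(k-1\ge k/2\) and \(1-s\le1\).
    \item For \(s\ge 2/k\), \(\log(sk/2)\ge 0\), so the integrand is nonnegative there.
\end{itemize}
Outside \([2/k,4/k]\) but within \([1/k,1/2]\), the integrand is also nonnegative, since \(d_k'\ge0\) and \(c\ge0\). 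Dropping those parts therefore only decreases the integral.

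On \([2/k,4/k]\) we substitute \(u=sk/2\). This gives
\[
    \int_{2/k}^{4/k}s^{-q-1}\log(sk/2)\,ds=(k/2)^q\int_1^2u^{-q-1}\log u\,du,
\]
and the last integral is a positive constant depending only on \(q\). Collecting constants, the integral term is at least \(C_q k^q/\lambda\).

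Finally, we set \(c_q:=\min\{2^q-1,\ C_q\}\). With this choice,
\[
    \sum_t (r_t^{-q}-1)\ \ge\ c_q\min\Bigl\{T,\ \frac{k^q}{\lambda}\Bigr\},
\]
which is the claim.
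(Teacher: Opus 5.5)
Your proposal is correct and follows the paper's proof essentially step for step. It uses the same instantiation of Lemma~\ref{lem:info-cost} ($s_0=1/k$, $s^\star=1/2$, $c(s)=s^{-q}-1$, $\phi=d_k$, $\psi(s)=s$), the same restriction to $[2/k,4/k]$ via $d_k'(s)\ge\log(sk/2)$ and $c(s)\ge(1-2^{-q})s^{-q}$, and the same final constant $c_q=\min\{2^q-1,(1-2^{-q})2^{-q}\int_1^2u^{-q-1}\log u\,du\}$. Your extra checks, namely absolute continuity of $d_k$ at $s=1$ and nonnegativity of the integrand on the discarded parts of $[1/k,1/2]$, fill in details the paper leaves implicit.
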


\begin{proof}
Apply Lemma~\ref{lem:info-cost} to the sequence \((r_t)_{t=1}^T\) with
\[
    s_0=\frac1k,
    \qquad
    s^\star=\frac12,
    \qquad
    c(s)=s^{-q}-1,
    \qquad
    \phi(s)=d_k(s),
    \qquad
    \psi(s)=s.
\]
The two assumed inequalities are precisely the two hypotheses of
Lemma~\ref{lem:info-cost}. The function \(c\) is nonincreasing and nonnegative on
\((0,1]\), the function \(d_k\) is strictly increasing on \([1/k,1]\), and
\(d_k(1/k)=0\). It remains to lower-bound the integral term. A direct computation gives
\[
    d_k'(s)
    =
    \log\left(\frac{s(k-1)}{1-s}\right).
\]
For \(s\in[2/k,4/k]\) and \(k\ge8\), we have
$d_k'(s)
    \ge
    \log\left(\frac{sk}{2}\right)$.
Moreover, since \(s\le1/2\) on the interval of integration, we have
\[
    c(s)=s^{-q}-1
    \ge
    (1-2^{-q})s^{-q}.
\]
Restricting the nonnegative integrand to the subinterval \([2/k,4/k]\), we obtain
\[
\begin{aligned}
    \frac1\lambda
    \int_{1/k}^{1/2}
    \frac{c(s)d_k'(s)}{s}\,ds
    &\ge
    \frac{1-2^{-q}}{\lambda}
    \int_{2/k}^{4/k}
    s^{-q-1}\log\left(\frac{sk}{2}\right)\,ds  \\
    &=
    \frac{1-2^{-q}}{\lambda}
    \left(\frac{k}{2}\right)^q
    \int_1^2 u^{-q-1}\log u\,du,
\end{aligned}
\]
where the last equality uses the substitution \(u=sk/2\). The integral
\[
    I_q:=\int_1^2 u^{-q-1}\log u\,du
\]
is a positive constant depending only on \(q\). Thus, the integral term is at least
\(c_q' k^q/\lambda\), where $c_q':=(1-2^{-q})2^{-q}I_q>0$.
The other term in Lemma~\ref{lem:info-cost} is $c(s^\star)T=(2^q-1)T$. Therefore
\[
    \sum_{t=1}^{T}\bigl(r_t^{-q}-1\bigr)
    \ge
    \min\left\{(2^q-1)T,\frac{c_q'k^q}{\lambda}\right\}
    \ge
    c_q\min\left\{T,\frac{k^q}{\lambda}\right\},
\]
where \(c_q:=\min\{2^q-1,c_q'\}\).
\end{proof}

\begin{remark}[Interpreting the information-cost inequality]
\label{rem:knobs}
Lemma~\ref{lem:info-cost} separates the lower-bound argument into a small number of
model-dependent components. The function \(c\) describes how the objective penalizes
a given success level. The function \(\phi\) describes how much information is needed
to sustain that success level. The baseline \(s_0\) is the success probability available
without information. Finally, the pair \((\lambda,\psi)\) describes the maximum rate
at which information can be accumulated in one round.
With these components specified, the cumulative cost is controlled by the single
integral $\frac1\lambda
    \int_{s_0}^{s^\star}
    \frac{c(s)\phi'(s)}{\psi(s)}\,ds$.
In the \(k\)-armed hard instance used here, \(\psi(s)=s\) because observations from
non-optimal arms have the same distribution under every hypothesis; information about
the hidden index is obtained only when the optimal arm is pulled
(Lemma~\ref{lem:info-gain}).
\end{remark}
\section{Algorithm and Upper Bound} \label{sec:algo}

In this section, we formally present our \hare\ algorithm and establish its theoretical guarantees.

\begin{algorithm}[htbp]
\caption{UCB with Harmonic Anchored Rank Exploration (\hare)}\label{alg:hare}
\small
\textbf{Input:} Number of arms $k$, time horizon $T$, Sub-Gaussian proxy $\sigma$, Confidence $\delta$.\\
Set $L = \log(8kT/\delta)$.
\begin{algorithmic}[1]
\State Initialize round $t \gets 1$; block $b \gets 1$; counts $N_i \gets 0$ and empirical means $\muhat_i \gets 0$ for all $i \in [k]$. 
\State Define $c(n) \triangleq \sigma\sqrt{2L/n}$  for $n \ge 1$ (with $c(0) = +\infty$).
\State Draw random permutation $\pi$ of $[k]$ and initialize schedule $(r_b)_{b \ge 1}$.
\State Define $B\triangleq \max_{i \in [k]} \{\muhat_i - c(N_i)\}$. 
\vspace{1.5mm}
\Statex \textit{\textcolor{gray}{\% Phase I: Preparation Phase}}
\While{$b \le \lceil T/2 \rceil$ \textbf{and not} $\big(B > 0 \land \min_{i} N_i \ge 32\sigma^2L/B^2\big)$}
    \State Toss fair coin $\theta_b \sim \Bern(1/2)$.
    \For{slot $h \in (\theta_b, 1-\theta_b)$}
        \State \textbf{if} $t > T$ \textbf{then break} \Comment{\textit{Halt if exact horizon reached}}
        \If{$h = 1$ \textbf{or} $B \le 0$}
            \State Pull scheduled arm $I_t = \pi_{r_b}$. \Comment{\textit{Scheduled slot or anchorless fallback}}
        \Else
            \State Pull active anchor $I_t \in \argmax_{i} \{\muhat_i - c(N_i)\}$.
        \EndIf
        \State Observe reward $R_t \sim \nu_{I_t}$, update $N_{I_t}, \muhat_{I_t}$, $B$ and increment $t \gets t+1$.
    \EndFor
    \State $b \gets b + 1$.
\EndWhile
\vspace{1.5mm}
\Statex \textit{\textcolor{gray}{\% Phase II: UCB Exploitation Phase}}
\While{$t \le T$}
    \State Pull arm $I_t \in \argmax_{i} \{\muhat_i + c(N_i)\}$ and observe reward $R_t \sim \nu_{I_t}$.
    \State Update $N_{I_t} \gets N_{I_t} + 1$ and $\muhat_{I_t} \gets \frac{(N_{I_t} - 1)\muhat_{I_t}}{N_{I_t}} + \frac{R_t}{N_{I_t}}$.
    \State $t \gets t + 1$.
\EndWhile
\end{algorithmic}
\end{algorithm}

The following theorem establishes the $p$-mean regret upper bound of \hare. Notice that the leading term replaces the $k^{q+1}$ dependency with the \textit{harmonic penalty sum} $K_q(k)$, matching the fundamental information-theoretic limits.

\begin{theorem}[Algorithm Upper Bound]\label{thm:upper-main}
Fix $q>0$ and let $p=-q$. On any $\sigma$-sub-Gaussian bandit instance with non-negative means, running \hare\ with confidence parameter $\delta \in (0, 1)$ satisfies:
\[
    R_{T,-q}(\Alg, \nu)
    \le
    C_q\left[
        \sigma\sqrt{\frac{H_k K_q(k) L}{T}}
        +\frac{\mustar H_k K_q(k)}{T}
        +\mustar\delta
    \right],
\]
where $L=\log(8kT/\delta)$ and $C_q>0$ depends only on $q$.
\end{theorem}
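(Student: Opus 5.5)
The plan is to follow the route laid out in \Cref{subsec:overview-upper}: reduce the $(-q)$-mean regret to the cumulative inverse penalty $A_T$ via \Cref{lem:inverse-conversion}, split $A_T$ at the deterministic time $T_0:=18kS$ with $S=\lceil H_k n_\star\rceil$ and $n_\star=\max\{\lceil 128\sigma^2L/\mustar^2\rceil,1\}$, and bound each piece separately. Throughout I work with the good event $\calE$ on which all confidence intervals $|\muhat_{i,n}-\mu_i|\le c(n)$ hold. I will prove $\Pr(\calE)\ge 1-\delta/4$ by sub-Gaussian tails and a union bound over the $kT$ arm--count pairs.

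The first ingredients are the structural lemmas.
\begin{itemize}
\item \textbf{Certification.} On $\calE$, once the optimal arm has $n_\star$ samples its LCB is at least $3\mustar/4$. Also, every certified anchor $a$ satisfies $\mu_a\ge B_t>0$.
\item \textbf{Termination.} Combining certification with \Cref{lem:prefix-balance}, every rank has at least $n_\star$ scheduled pulls by block $8kS+1$. At that point the stopping rule \eqref{eq:stop-condition} holds, because $32\sigma^2L/B_t^2\le \tfrac{512}{9}\sigma^2L/\mustar^2\le n_\star$. Hence $\tau\le T_0$ on $\calE$.
\item \textbf{Phase-II safety.} Since $B_\tau\le\mustar$, every arm has radius at most $\mustar/4$ at stopping. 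The UCB index of any selected arm then forces $\mu_i\ge\mustar/2$.
\item \textbf{Reward floor.} This is \Cref{claim:informal-ex-ante}: $m_t\ge c_0\mustar\max\{1/k,a_b/k\}$ with frontier $a_b=\min\{k,\lfloor (b-1)/8S\rfloor\}$. The $1/k$ part comes from the scheduled slot, which is active with probability $1/2$ by the coin $\theta_b$, combined with the uniform rank $R_\star$ of the optimal arm. The $a_b/k$ part comes from the auxiliary slot on $\{R_\star\le a_b\}\cap\calE$. Here \Cref{lem:prefix-balance} with $r\le a_b$ ensures $n_\star$ pulls of $\pi_{R_\star}$ before block $b$, so the anchor has mean at least $3\mustar/4$. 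If Phase I has already stopped, Phase-II safety gives a floor that is at least as large.
\end{itemize}

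The main obstacle is making the reward floor rigorous. The difficulty is the interplay between the random stopping time, the coin $\theta_b$ and the event $\calE$, since $m_t$ is an unconditional expectation. My plan is to use that $\{\tau\ge 2b-1\}$ is measurable with respect to the history before block $b$, and hence independent of $\theta_b$ and of the scheduled arm's identity given $\pi$. I will then lower bound $\E[\mu_{I_t}]\ge \E[\mu_{I_t}\1_{\calE}]$ using $\mu_i\ge0$, and absorb the loss $\Pr(\calE^c)\le\delta/4$ by requiring $\delta$ small relative to $1/k$. If $\delta$ is not small, the term $\mustar\delta$ in the bound already dominates trivially, since $R_{T,-q}\le\mustar$.

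Given the floor, I bound the preparation penalty $A_T^{\rm prep}$ blockwise.
\begin{itemize}
\item The first $8S$ blocks give $O_q(Sk^q)$.
\item Each frontier value $r\in[k-1]$ lasts $8S$ blocks and contributes $O_q(S(k/r)^q)$.
\item The saturated blocks contribute $O_q(S)$.
\end{itemize}
Summing gives $C_qSK_q(k)$. For $t>T_0$, all rounds are in Phase II on $\calE$, so $m_t\ge\mustar/4$ once $\delta\le 1/2$. \Cref{lem:lipschitz-inverse} then converts each penalty to pseudo-regret $(\mustar-m_t)/\mustar$. The standard UCB gap-free argument bounds the cumulative pseudo-regret by $O(\sigma\sqrt{kTL})+O(\mustar T\delta)$.

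Finally, I apply \Cref{lem:inverse-conversion}, using $S\le H_k(2+128\sigma^2L/\mustar^2)$. This yields the four terms
\[
\frac{\mustar H_kK_q(k)}{T},\qquad \frac{\mu_0^2}{\mustar},\qquad \sigma\sqrt{\frac{kL}{T}},\qquad \mustar\delta,
\]
where $\mu_0=\sigma\sqrt{H_kK_q(k)L/T}$. The middle two terms are at most $O(\mu_0)$: the third because $H_kK_q(k)\ge k$, and the second via $\min\{\mustar,\mu_0^2/\mustar\}\le\mu_0$ together with the trivial bound $R_{T,-q}\le\mustar$. This gives the stated bound.
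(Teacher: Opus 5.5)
Your overall architecture matches the paper's. You use the inverse-penalty conversion, split at the deterministic time $T_0=18kS$, and bound the preparation penalty blockwise against the floor $m_t\ge c_0\mustar\max\{1/k,a_b/k\}$. After $T_0$ you apply the Lipschitz reduction plus the gap-free UCB sum, and finish with the $\min\{\mustar,\mu_0^2/\mustar\}\le\mu_0$ balancing.

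The step that fails as written is how you get from $\calE$ to the reward floor. Subtracting, $\Pbb(\calE\cap\{R_\star=r_b\})\ge 1/k-\delta/4$ and $\Pbb(\calE\cap\{R_\star\le a_b\})\ge a_b/k-\delta/4$. This gives a floor of order $\mustar/k$ only when $\delta\lesssim 1/k$. Your fallback does not cover the rest of the range. For $\delta$ between order $1/k$ and order $1/C_q$, the term $C_q\mustar\delta$ is at most a constant fraction of $\mustar$, and the other two terms vanish as $T$ grows. So the trivial bound $R_{T,-q}\le\mustar$ is absorbed only if $C_q$ grows linearly in $k$, which the statement forbids. For the intended choice $\delta=1/T$ you are fine: if $T<k/2$, then $\mustar H_kK_q(k)/T\ge\mustar$ already. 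But the theorem is claimed for every $\delta\in(0,1)$ with a $k$-free constant.

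Dropping $\calE$ in the Phase-I scheduled slot does not rescue the union-bound route. The configuration $\{R_\star=r_b\}\cap\{t>\tau\}\cap\calE^c$ can have mass up to $\min\{1/k,\delta/4\}$ as far as a union bound can tell.

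The missing idea is independence, not smallness of $\delta$. Realize the rewards through per-arm reward tables $(X_{i,n})_{n\ge1}$, drawn in advance and independently of $\pi$ and of the coins $(\theta_b)$. Then every $\muhat_{i,n}$, and hence $\calE$, is a function of the tables alone, so $\calE$ is independent of $R_\star$. This gives $\Pbb(\calE\cap\{R_\star=r_b\})=\Pbb(\calE)/k\ge 3/(4k)$ and $\Pbb(\calE\cap\{R_\star\le a_b\})\ge 3a_b/(4k)$ for every $\delta<1$.

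Combine this with the fact that $\theta_b$ is independent of the joint $\sigma$-field generated by $\pi$, the tables, and $\theta_1,\dots,\theta_{b-1}$. Pairwise independence from each event is not enough: you need the slot split to remain exactly $1/2$ after conditioning on the intersection $\calE\cap\{R_\star=r_b\}\cap\{t\le\tau\}$. With that, the floor holds with a universal $c_0$ (the paper gets $9/32$), and the rest of your argument goes through unchanged.
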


Instantiating the confidence parameter as $\delta=1/T$ sets the failure penalty to $\mustar/T$. For a sufficiently large horizon ($T \gtrsim \mustar^2 H_k K_q(k) / \sigma^2$), this $O(1/T)$ trailing term is dominated by the leading exploration cost. 

The asymptotic limits of $K_q(k)$ from \Cref{fact:harmonic-penalty} then classify the regret into three distinct regimes, closing the complexity gap.

\begin{corollary}[Consequences by Regime]\label{cor:upper-regimes}
For $p=-q < 0$, setting $\delta=1/T$ and absorbing logarithmic factors into the $\widetilde{O}$ notation yields:
\[
    R_{T,-q}(\Alg, \nu)
    \le
    \begin{cases}
        \widetilde{O}_q\left(\sigma\sqrt{\frac{k}{T}} + \frac{\mustar k}{T}\right), & 0<q<1,\\[8pt]
        \widetilde{O}\left(\sigma\sqrt{\frac{k}{T}} + \frac{\mustar k}{T}\right), & q=1,\\[8pt]
        \widetilde{O}_q\left(\sigma\frac{k^{q/2}}{\sqrt{T}} + \frac{\mustar k^q}{T}\right), & q>1.
    \end{cases}
\]
For $q > 1$, the leading $k^{q/2}$ factor matches the minimax lower bound of \Cref{thm:lower-combined} up to logarithmic terms.
\end{corollary}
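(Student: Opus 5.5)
The corollary follows directly from \Cref{thm:upper-main}: set $\delta=1/T$, substitute the asymptotics of \Cref{fact:harmonic-penalty}, and absorb logarithms.

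\textbf{Step 1 (fix $\delta$).} With $\delta=1/T$ we get $L=\log(8kT^2)=O(\log(kT))$, so $\sqrt L$ is absorbed into $\widetilde O$. The failure term becomes $\mustar\delta=\mustar/T$. Since $H_kK_q(k)\ge1$, this is at most $\mustar H_kK_q(k)/T$, so it merges into the second term at the cost of a factor $2$ in $C_q$.

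\textbf{Step 2 (insert the asymptotics).} Use $H_k=\Theta(\log k)$, which is logarithmic and hidden by $\widetilde O$, and then split on $q$ according to \Cref{fact:harmonic-penalty}:
\begin{itemize}
\item For $0<q<1$, $K_q(k)=\Theta_q(k)$. The bound becomes $\widetilde O_q(\sigma\sqrt{k/T}+\mustar k/T)$.
\item For $q=1$, $K_1(k)=\Theta(k\log k)$. The extra $\log k$ is absorbed, and $C_1$ is a universal constant, so the bound is $\widetilde O(\sigma\sqrt{k/T}+\mustar k/T)$.
\item For $q>1$, $K_q(k)=\Theta_q(k^q)$. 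This gives $\sqrt{H_kK_q(k)L/T}=\widetilde O_q(k^{q/2}/\sqrt T)$ and a second term of order $\mustar k^q/T$ up to logarithms.
\end{itemize}
The only bookkeeping needed is that the upper constants in \Cref{fact:harmonic-penalty}, such as $1/(1-q)$ and $q/(q-1)$, depend only on $q$. They can therefore be folded into the $q$-subscripted notation.

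\textbf{Step 3 (matching the lower bound).} For $q>1$, \Cref{thm:lower-combined} gives $\Omega_q(\sigma\sqrt{k^q/T})$, since $\max(1,q)=q$. This coincides with the leading term of the upper bound up to polylogarithmic factors. The lower-order term $\mustar k^q/T$ is dominated by the leading term once $T\gtrsim \mustar^2k^q/\sigma^2$, as noted after \Cref{thm:upper-main}.

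There is no real obstacle here. The one point needing care is that for $q=1$ the constant $C_q$ is not $q$-dependent in an essential way, which justifies writing plain $\widetilde O$ there.
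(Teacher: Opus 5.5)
Your proposal is correct and follows the paper's own argument: substitute $\delta=1/T$ (so $L=\log(8kT^2)$), $H_k=\Theta(\log k)$, and the regime-wise asymptotics of $K_q(k)$ from \Cref{fact:harmonic-penalty} into \Cref{thm:upper-main}, then absorb logarithms into $\widetilde{O}$. You also fold the failure term $\mustar\delta=\mustar/T$ into $\mustar H_kK_q(k)/T$ using $H_kK_q(k)\ge 1$; the paper leaves this bookkeeping step implicit, and your handling of it is valid.
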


Here, we note that the introductory bound presented in \Cref{thm:intro-upper} is simply an asymptotic restatement of \Cref{thm:upper-main} that suppresses the transient $O(1/T)$ failure penalty to highlight the dependence on $K_q(k)$.

\section{Proof of Upper Bound}\label{sec:upper-proof}

This section proves \Cref{thm:upper-main}.  Fix an instance with optimal mean $\mustar$.  If $\mustar=0$, then every mean is zero and the regret is zero.  Hence assume $\mustar>0$.

We first fix the standard ``good event'' that bounds the empirical mean estimator $\muhat_{i,n}$ for all arms and sample counts. 

Let $L = \log(8kT/\delta)$ and define the confidence radius $c(n) := \sigma\sqrt{2L/n}$. We define the good event as
\[
    \calE := \quad
    \left\{
    \forall i \in [k], \, \forall n \in [T] :
    |\muhat_{i,n} - \mu_i| \le c(n)
    \right\} 
    \quad = \quad 
    \bigcap_{i=1}^k \bigcap_{n=1}^T \Big\{ |\muhat_{i,n} - \mu_i| \le c(n) \Big\}.
\]

\begin{lemma}[Concentration]\label{lem:upper-good}
The good event holds with probability $\Pbb(\calE) \ge 1 - \delta/4$.
\end{lemma}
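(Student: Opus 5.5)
The plan is a sub-Gaussian tail bound for each fixed pair $(i,n)$, followed by a union bound over all $kT$ such pairs. We use the standard stack-of-rewards (tabular) representation. For each arm $i$, fix an i.i.d.\ sequence $X_{i,1},X_{i,2},\dots\sim\nu_i$, and let $\muhat_{i,n}:=\frac1n\sum_{s=1}^n X_{i,s}$ be the mean of its first $n$ draws. The algorithm's $n$-th pull of arm $i$ reveals $X_{i,n}$, so the empirical mean after $n$ pulls is exactly $\muhat_{i,n}$. This representation removes any dependence on the adaptive, random pull counts $N_i(t)$. The event $\calE$ then becomes a statement about the deterministic family $\{\muhat_{i,n}\}_{i\in[k],\,n\in[T]}$ and does not involve the algorithm at all.

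Next, fix $i$ and $n$. The centered variables $X_{i,s}-\mu_i$ are independent and $\sigma$-sub-Gaussian, so their average is $(\sigma/\sqrt n)$-sub-Gaussian: the moment generating functions multiply, giving $\E\exp(\lambda(\muhat_{i,n}-\mu_i))\le\exp(\sigma^2\lambda^2/(2n))$. The Chernoff bound, applied in both directions, gives
\[
\Pbb\bigl(|\muhat_{i,n}-\mu_i|>c(n)\bigr)\le 2\exp\!\Bigl(-\tfrac{n c(n)^2}{2\sigma^2}\Bigr)=2\exp(-L)=\frac{2\delta}{8kT}=\frac{\delta}{4kT},
\]
using $c(n)^2=2\sigma^2L/n$ and $L=\log(8kT/\delta)$.

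Finally, a union bound over the $kT$ pairs $(i,n)$ yields $\Pbb(\calE^c)\le kT\cdot\delta/(4kT)=\delta/4$.

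The only step that needs care is justifying the tabular model, which is what makes the union bound over fixed sample counts legitimate despite the adaptive sampling. This is standard, because the law of the interaction (history together with the algorithm's internal randomness) is identical under the tabular coupling. Everything after that is a routine calculation.
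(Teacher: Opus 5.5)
Your proposal is correct and matches the paper's proof: a two-sided sub-Gaussian (Chernoff/Hoeffding) tail bound of $2e^{-L}$ for each fixed pair $(i,n)$, followed by a union bound over the $kT$ pairs, giving $2kTe^{-L}=\delta/4$. Your explicit use of the reward-table representation is a welcome addition: the paper leaves it implicit here and only introduces it in the proof of \Cref{lem:ex-ante-reward}, yet it is exactly what makes the union bound over fixed sample counts legitimate under adaptive sampling.
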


\begin{proof}
By Hoeffding's inequality and a union bound over the $kT$ combinations of arms and sample counts, the probability of the complementary failure event is strictly bounded by:
\[
    \Pbb(\calE^c) \le \sum_{i=1}^k \sum_{n=1}^T 2\exp\left(-\frac{n c(n)^2}{2\sigma^2}\right) = 2kT e^{-L}.
\]
Substituting $L = \log(8kT/\delta)$ yields $\Pbb(\calE^c) \le \delta/4$, and the result follows.
\end{proof}

Throughout this section, define the critical sample threshold as:
\[
    n_\star := \max\left(\left\lceil 128\frac{\sigma^2L}{\mustar^2}\right\rceil, 1\right).
\]

Under the good event, we now show that allocating $n_\star$ samples to the optimal arm is sufficient to establish an anchor, and hence, a positive welfare floor.

\begin{lemma}[Establishing the anchor]\label{lem:certification}
On the good event $\calE$, if the optimal arm has accumulated at least $n_\star$ samples, 
an anchor exists and any such anchor arm $a \in [k]$ has a true mean $\mu_a \ge 3\mustar/4$.
\end{lemma}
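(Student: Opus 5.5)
The argument is a direct computation on the good event $\calE$, in two steps: lower-bound the optimal arm's lower confidence bound, then transfer that bound to whichever arm attains the maximum. Fix a round $t$ and let $i_\star$ be an optimal arm with $N_{i_\star}(t)\ge n_\star$.

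\textbf{Step 1: the optimal arm's confidence radius is small.} Since $c(n)$ is decreasing in $n$ and $n_\star\ge 128\sigma^2L/\mustar^2$,
\[
c(N_{i_\star}(t))\le \sigma\sqrt{2L/n_\star}\le \sigma\sqrt{\frac{2L\mustar^2}{128\sigma^2L}}=\frac{\mustar}{8}.
\]

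\textbf{Step 2: the optimal arm certifies a positive margin.} On $\calE$ we have $\muhat_{i_\star}(t)\ge \mustar - c(N_{i_\star}(t))$. Hence its lower confidence bound satisfies
\[
L_{i_\star}(t)=\muhat_{i_\star}(t)-c(N_{i_\star}(t))\ge \mustar-2c(N_{i_\star}(t))\ge \tfrac34\mustar.
\]
It follows that $B_t=\max_i L_i(t)\ge \tfrac34\mustar>0$, because $\mustar>0$ by the standing assumption of this section. So the anchor condition $B_t>0$ holds and an anchor exists.

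\textbf{Step 3: the anchor inherits the margin.} Let $a$ be any maximizer of $L_i(t)$. Arm $a$ must have been pulled at least once, since $c(0)=+\infty$ gives $L_a=-\infty$ for an unpulled arm. Therefore $\calE$ applies to the sample count $N_a(t)\in[T]$ and gives $\muhat_a(t)-c(N_a(t))\le \mu_a$. Combining,
\[
\mu_a\ge L_a(t)=B_t\ge L_{i_\star}(t)\ge \tfrac34\mustar.
\]

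The only point needing care is that $\calE$ is stated per sample count $n\in[T]$, so we must identify $\muhat_i(t)$ with $\muhat_{i,N_i(t)}$ and use $N_i(t)\ge 1$ for both arms. For $i_\star$ this holds because $n_\star\ge1$. For $a$ it is the unpulled-arm remark in Step 3. The argument does not depend on the round at which the anchor is read: it applies at any time, including the auxiliary-slot evaluation $B_{t-1}$.
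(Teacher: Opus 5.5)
Your proof is correct and follows essentially the same route as the paper: bound the optimal arm's confidence radius by $\mustar/8$, deduce $B\ge \tfrac34\mustar>0$, and transfer this to any maximizer through the validity of its lower confidence bound on $\calE$. Your extra remark fills a point the paper leaves implicit: a maximizer must satisfy $N_a\ge 1$ (because $c(0)=+\infty$), so $\calE$, which is stated for sample counts in $[T]$, genuinely applies to it.
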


\begin{proof}

Assume current round is $t$ and $N_{i_\star}(t-1) \ge n_\star$. Because $n_\star \ge 128\sigma^2L/\mustar^2$, we have
\[
    c(N_{i_\star}(t-1)) \le \sigma\sqrt{\frac{2L}{128\sigma^2L/\mustar^2}} = \frac{\mustar}{8}.
\]
Under $\calE$, the empirical mean satisfies $\muhat_{i_\star}(t-1) \ge \mustar - c(N_{i_\star}(t-1))$. The lower confidence bound therefore yields
\[
    L_{i_\star}(t-1) := \muhat_{i_\star}(t-1) - c(N_{i_\star}(t-1)) \ge \mustar - 2c(N_{i_\star}(t-1)) \ge \mustar - \frac{\mustar}{4} = \frac{3}{4}\mustar.
\]
Recall that an anchor exists whenever the global safety threshold satisfies $B_{t-1} := \max_i L_i(t-1) > 0$. By definition of the maximum,
\[
    B_{t-1} \ge L_{i_\star}(t-1) \ge \frac{3}{4}\mustar > 0,
\]
guaranteeing the anchor. For any selected anchor arm $a \in \argmax_i L_i(t-1)$, the good event $\calE$ implies $\mu_a \ge \muhat_a(t-1) - c(N_a(t-1)) = L_a(t-1)$. We conclude that
\[
    \mu_a \ge L_a(t-1) = B_{t-1} \ge \frac{3}{4}\mustar. \qedhere
\]
\end{proof}

Set $S = \lceil H_kn_\star \rceil$. We define an expanding exploration frontier $a_b$ over $\pi$ that guarantees the sampling threshold $n_\star$ to all arms with ranks $r < a_b$.

\begin{lemma}[Expanding frontier]\label{lem:critical-rank}
For any block $b \ge 1$, define the frontier
\[
    a_b := \min\left(k, \left\lfloor\frac{b-1}{8 S}\right\rfloor\right).
\]
By the start of block $b$, every arm assigned a rank $r \le a_b$ is guaranteed at least $n_\star$ total pulls. Furthermore, for all blocks $b \ge 8kS + 1$, the frontier strictly saturates at $a_b = k$, guaranteeing that all $k$ arms have achieved this sampling threshold.
\end{lemma}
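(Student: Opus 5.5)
The plan is to read the expanding-frontier guarantee directly off the Prefix Balance Lemma (\Cref{lem:prefix-balance}), using only scheduled-slot pulls. Auxiliary pulls, and the anchorless fallback that repeats the scheduled arm, can only increase counts, so we may ignore them. By the start of block $b$, the blocks $1,\dots,b-1$ have completed. Each such block gives one scheduled pull to arm $\pi_{r_s}$. Since $\pi$ is a bijection, the arm of rank $r$ has at least $N_r^{\rm sch}(b-1)$ pulls. Here we implicitly assume Phase I is still running at block $b$; if it has stopped, the statement is read as a statement about the preparation schedule, or the same counts persist.

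First, handle the trivial case $a_b=0$, where there is nothing to show. Otherwise $1\le a_b\le k$ and $b-1\ge 8S a_b$. Fix any rank $r\le a_b$. \Cref{lem:prefix-balance} gives
\[
N_r^{\rm sch}(b-1)\ \ge\ \frac{b-1}{rH_k}-2\ \ge\ \frac{8S a_b}{rH_k}-2\ \ge\ \frac{8S}{H_k}-2 .
\]
Since $S=\lceil H_k n_\star\rceil\ge H_k n_\star$, the right-hand side is at least $8n_\star-2\ge n_\star$, because $n_\star\ge1$. This proves the first claim.

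For saturation, take $b\ge 8kS+1$. Then $\lfloor (b-1)/(8S)\rfloor\ge k$, so $a_b=k$. Applying the first part with $a_b=k$ covers every rank $r\in[k]$, hence every arm.

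The only delicate points are bookkeeping rather than analysis. One is the indexing: the lemma counts pulls up to and including block $b$, while we need counts at the start of block $b$, so we apply it at $b-1$. The other is the $-2$ slack in \Cref{lem:prefix-balance}, which is exactly what the factor $8$ in the frontier definition absorbs. If the Prefix Balance Lemma is only stated for $b\ge1$, the case $b-1=0$ falls under $a_b=0$ anyway.
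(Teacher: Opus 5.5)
Your proposal is correct and follows the paper's proof essentially step for step: apply the Prefix Balance Lemma at $b-1$, use $b-1\ge 8Sa_b$, $r\le a_b$ and $S\ge H_k n_\star$ to get $8n_\star-2\ge n_\star$, then read saturation directly off the floor in the definition of $a_b$. Your caveat that the guarantee concerns blocks actually run in Phase I is a fair point that the paper leaves implicit. However, your fallback ``the same counts persist'' would not justify the claim for blocks after $\tau$, because the stopping rule only forces about $32\sigma^2L/\mu_\star^2\approx n_\star/4$ pulls per arm.
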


\begin{proof}
Assume $a_b \ge 1$. Consider any arm with rank $r \le a_b$. By \Cref{lem:prefix-balance}, the scheduled pulls accrued by this rank prior to block $b$ satisfy
\[
    N_{r}^{\rm sch}(b-1) \ge \frac{b-1}{r H_k} - 2 \ge \frac{b-1}{a_b H_k} - 2.
\]
Substituting $S = \lceil H_k n_\star \rceil$ into the frontier definition yields $a_b \le (b-1)/(8S)$, which rearranges to $(b-1)/(a_b H_k) \ge 8n_\star$. Because total pulls dominate scheduled pulls, the samples accumulated up to any round $t$ in block $b$ satisfy
\[
    N(t) \ge N_r^{\rm sch}(b-1) \ge 8n_\star - 2 \ge n_\star.
\]
For the saturation guarantee, consider any block $b \ge 8kS + 1$. Substituting the lower bound $b - 1 \ge 8kS$ into the definition of $a_b$ yields
\[
    a_b \ge \min\left(k, \left\lfloor\frac{8kS}{8S}\right\rfloor\right) = k.
\]
Since $a_b \le k$ by definition, this enforces $a_b = k$, ensuring the sampling guarantee applies to all $k$ arms.
\end{proof}

Together, \Cref{lem:certification} and \Cref{lem:critical-rank} govern the discovery of the anchor: it is established early if $R_\star \le a_b$, and guaranteed in any case by block $8kS+1$, under $\calE$. We now use this to lower bound the expected reward $m_t$ for any round $t$.

Recall that the stopping condition for Phase I is evaluated exclusively at the end of each two-round block (i.e., at even rounds). Specifically, Phase I terminates at the first even round $t$ where the anchor is secured and the sample counts satisfy:
\[
    B_t > 0 \quad \text{and} \quad \min_{i \in [k]} N_i(t) \ge \frac{32 \sigma^2 L}{B_t^2}.
\]
Let $\tau \in \{2, 4, 6, \dots\}$ denote this random stopping time in rounds. We now prove that this empirical threshold guarantees strict safety for Phase II.

\begin{lemma}[Phase II safety]\label{lem:phase2-safe}
On the good event $\mathcal{E}$, if Phase I terminates at even round $\tau$, the following hold:
\begin{enumerate}
    \item $B_\tau \ge 2\mustar/3$. \label{claim:cl1}
    \item For all $i \in [k]$, $N_i(\tau) \ge 32\sigma^2L/\mustar^2$. \label{claim:cl2}
    \item In every Phase II round $t > \tau$, the UCB-selected arm satisfies $\mu_i \ge \mustar/2$. \label{claim:cl3}
\end{enumerate}
\end{lemma}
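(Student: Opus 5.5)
The plan is to work throughout on $\calE$ and combine two facts: the stopping condition at $\tau$, and the validity of every confidence interval.

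\textbf{Step 1: a priori bounds on $B_\tau$.} First I pin down the range of $B_\tau$. The stopping rule gives $B_\tau>0$. On $\calE$, every lower confidence bound satisfies $L_i(\tau)=\muhat_i(\tau)-c(N_i(\tau))\le\mu_i\le\mustar$, so $0<B_\tau\le\mustar$.

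\textbf{Step 2: Claim~\ref{claim:cl2}.} This follows at once from Step 1. Because $B_\tau\le\mustar$, we have $1/B_\tau^2\ge 1/\mustar^2$. The stopping condition then gives, for every $i\in[k]$,
\[
N_i(\tau)\ \ge\ \frac{32\sigma^2L}{B_\tau^2}\ \ge\ \frac{32\sigma^2L}{\mustar^2}.
\]

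\textbf{Step 3: Claim~\ref{claim:cl1}.} This is the one mildly nontrivial step, and it is a self-referential (bootstrapping) argument. From the stopping condition,
\[
c(N_i(\tau))\ \le\ \sigma\sqrt{\frac{2L\,B_\tau^2}{32\sigma^2L}}\ =\ \frac{B_\tau}{4}\qquad\text{for every } i\in[k].
\]
Apply this to the optimal arm $i_\star$ and use $\calE$:
\[
B_\tau\ \ge\ L_{i_\star}(\tau)\ \ge\ \mustar-2c(N_{i_\star}(\tau))\ \ge\ \mustar-\frac{B_\tau}{2}.
\]
Rearranging gives $\tfrac32 B_\tau\ge\mustar$, that is, $B_\tau\ge 2\mustar/3$. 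The main point to check is that the radius bound is expressed in terms of $B_\tau$ itself rather than $\mustar$; that is exactly what lets the inequality close on itself.

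\textbf{Step 4: Claim~\ref{claim:cl3}.} Take any Phase~II round $t>\tau$. Pull counts never decrease, so $N_i(t-1)\ge N_i(\tau)$. Since $c$ is decreasing, Step 3 and $B_\tau\le\mustar$ give
\[
c(N_i(t-1))\ \le\ \frac{B_\tau}{4}\ \le\ \frac{\mustar}{4}\qquad\text{for all } i.
\]
Let $i$ be the arm selected by UCB. Its index dominates that of $i_\star$, and on $\calE$ the index of $i_\star$ is at least $\mustar$:
\[
\muhat_i(t-1)+c(N_i(t-1))\ \ge\ \muhat_{i_\star}(t-1)+c(N_{i_\star}(t-1))\ \ge\ \mustar.
\]
Applying $\calE$ once more to arm $i$,
\[
\mu_i\ \ge\ \muhat_i(t-1)-c(N_i(t-1))\ =\ \bigl(\muhat_i(t-1)+c(N_i(t-1))\bigr)-2c(N_i(t-1))\ \ge\ \mustar-\frac{\mustar}{2}\ =\ \frac{\mustar}{2}.
\]

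The only subtlety is Step 3's bootstrapping; the rest is direct bookkeeping.
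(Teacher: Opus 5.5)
Your proposal is correct and follows the paper's proof essentially step for step: the self-referential bound $c(N_{i_\star}(\tau)) \le B_\tau/4$ gives Claim 1, the fact that $B_\tau \le \mu_\star$ on $\mathcal{E}$ gives Claim 2, and the UCB sandwich $\mu_\star - \mu_i \le 2c(N_i(t-1)) \le \mu_\star/2$ gives Claim 3. The only difference is cosmetic: you establish $0 < B_\tau \le \mu_\star$ up front and prove the claims in a different order.
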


\begin{proof}
Let $\tau$ denote the stopping round. On $\mathcal{E}$, the stopping condition $N_{i_\star}(\tau) \ge 32\sigma^2L/B_\tau^2$ bounds the optimal arm's confidence radius:
\[
    c(N_{i_\star}(\tau)) \le \sigma\sqrt{\frac{2L}{32\sigma^2L/B_\tau^2}} = \frac{B_\tau}{4}.
\]
Under $\mathcal{E}$, the empirical mean satisfies $\hat{\mu}_{i_\star}(\tau) \ge \mustar - c(N_{i_\star}(\tau))$. The global safety threshold $B_\tau := \max_j L_j(\tau)$ therefore satisfies:
\[
    B_\tau \ge L_{i_\star}(\tau) = \hat{\mu}_{i_\star}(\tau) - c(N_{i_\star}(\tau)) \ge \mustar - 2c(N_{i_\star}(\tau)) \ge \mustar - \frac{B_\tau}{2}.
\]
Rearranging yields $\frac{3}{2}B_\tau \ge \mustar$, establishing (\ref{claim:cl1}).

Because no lower confidence bound can exceed its true mean on $\mathcal{E}$, we have $B_\tau \le \mustar$. The stopping condition thus enforces a universal minimum sample count for all $i \in [k]$:
\[
    N_i(\tau) \ge \frac{32\sigma^2L}{B_\tau^2} \ge \frac{32\sigma^2L}{\mustar^2},
\]
establishing (\ref{claim:cl2}). Consequently, the confidence radius for every arm $i$ (sampled $N_i$ times) throughout Phase II is bounded as $c(N_i) \le \mustar/4$.

For (\ref{claim:cl3}), let $i$ be the arm selected by the UCB index in any Phase II round $t > \tau$. On $\mathcal{E}$, its index dominates that of the optimal arm:
\[
    \hat{\mu}_i(t-1) + c(N_i(t-1)) \ge \hat{\mu}_{i_\star}(t-1) + c(N_{i_\star}(t-1)) \ge \mustar.
\]
Applying the lower confidence bound $\mu_i \ge \hat{\mu}_i(t-1) - c(N_i(t-1))$ yields
\begin{equation}
    \mustar - \mu_i \;\le\; 2\,c(N_i(t-1)). \label{eq:ucb-sandwich}
\end{equation}
Since $c(N_i(t-1)) \le \mustar/4$ by (\ref{claim:cl2}), \eqref{eq:ucb-sandwich} gives
\[
    \mu_i \;\ge\; \mustar - 2\left(\frac{\mustar}{4}\right) \;=\; \frac{\mustar}{2}. \qedhere
\]
\end{proof}

\addtocounter{equation}{-1}

\begin{lemma}[Ex-ante reward lower bound]\label{lem:ex-ante-reward}
There exists a universal constant $c_0 > 0$ such that for every round $t \le T$, with $b := \lceil t/2 \rceil$,
\[
    m_t := \E[\mu_{I_t}] \ge c_0 \mustar \max\left( \frac{1}{k}, \frac{a_b}{k} \right).
\]
\end{lemma}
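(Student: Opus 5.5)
The plan is to lower bound $m_t=\E[\mu_{I_t}]$ by restricting the expectation to a few disjoint favourable events. All means are non-negative, so discarding the remaining events only decreases the expectation. Fix $t$ and $b=\lceil t/2\rceil$, and let $P_b:=\{\tau>2(b-1)\}$ be the event that Phase I is still running when block $b$ begins.

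\textbf{Independence structure.} I would use the standard ``reward table'' model: for each arm $i$ there is an i.i.d.\ sequence $(X_{i,n})_{n\ge1}$, and the $n$-th pull of arm $i$ returns $X_{i,n}$. Then $\muhat_{i,n}$ is a function of this table alone. Hence $\calE$ is independent of the permutation $\pi$ and of the coins $(\theta_b)$. Moreover $P_b$ is determined by $\pi$, the table and $\theta_1,\dots,\theta_{b-1}$, so it is independent of $\theta_b$. This is exactly the role of the block-boundary stopping rule in Remark (i). Consequently, conditionally on $(\pi,\text{table},\theta_{<b})$, round $t$ is the scheduled slot with probability $1/2$ and the auxiliary slot with probability $1/2$.

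\textbf{Baseline term $\mu_\star/k$.} Let $A:=\{\pi_{r_b}=i_\star\}$, so that $\Pbb(A)=1/k$. I would split $A\cap\calE$ into $A\cap\calE\cap P_b$ and $A\cap\calE\cap P_b^c$.
\begin{itemize}
\item On $A\cap\calE\cap P_b$: whenever round $t$ is the scheduled slot, the arm pulled is $\pi_{r_b}=i_\star$. This has conditional probability $1/2$ and contributes $\mu_\star$.
\item On $A\cap\calE\cap P_b^c$: round $t$ lies in Phase II, and \Cref{lem:phase2-safe}(3) gives $\mu_{I_t}\ge\mu_\star/2$.
\end{itemize}
Combining the two pieces,
\[
m_t\ \ge\ \tfrac{\mu_\star}{2}\,\Pbb(A\cap\calE\cap P_b)+\tfrac{\mu_\star}{2}\,\Pbb(A\cap\calE\cap P_b^c)\ =\ \tfrac{\mu_\star}{2}\,\Pbb(A)\,\Pbb(\calE)\ \ge\ \tfrac{3\mu_\star}{8k},
\]
using the independence of $A$ and $\calE$ together with \Cref{lem:upper-good}.

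\textbf{Frontier term $\mu_\star a_b/k$.} Let $A':=\{R_\star\le a_b\}$, where $R_\star=\pi^{-1}(i_\star)$ is uniform on $[k]$, so that $\Pbb(A')=a_b/k$. I would again split according to $P_b$.
\begin{itemize}
\item On $A'\cap\calE\cap P_b$: \Cref{lem:critical-rank} says $i_\star$ has at least $n_\star$ pulls before block $b$. Counts only grow, so this still holds when the auxiliary slot reads $B_{t-1}$, even if the scheduled pull came first in the block. \Cref{lem:certification} then shows that an anchor exists and every maximiser of the LCB has mean at least $3\mu_\star/4$. The auxiliary slot pulls such an arm with conditional probability $1/2$, contributing $\tfrac12\cdot\tfrac34\mu_\star$.
\item On $A'\cap\calE\cap P_b^c$: \Cref{lem:phase2-safe} again gives $\mu_{I_t}\ge\mu_\star/2$.
\end{itemize}
Together these give $m_t\ge\tfrac{3\mu_\star}{8}\cdot\tfrac{a_b}{k}\cdot\tfrac34$.

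Taking the larger of the two bounds yields the statement with, for example, $c_0=9/32$. The claimed $c_0=1/4$ would follow from the same argument with slightly more careful constants.

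\textbf{Expected main obstacle.} The delicate step is the independence bookkeeping. One must check three things: $\calE$ is genuinely independent of $\pi$, which requires the reward-table coupling rather than round-indexed rewards; the continuation event $P_b$ does not bias the within-block coin $\theta_b$; and the Phase II bound of \Cref{lem:phase2-safe} applies uniformly to every round after $\tau$, including the case where $\tau$ falls mid-horizon. I would also check two edge cases separately. When $t>T$ truncation occurs, it is irrelevant because only rounds $t\le T$ are considered. When $a_b=0$, only the baseline term is used.
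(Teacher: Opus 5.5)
Your proposal is correct and is essentially the paper's proof: the same reward-table coupling for independence, the same two events $\calE\cap\{\pi_{r_b}=i_\star\}$ and $\calE\cap\{R_\star\le a_b\}$, the same Phase~I/Phase~II split justified by the block-boundary stopping rule, and the same resulting bounds $3\mu_\star/(8k)$ and $9a_b\mu_\star/(32k)$. One cosmetic slip: since $9/32>1/4$, the paper's choice $c_0=1/4$ follows immediately from your constant, so no more careful constants are needed.
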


\begin{proof}

Let $R_\star \in [k]$ denote the rank of the optimal arm under the initial permutation $\pi$. We fix the following probability space: for each arm $i \in [k]$, draw an i.i.d.\ sequence of rewards $X_{i,1}, X_{i,2}, \dots$ from $\nu_i$ in advance, so that the $n$-th time arm $i$ is pulled, the algorithm simply reveals $X_{i,n}$. Note that these reward sequences, the permutation $\pi$, and the block coins $(\theta_b)_{b\ge1}$ are all drawn independently of one another. Three consequences follow immediately, and are all we need:
\begin{itemize}
    \item Since every empirical mean $\hat\mu_{i,n}$, and hence the good event $\calE$, depends only on the reward sequences, $\calE$ is independent of $\pi$ and of every $\theta_b$.
    \item Since $R_\star$ depends only on $\pi$, any event defined through $R_\star$ (in particular $\{R_\star = r_b\}$ and $\{R_\star \le a_b\}$, used below) is independent of the rewards and of every $\theta_b$.
    \item Combining the two, $\calE$ and any event defined through $R_\star$ are independent of each other, since the first depends only on rewards and the second only on $\pi$.
\end{itemize}

Let $\tau$ denote the stopping round at which the algorithm transitions from Phase I to Phase II, and let $\mathcal{H}_{2b-2}$ denote the history generated by $\pi$, the rewards revealed up to round $2b-2$, and $\theta_1,\dots,\theta_{b-1}$. Because Phase I termination~\eqref{eq:stop-condition} is evaluated exclusively at block boundaries, the indicator $\{t \le \tau\}$ is $\mathcal{H}_{2b-2}$-measurable, and since $\theta_b$ is independent of everything $\mathcal{H}_{2b-2}$ depends on, it is independent of $\{t\le\tau\}$ too. The independence of $\theta_b$ from $\calE$ and from $\{R_\star = r_b\}$ was already noted above and does not rely on $\mathcal{H}_{2b-2}$ at all, since both events can depend on rewards beyond round $2b-2$.

At the onset of each Phase I block, the coin $\theta_b \sim \Bern(1/2)$ determines the order of the slots. Since $\theta_b$ is independent of $\calE$, $\{R_\star=r_b\}$, $\{R_\star\le a_b\}$, and $\{t\le\tau\}$, the marginal probability that any Phase I round $t\in\{2b-1,2b\}$ acts as the scheduled slot is exactly $1/2$ ; and remains $1/2$ even after conditioning on any of these events, or their intersections. The same holds for the auxiliary slot.

During Phase I ($t \le \tau$), the algorithm's action at round $t$ falls into one of three disjoint cases:
\begin{enumerate}
    \item Round $t$ is the scheduled slot. The algorithm deterministically pulls $\pi_{r_b}$.
    \item Round $t$ is the auxiliary slot and $B_{t-1} > 0$. The algorithm pulls an anchor arm.
    \item Round $t$ is the auxiliary slot and $B_{t-1} \le 0$. The algorithm defaults to pulling arm $\pi_{r_b}$.
\end{enumerate}

We lower-bound the expected reward $m_t$ by evaluating the algorithm under two sub-events of $\calE$. Assuming $\delta \le 1$, we have $\Pbb(\calE) \ge 1 - \delta/4 \ge 3/4$. We define the events $A_t$ and $F_t$ as follows:
\[
    A_t := \calE \cap \{R_\star = r_b\} \quad \text{and} \quad F_t := \calE \cap \{R_\star \le a_b\}.
\]
Note that we define $b := \lceil t/2 \rceil$ globally. This and the infinitude of our \textit{harmonic schedule} preserves the event definitions across both phases, despite the algorithm abandoning the block structure in Phase II.

By the independence noted above, their probabilities satisfy:
\begin{align*}
    \Pbb(A_t) &= \Pbb(\calE)\Pbb(R_\star = r_b) \ge \left(\frac{3}{4}\right)\frac{1}{k} = \frac{3}{4k}, \\
    \Pbb(F_t) &= \Pbb(\calE)\Pbb(R_\star \le a_b) \ge \left(\frac{3}{4}\right)\frac{a_b}{k} = \frac{3a_b}{4k}.
\end{align*}

Since $A_t, F_t \subseteq \calE$, Phase II's safety guarantee (\Cref{lem:phase2-safe}), which relies exclusively on $\calE$, applies unchanged. Thus, for the UCB regime ($t > \tau$), both $\E[\mu_{I_t} \mid A_t, t > \tau]$ and $\E[\mu_{I_t} \mid F_t, t > \tau]$ are lower-bounded by $\mustar/2$. As $\mu_i \ge 0$, we bound unfavorable Phase I cases strictly by zero.

First, we condition on $A_t$. For $t \le \tau$ (Phase I), we isolate Case 1. With probability $1/2$, round $t \in \{2b-1, 2b\}$ acts as the scheduled slot, forcing $I_t = i_\star$ under $A_t$. Bounding Cases 2 and 3 by zero yields:
\begin{align*}
    \E[\mu_{I_t} \mid A_t, t \le \tau] 
    &\ge \Pbb(\text{Case 1}) (\mustar) + \Pbb(\text{Cases 2 \& 3})(0) \\
    &= \frac{1}{2}(\mustar) = \frac{\mustar}{2}.
\end{align*}
Because the conditional expectation is at least $\mustar/2$ in both phases, we get $\E[\mu_{I_t} \mid A_t] \ge \mustar/2$ irrespective of the stopping time $\tau$. Removing the conditioning provides the first bound:
\[
    m_t \ge \E[\mu_{I_t} \mid A_t]\Pbb(A_t) \ge \frac{\mustar}{2}\left(\frac{3}{4k}\right) = \frac{3\mustar}{8k}.
\]

Next, we condition on $F_t$. For $t \le \tau$, we isolate Case 2. Under $F_t$, the constraint $R_\star \le a_b$ guarantees the optimal arm has received $n_\star$ samples prior to block $b$ (\Cref{lem:critical-rank}). Under $\calE$, this triggers the anchor $B_t \ge 3\mustar/4$ (\Cref{lem:certification}) and strictly eliminates Case 3. With probability $1/2$, round $t \in \{2b-1, 2b\}$ acts as the \textit{Auxiliary Slot} (Case 2) and pulls the anchor. Bounding Case 1 by zero yields:
\begin{align*}
    \E[\mu_{I_t} \mid F_t, t \le \tau] 
    &\ge \Pbb(\text{Case 1})(0) + \Pbb(\text{Case 2})\left(\frac{3\mustar}{4}\right) \\
    &= \frac{1}{2}\left(\frac{3\mustar}{4}\right) = \frac{3\mustar}{8}.
\end{align*}
Again, since Phase II guarantees $\mustar/2 \ge 3\mustar/8$, this lower bound holds irrespective of the stopping time $\tau$, yielding $\E[\mu_{I_t} \mid F_t] \ge 3\mustar/8$. Removing the conditioning provides the second bound:
\[
    m_t \ge \E[\mu_{I_t} \mid F_t]\Pbb(F_t) \ge \frac{3\mustar}{8}\left(\frac{3a_b}{4k}\right) = \frac{9a_b\mustar}{32k}.
\]

Combining the two bounds, the ex-ante reward satisfies:
\[
    m_t \ge \max\left(\frac{12\mustar}{32k}, \frac{9a_b\mustar}{32k}\right) \ge \frac{9}{32} \mustar \max\left(\frac{1}{k}, \frac{a_b}{k}\right).
\]
Setting $c_0 = 1/4 \le 9/32$ completes the proof.
\end{proof}

We now bound the maximum duration of Phase I and then derive a cap on its penalty.

\begin{lemma}[Deterministic preparation window]\label{lem:prep-ends}
On the good event $\mathcal{E}$, the random stopping time $\tau$ of Phase I is bounded as
\[
    \tau \le T_0 := 18 k S,
\]
where $S := \lceil H_k n_\star \rceil$.
\end{lemma}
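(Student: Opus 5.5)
The plan is to show that on $\calE$ the stopping condition \eqref{eq:stop-condition} is already satisfied at a specific even round no later than $T_0$. That round is the block boundary $t_0 := 2b_0$, where $b_0 := 8kS+1$. Since $\tau$ is the \emph{first} even round at which the condition holds, this immediately gives $\tau \le t_0 = 16kS+2 \le 18kS = T_0$, using $kS\ge 1$. If Phase I has already terminated before round $t_0$, the claim is trivial, so assume the algorithm is still in Phase I throughout blocks $1,\dots,b_0$. (If $T<t_0$, the horizon cuts the loop earlier and again $\tau\le T\le T_0$ in the relevant sense; I will note this convention.)

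\textbf{Step 1: all arms have $n_\star$ samples.} By \Cref{lem:critical-rank}, every block $b\ge 8kS+1$ has frontier $a_b=k$. Hence by the start of block $b_0$, and therefore also at round $t_0=2b_0$ since counts only increase, every arm, of every rank, has at least $n_\star$ pulls. This is purely deterministic: it uses only \Cref{lem:prefix-balance} through \Cref{lem:critical-rank}, because scheduled pulls happen in every Phase I block regardless of the anchor status.

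\textbf{Step 2: anchor certification with a large margin.} In particular the optimal arm $i_\star$ has $N_{i_\star}(t_0)\ge n_\star$. Arguing as in \Cref{lem:certification} on $\calE$, we get $c(N_{i_\star}(t_0))\le \mustar/8$. Therefore $B_{t_0}\ge L_{i_\star}(t_0)\ge \mustar-2c\ge 3\mustar/4>0$, and the first stopping condition holds.

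\textbf{Step 3: sample threshold.} Using $B_{t_0}\ge 3\mustar/4$,
\[
\frac{32\sigma^2L}{B_{t_0}^2}\le \frac{32\cdot 16}{9}\frac{\sigma^2L}{\mustar^2}=\frac{512}{9}\frac{\sigma^2L}{\mustar^2}\le 128\frac{\sigma^2L}{\mustar^2}\le n_\star\le \min_i N_i(t_0).
\]
So the second stopping condition also holds at the even round $t_0$, and $\tau\le t_0\le T_0$.

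The main subtlety I expect is bookkeeping rather than an estimate. One issue is whether the index of $\tau$ means ``round at end of block'' or ``start of next block'': the counts in \Cref{lem:critical-rank} are guaranteed at the start of block $b_0$, i.e.\ at round $2b_0-2$, so one could even use $t_0=2(8kS)$. Either choice gives $\tau\le 16kS+2\le 18kS$, and the slack factor $18$ absorbs this off-by-one. The other issue is confirming that the stopping check evaluates $B_t$ with counts that include every pull up to that even round, which matches the algorithm's while-loop test.
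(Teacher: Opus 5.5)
Your proposal is correct and follows the paper's proof step for step. The saturated frontier of \Cref{lem:critical-rank} at block $b_0=8kS+1$ gives every arm $n_\star$ samples, \Cref{lem:certification} on $\calE$ gives $B_{2b_0}\ge 3\mustar/4$, the check $32\sigma^2L/B_{2b_0}^2\le \tfrac{512}{9}\sigma^2L/\mustar^2\le n_\star$ closes the stopping condition, and $\tau\le 16kS+2\le 18kS$ follows. Your side remarks (trivial earlier termination, optionally using round $16kS$) are consistent refinements. The $T<t_0$ case you flag is one the paper also leaves implicit, and it is harmless because the bound is only needed downstream when $T>T_0$.
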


\begin{proof}
By \Cref{lem:critical-rank}, at the end of block $b_0 := 8kS + 1$ (corresponding to round $2b_0$), the \textit{harmonic schedule} guarantees at least $n_\star$ total pulls to every arm $i \in [k]$. Consequently, the optimal arm satisfies $N_{i_\star}(2b_0) \ge n_\star$. On event $\mathcal{E}$, \Cref{lem:certification} thus ensures $B_{2b_0} \ge 3\mustar/4 > 0$.

As $n_\star := \lceil 128\sigma^2L/\mustar^2 \rceil$, we therefore have
\[
    \frac{32\sigma^2L}{B_{2b_0}^2} \le \frac{32\sigma^2L}{(3\mustar/4)^2} = \frac{512}{9}\frac{\sigma^2L}{\mustar^2} \le n_\star.
\]
Since $n_\star \le N_i(2b_0)$ for all $i \in [k]$, the stopping condition \eqref{eq:stop-condition} holds. Because $2b_0$ is an even round, the algorithm explicitly evaluates the condition here, forcing Phase I to end at $\tau$ where
\[
    \tau \le 2b_0 = 16kS + 2 \le 18kS =: T_0,
\]
and the final inequality holds since $k, S \ge 1$.
\end{proof}

\begin{lemma}[Preparation inverse penalty]\label{lem:prep-penalty}
There exists a constant $C_q > 0$ such that the total inverse penalty up to the deterministic horizon $T_0$ is bounded as
\[
    \sum_{t=1}^{T \wedge T_0} \left[ \left(\frac{\mu_*}{m_t}\right)^q - 1 \right] < C_q S K_q(k), 
\]
where $C_q := 34 \cdot 4^q$, $S := \lceil H_k n_\star \rceil$, and $K_q(k) := \sum_{r=1}^k \left(\frac{k}{r}\right)^q$.
\end{lemma}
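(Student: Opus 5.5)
We will bound the sum term by term using the reward floor of \Cref{lem:ex-ante-reward}, and then group the rounds by block and by frontier value $a_b$. For a round $t$ in block $b=\lceil t/2\rceil$, \Cref{lem:ex-ante-reward} with $c_0=1/4$ gives $m_t\ge \tfrac14\mustar\max(1/k,a_b/k)$. Hence
\[
\Bigl(\frac{\mustar}{m_t}\Bigr)^q-1<\Bigl(\frac{\mustar}{m_t}\Bigr)^q\le 4^q\min\Bigl(k,\frac{k}{a_b}\Bigr)^q ,
\]
with the convention $k/0=\infty$. Since each block contains two rounds, the sum over $t\le T\wedge T_0$ is at most $2\cdot 4^q\sum_{b=1}^{9kS}\min(k,k/a_b)^q$. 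Here we use $T_0=18kS$, which gives at most $9kS$ blocks, and we drop the truncation at $T$ because all terms are nonnegative. The strict inequality in the statement comes from discarding the $-1$.

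Next we partition the blocks $b\le 9kS$ according to $a_b=\min(k,\lfloor (b-1)/(8S)\rfloor)$.
\begin{itemize}
\item For $a_b=0$ we have $b\le 8S$, giving $8S$ blocks, each contributing at most $k^q$.
\item For $1\le r\le k-1$, the set $\{b: a_b=r\}=\{b:8rS<b-1+1\le 8(r+1)S\}$ has exactly $8S$ elements, each contributing at most $(k/r)^q$.
\item For $a_b=k$, the blocks $8kS+1\le b\le 9kS$ number at most $kS$, each contributing at most $1$.
\end{itemize}
Summing, the block sum is at most
\[
8S\,k^q+8S\sum_{r=1}^{k-1}(k/r)^q+kS .
\]
Now $k^q$ is the $r=1$ term of $K_q(k)$, and $k\le K_q(k)$ because every term of $K_q(k)$ is at least $1$. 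The total is therefore at most $(8+8+1)S\,K_q(k)=17\,S\,K_q(k)$. Multiplying by $2\cdot 4^q$ gives $34\cdot 4^q\,S\,K_q(k)$, which matches the constant $C_q$ in the statement.

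The only delicate point is the first step. \Cref{lem:ex-ante-reward} applies to every round $t\le T$ regardless of the phase, so no conditioning on $\tau$ is needed and the deterministic window $T_0$ does all the work. The remaining work is careful counting of the block classes so that the constants match exactly, in particular confirming that the saturated blocks number at most $kS$ rather than $kS+1$. If the constant turns out slightly off at the boundaries, we can absorb the difference using the slack $k^q\le K_q(k)$ already used in the $a_b=0$ class.
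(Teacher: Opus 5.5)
Your proposal is correct and follows essentially the same argument as the paper: apply the floor $m_t \ge \tfrac14\mustar\max(1/k,a_b/k)$ from \Cref{lem:ex-ante-reward}, pass to at most $9kS$ two-round blocks, partition by the frontier value $a_b$ into classes of sizes $8S$, $8S$ (one for each $r\le k-1$), and $kS$, and absorb each class into $K_q(k)$ to obtain exactly $34\cdot 4^q S K_q(k)$. Applying the per-block bound before extending the sum to all $9kS$ blocks is slightly cleaner than the paper's block-sum notation, and your closing hedge is unnecessary because the block counts are exact.
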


\begin{proof}
By \Cref{lem:prep-ends}, Phase I ends by round $T_0 = 18kS$. Since each block spans two rounds, the timeline $t \le T \wedge T_0$ covers at most $9kS$ blocks. Converting the round-wise summation to a block-wise summation, we get
\[
    \sum_{t=1}^{T \wedge T_0} \left[ \left(\frac{\mu_*}{m_t}\right)^q - 1 \right]
    \le 2 \sum_{b=1}^{9kS} \left[ \left(\frac{\mu_*}{m_t}\right)^q - 1 \right]
    < 2 \sum_{b=1}^{9kS} \left(\frac{\mu_*}{m_t}\right)^q,
\]
where we relax the $-1$ term-wise in the final inequality. This switch is allowed, since
by \Cref{lem:ex-ante-reward}, the ex-ante reward satisfies $m_t \ge \frac{1}{4}\mu_* \max(1/k, a_b/k)$ for any round $t$ in block $b$, where $a_b = \min(k, \lfloor \frac{b-1}{8S} \rfloor)$. This implies:
\[
    \left(\frac{\mu_*}{m_t}\right)^q \le 4^q \max\left(\frac{1}{k}, \frac{a_b}{k}\right)^{-q}.
\]

For $b \in [1, 8S]$, we have $a_b = 0$, giving $\max(1/k, 0) = 1/k$. There are $8S$ such blocks.\\
For each $r \in \{1, 2, \dots, k-1\}$, we have $a_b = r$ over the $8S$ blocks in the interval $b \in [8rS + 1, 8(r+1)S]$.\\
For $b \in [8kS + 1, 9kS]$, the frontier saturates at $k$. This final interval contains $kS$ blocks.

Bounding the total penalty over these intervals yields
\begin{align*}
    \sum_{t=1}^{T \wedge T_0} \left[ \left(\frac{\mu_*}{m_t}\right)^q - 1 \right] 
    &< 2 \sum_{b=1}^{9kS} \left(\frac{\mu_*}{m_t}\right)^q \\
    &\le 2 \left[\sum_{b=1}^{8S} (4k)^q + \sum_{r=1}^{k-1} \sum_{b=8rS+1}^{8(r+1)S} 4^q \left(\frac{k}{r}\right)^q + \sum_{b=8kS+1}^{9kS} 4^q \left(\frac{k}{k}\right)^q \right] \\
    &= 2 \left[8S \cdot 4^q k^q + 8S \cdot 4^q \sum_{r=1}^{k-1} \left(\frac{k}{r}\right)^q + kS \cdot 4^q \right] \\
    &< 2 \left[8S \cdot 4^q K_q(k) + 8S \cdot 4^q K_q(k) + S \cdot 4^q K_q(k) \right] \\
    &= 34 \cdot 4^q S K_q(k).
\end{align*}
The last inequality follows because $k^q, \sum_{r=1}^{k-1} (k/r)^q$, and $k$ are all upper-bounded by $K_q(k)$.

Setting $C_q = 34 \cdot 4^q$ concludes the proof.
\end{proof}


We now bound the \textit{inverse penalty} for the remaining duration of the horizon.

\begin{lemma}[Phase II inverse penalty]\label{lem:phase2-cost}
There exists a constant $C_q > 0$ such that
\[
    \sum_{t=T_0 + 1}^{T} \left[\left(\frac{\mustar}{m_t}\right)^q-1\right]
    \le C_q\left(\frac{\sigma}{\mustar}\sqrt{kTL} + T\delta\right),
\]
where the sum evaluates to zero if $T \le T_0$.
\end{lemma}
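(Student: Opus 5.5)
Assume $T>T_0$, since otherwise the sum is empty. The argument has three steps: a uniform floor on $m_t$ after $T_0$, linearization of the penalty via \Cref{lem:lipschitz-inverse}, and a bound on the resulting Phase~II pseudo-regret by a standard UCB sum.

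\textbf{Step 1: a floor on $m_t$.} Fix $t>T_0$. On $\calE$, \Cref{lem:prep-ends} gives $\tau\le T_0<t$, so round $t$ is a Phase~II round. \Cref{lem:phase2-safe} then gives $\mu_{I_t}\ge\mustar/2$. Because all means are nonnegative,
\[
m_t\ge \E[\mu_{I_t}\1_{\calE}]\ge \tfrac{\mustar}{2}\Pbb(\calE)\ge \tfrac{\mustar}{2}\cdot\tfrac34\ge \tfrac{\mustar}{4},
\]
where we use $\delta\le 1$ and \Cref{lem:upper-good}. We also have $m_t\le\mustar$ trivially. So $m_t\in[\mustar/4,\mustar]$, and \Cref{lem:lipschitz-inverse} gives
\[
(\mustar/m_t)^q-1\le C_q(\mustar-m_t)/\mustar.
\]
It therefore suffices to show
\[
\sum_{t>T_0}(\mustar-m_t)\le C\bigl(\sigma\sqrt{kTL}+\mustar T\delta\bigr).
\]

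\textbf{Step 2: split on $\calE$.} Write
\[
\mustar-m_t=\E[(\mustar-\mu_{I_t})\1_{\calE}]+\E[(\mustar-\mu_{I_t})\1_{\calE^c}].
\]
The second term is at most $\mustar\Pbb(\calE^c)\le\mustar\delta/4$, because $0\le\mu_{I_t}\le\mustar$. Summed over at most $T$ rounds, this contributes $O(\mustar T\delta)$, which becomes the $T\delta$ term after dividing by $\mustar$.

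\textbf{Step 3: the pathwise UCB bound.} On $\calE$, every $t>T_0$ is a UCB round. Inequality \eqref{eq:ucb-sandwich} from the proof of \Cref{lem:phase2-safe} gives
\[
\mustar-\mu_{I_t}\le 2c(N_{I_t}(t-1))=2\sigma\sqrt{2L/N_{I_t}(t-1)},
\]
and $N_{I_t}(t-1)\ge1$ by \Cref{lem:phase2-safe}(2). Group the rounds by the arm pulled. Each pull of arm $i$ in this window happens at a distinct, strictly increasing count $N_i(t-1)\ge1$. So arm $i$ contributes at most
\[
2\sigma\sqrt{2L}\sum_{n=1}^{M_i}n^{-1/2}\le 4\sigma\sqrt{2LM_i},
\]
where $M_i$ is its number of pulls in the window. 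Since $\sum_i M_i\le T$, Cauchy--Schwarz gives $\sum_i\sqrt{M_i}\le\sqrt{kT}$. The total on $\calE$ is therefore $O(\sigma\sqrt{kTL})$ pathwise, so it holds after taking expectation against $\1_{\calE}$.

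Combining the three steps and dividing by $\mustar$ yields the claim, with $C_q$ equal to $q4^{q+1}$ times an absolute constant.

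\textbf{Main obstacle.} The delicate point is Step~1. The floor $m_t\ge\mustar/4$ must hold unconditionally, yet $\calE^c$ can produce $\mu_{I_t}=0$. This is handled by dropping the $\calE^c$ part entirely, which is valid because rewards are nonnegative. It also requires the deterministic time $T_0$ rather than $\tau$, which avoids conditioning on the random stopping time.
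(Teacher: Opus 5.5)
Your proposal is correct and follows the paper's proof essentially step for step. You use the same unconditional floor $m_t\ge\mustar/4$, obtained from $\tau\le T_0$ and Phase~II safety on $\calE$. You linearize via \Cref{lem:lipschitz-inverse}, split the pseudo-regret on $\calE$ versus $\calE^c$, and bound each arm by $\sum_{n\le M_i}c(n)\le 2\sigma\sqrt{2LM_i}$ followed by Cauchy--Schwarz.

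The remaining differences are cosmetic. You work with $\E[\cdot\,\1_{\calE}]$ instead of conditioning on $\calE$. You sum directly over the window $t>T_0$ rather than over $t>\tau$. You obtain $N_i(\tau)\ge 1$ from the positive threshold in \Cref{lem:phase2-safe} plus integrality, which is the same observation the paper makes via the stopping rule.
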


\begin{proof}
If $T \le T_0$, the summation is empty and the bound holds trivially. Assume $T > T_0$. By \Cref{lem:prep-ends}, the stopping time satisfies $\tau \le T_0$, ensuring all rounds $t > T_0$ execute exclusively in Phase II. The stopping condition~\eqref{eq:stop-condition} requires $\min_i N_i(\tau) \ge 32\sigma^2 L/B_\tau^2 > 0$, and since $N_i(\tau)$ is a nonnegative integer this forces $N_i(\tau) \ge 1$ for every arm $i \in [k]$.

On event $\calE$, \Cref{lem:phase2-safe} guarantees the UCB index selects an arm $I_t \in \arg\max_i\{\hat\mu_i(t-1) + c(N_i(t-1))\}$ satisfying $\mu_{I_t} \ge \mustar/2$.

Ex-ante rewards are non-negative, so the Law of Total Expectation bounds them as
\[
    m_t = \E[\mu_{I_t}] 
    \ge \E[\mu_{I_t} \mid \calE]\Pbb(\calE) 
    \ge \frac{\mustar}{2} \Pbb(\calE).
\]
By \Cref{lem:upper-good}, $\Pbb(\calE) \ge 1 - \delta/4 \ge 1/2$. Thus, $m_t \ge \mustar/4$. 

Since $m_t \le \mustar$ unconditionally, we have $m_t \in [\mustar/4, \mustar]$. Applying \Cref{lem:lipschitz-inverse} bounds the inverse penalty by the linear regret:
\[
    \left(\frac{\mustar}{m_t}\right)^q - 1 \le C_q' \frac{\mustar - m_t}{\mustar}.
\]

By linearity of expectation ($\mustar - m_t = \E[\mustar - \mu_{I_t}]$) and the Law of Total Expectation, we decompose the single-round regret. As the maximum possible regret is bounded by $\mustar$:
\begin{align*}
    \mustar - m_t
    &= \E[\mustar - \mu_{I_t} \mid \calE]\Pbb(\calE) + \E[\mustar - \mu_{I_t} \mid \calE^c]\Pbb(\calE^c) \\
    &\le \E[\mustar - \mu_{I_t} \mid \calE] + \mustar \Pbb(\calE^c).
\end{align*}

Applying linearity of expectation again, we get
\begin{align}
    \sum_{t=T_0 + 1}^{T} \left[\left(\frac{\mustar}{m_t}\right)^q - 1\right]
    &\le \frac{C_q'}{\mustar} \sum_{t=T_0 + 1}^{T} \Big( \E[\mustar - \mu_{I_t} \mid \calE] + \mustar \Pbb(\calE^c) \Big) \nonumber \\
    &\le \frac{C_q'}{\mustar} \E\left[ \sum_{t=T_0 + 1}^{T} (\mustar - \mu_{I_t}) \;\middle|\; \calE \right] + C_q' T \delta, \label{eq:phase2-decomp}
\end{align}
where the second term applies the union bound $\Pbb(\calE^c) \le \delta/4 \leq \delta$ and the relaxation $T - T_0 < T$.

It remains to bound the inner sum on $\calE$. By \Cref{lem:prep-ends}, Phase I stops at $\tau \le T_0$, so it suffices to bound the larger sum over all rounds $t > \tau$. Fix any such round $t$; since $I_t$ maximizes the UCB index at round $t$, we have
\begin{equation*}
    \mustar - \mu_{I_t} \;\le\; 2\,c(N_{I_t}(t-1)), \qquad t > \tau \tag*{(recall \eqref{eq:ucb-sandwich})}
\end{equation*}
on $\calE$, regardless of how $N_{I_t}(t-1)$ was accumulated prior to $\tau$.

Fix an arm $i \in [k]$ and let $n_i := N_i(T) - N_i(\tau) \ge 0$ denote its number of pulls after round $\tau$. We track, for each arm $i$, the sum
\[
\sum_{t > \tau:\, I_t = i} c(N_i(t-1)),
\]
which ranges over exactly these $n_i$ rounds, namely those after $\tau$ at which arm $i$ is selected. At its $j$-th such pull, arm $i$ has already accumulated at least $N_i(\tau) + (j-1) \ge j$ samples (using $N_i(\tau)\ge 1$). Therefore, as $c(\cdot)$ is decreasing,
\[
\sum_{t > \tau:\, I_t = i} c(N_i(t-1)) \;\le\; \sum_{j=1}^{n_i} c(j) \;=\; \sigma\sqrt{2L}\sum_{j=1}^{n_i}\frac{1}{\sqrt j} \;\le\; 2\sigma\sqrt{2L}\,\sqrt{n_i}.
\]
For the last step, since $x \mapsto x^{-1/2}$ is decreasing, $j^{-1/2} \le \int_{j-1}^{j} x^{-1/2}\,dx$ for every $j \ge 2$. Summing from $j=2$ to $n_i$ and adding back the $j=1$ term,
\[
\sum_{j=1}^{n_i} \frac{1}{\sqrt j} \;\le\; 1 + \int_{1}^{n_i} x^{-1/2}\,dx \;=\; 2\sqrt{n_i} - 1 \;\le\; 2\sqrt{n_i},
\]
which also holds trivially when $n_i = 0$.

Summing over arms and applying Cauchy-Schwarz,
\[
\sum_{t=\tau+1}^{T} (\mustar - \mu_{I_t}) 
\;\le\; 2\sum_{i=1}^k \sum_{t>\tau:\,I_t=i} c(N_i(t-1)) 
\;\le\; 4\sigma\sqrt{2L}\sum_{i=1}^k \sqrt{n_i} 
\;\le\; 4\sigma\sqrt{2L}\,\sqrt{k\textstyle\sum_i n_i}
\;=\; 4\sigma\sqrt{2L}\,\sqrt{k(T-\tau)}.
\]
Since $0 \le \tau \le T_0$ and every term $\mustar - \mu_{I_t}$ is non-negative, this becomes
\[
\sum_{t=T_0+1}^{T}(\mustar - \mu_{I_t}) \;\le\; \sum_{t=\tau+1}^{T}(\mustar-\mu_{I_t}) \;\le\; 4\sigma\sqrt{2L}\,\sqrt{k(T-\tau)} \;\le\; 4\sigma\sqrt{2L}\,\sqrt{kT}.
\]
This holds on all of $\calE$, so taking conditional expectation given $\calE$ yields
\[
\E\left[\sum_{t=T_0+1}^{T}(\mustar-\mu_{I_t}) \;\middle|\; \calE\right] \;\le\; 4\sigma\sqrt{2L}\,\sqrt{kT},
\]
and substituting into~\eqref{eq:phase2-decomp} gives
\[
\sum_{t=T_0 + 1}^{T} \left[\left(\frac{\mustar}{m_t}\right)^q - 1\right] \;\le\; 4\sqrt{2}\,C_q' \frac{\sigma}{\mustar} \sqrt{kTL} + C_q' T \delta.
\]
Absorbing the scalar combinations into $C_q$ concludes the proof.
\end{proof}

\begin{proof}[\textbf{Proof of \Cref{thm:upper-main}}]
Recall that \Cref{lem:inverse-conversion} bounds the $(-q)$-mean regret by the inverse penalty. Splitting the summation at the fixed round $T_0$ yields:
\[
    R_{T,-q} \le \frac{\mustar}{qT} \left( \sum_{t=1}^{T \wedge T_0} \left[\left(\frac{\mustar}{m_t}\right)^q-1\right] + \sum_{t=T_0+1}^{T} \left[\left(\frac{\mustar}{m_t}\right)^q-1\right] \right).
\]
Substituting the bounds from \Cref{lem:prep-penalty} and \Cref{lem:phase2-cost}, and recalling $S = H_k n_\star$, leads to
\begin{align*}
    R_{T,-q}
    &\le \frac{\mustar}{qT} \left[ C_q H_k n_\star K_q(k) + C_q \left(\frac{\sigma}{\mustar}\sqrt{kTL} + T\delta\right) \right] \\
    &\le \frac{C_q'}{T} \left[ H_k K_q(k) (\mustar n_\star) + \sigma\sqrt{kTL} + \mustar T\delta \right].
\end{align*}

By definition, the anchor threshold is $n_\star = \max(\lceil 128\sigma^2L/\mustar^2 \rceil, 1) \le 128\sigma^2L/\mustar^2 + 2$, which gives
\[
    \mustar n_\star \le \frac{128\sigma^2L}{\mustar} + 2\mustar.
\]
The regret is thus bounded as
\begin{equation}\label{eq:pre-final-upper}
    R_{T,-q} \le C_q'' \left[ \frac{H_k K_q(k)}{T} \frac{\sigma^2L}{\mustar} + \mustar \frac{H_k K_q(k)}{T} + \sigma\sqrt{\frac{kL}{T}} + \mustar\delta \right].
\end{equation}

We bound \eqref{eq:pre-final-upper} by evaluating it relative to $\mu_0 := \sigma\sqrt{\frac{H_k K_q(k)L}{T}}$.

Case 1 ($\mustar \le \mu_0$): Since $\mu_i \ge 0$ for all $i \in [k]$, we have $m_t \ge 0$. The $(-q)$-mean regret therefore satisfies $R_{T,-q} \le \mustar \le \mu_0$.

Case 2 ($\mustar > \mu_0$): By definition of $\mu_0$, the leading term in \eqref{eq:pre-final-upper} is bounded as
\[
    \frac{H_k K_q(k)}{T} \frac{\sigma^2L}{\mustar} = \frac{\mu_0^2}{\mustar} < \mu_0.
\]
Moreover, as $H_k \ge 1$ and $K_q(k) \ge k$, we eventually have $\sigma\sqrt{kL/T} \le \mu_0$.

Combining both cases, the regret uniformly satisfies
\[
    R_{T,-q} \le C \left[ \mu_0 + \mustar \frac{H_k K_q(k)}{T} + \mustar\delta \right].
\]
Substituting $\mu_0$ concludes the proof:
\[
    R_{T,-q} \le C \left[ \sigma\sqrt{\frac{H_k K_q(k)L}{T}} + \mustar \frac{H_k K_q(k)}{T} + \mustar\delta \right].
\]
\end{proof}

\begin{proof}[\textbf{Proof of \Cref{cor:upper-regimes}}]
From \Cref{fact:harmonic-penalty}, we get
\[
    K_q(k) = 
    \begin{cases}
        \Theta_q(k), & 0 < q < 1, \\
        \Theta(k \log k), & q = 1, \\
        \Theta_q(k^q), & q > 1.
    \end{cases}
\]
Substituting these evaluations into the $O\left(\sigma\sqrt{\frac{H_k K_q(k) L}{T}}\right)$ leading term of \Cref{thm:upper-main}, and noting that the schedule penalty contributes $H_k = \Theta(\log k)$, produces the required bounds. Absorbing logarithmic terms into the $\widetilde{O}$ notation yields the three distinct regret regimes, concluding the proof.
\end{proof}

\section{Experiments}\label{sec:expt}

We evaluate \hare\ against \textsf{Welfarist UCB}~\cite{sarkar2025welfarist} and \textsf{Explore-Then-UCB}~\cite{krishna2025pmean} in four parts: a head-to-head regret comparison, a stress test across fairness levels, a closer look at exploration and a targeted test of regret scaling with the number of arms.
For the first two, we simulate a $k$-armed bandit with $k=50$, mean rewards $\mu_i$ drawn
i.i.d.\ uniformly from $[10,1000]$, and Gaussian noise variance $\sigma^2=400$, over a horizon $T=10^6$, averaging results over $50$ independent runs to estimate $\mathbb{E}[\mu_{I_t}]$.


\paragraph{Experiment A: Regret across algorithms.} \Cref{fig:h2h-p05,fig:h2h-p2,fig:h2h-p20} compare regret for $p \in \{-0.5,-2.0, -20\}$. \hare\ minimizes regret most rapidly. \textsf{Welfarist-UCB} is delayed by uniform exploration, and \textsf{Explore-Then-UCB} accrues linear regret prior to commitment. \hare\ mitigates both inefficiencies via a real-time lower confidence bound anchor.

\paragraph{Experiment B: Regret scales with $q = -p$.} \Cref{fig:q-sweep} shows regret across fairness levels $p \in \{-1, -5, -10, -20\}$, increasing in magnitude as $q =-p$ grows, consistent with our theoretical rates. \hare\ 's regret remains stable across this entire range.

\begin{figure*}[!h]
    \centering
    \captionsetup[subfigure]{font=small,labelfont=bf}
    \begin{subfigure}[b]{0.25\textwidth}
        \centering
        \includegraphics[width=\textwidth]{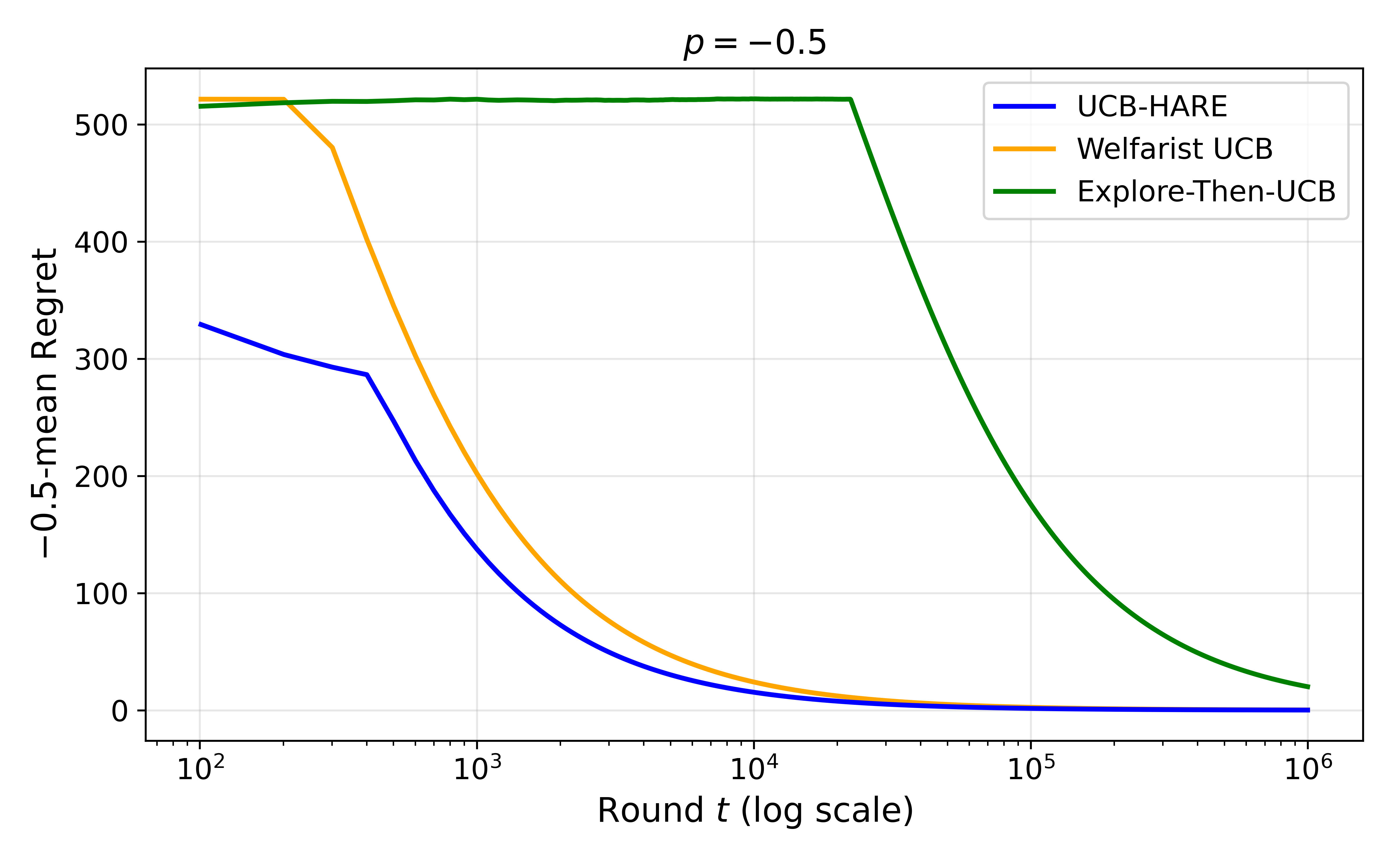}
        \caption{$p=-0.5$}
        \label{fig:h2h-p05}
    \end{subfigure}\hfill
    \begin{subfigure}[b]{0.25\textwidth}
        \centering
        \includegraphics[width=\textwidth]{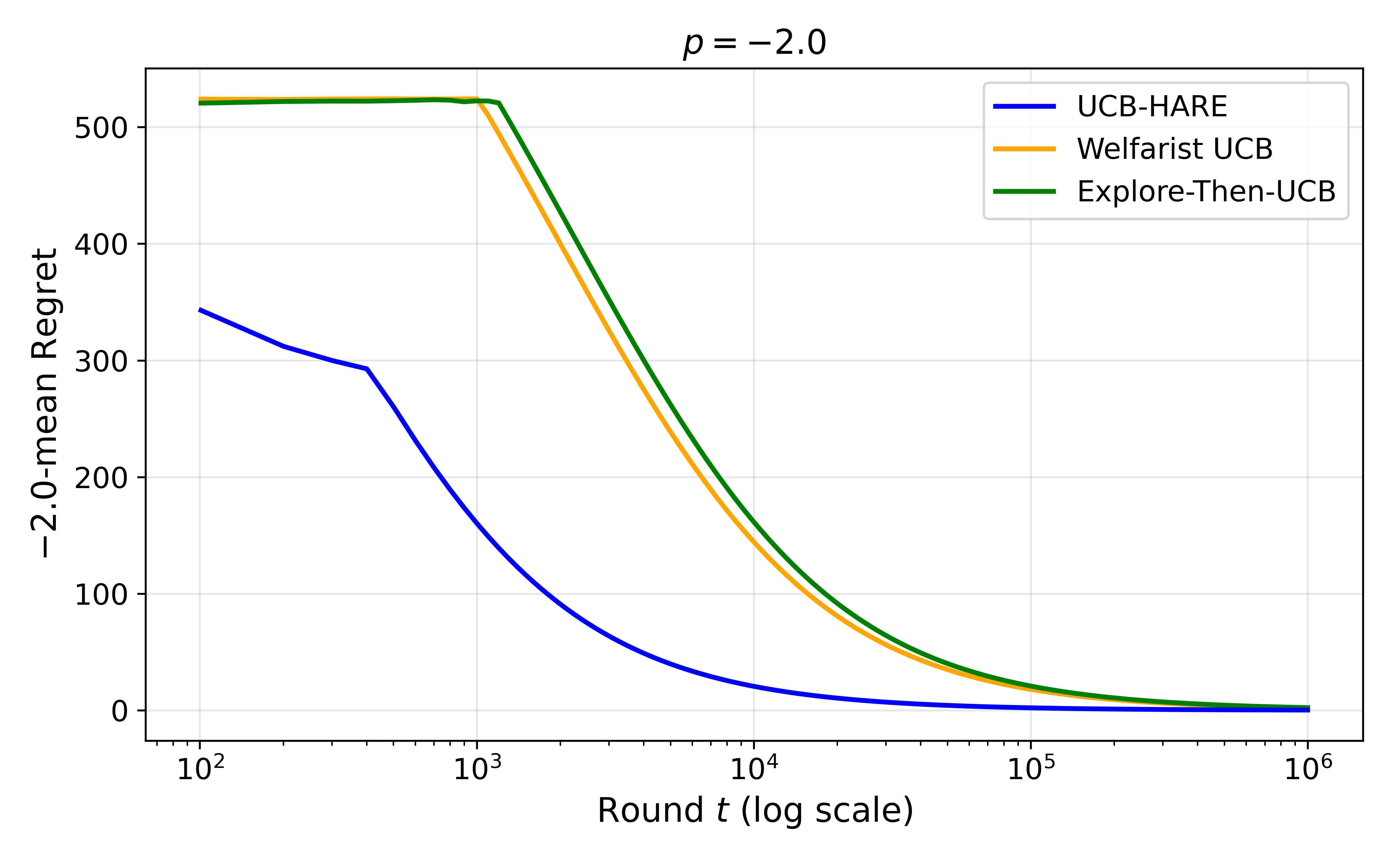}
        \caption{$p=-2.0$}
        \label{fig:h2h-p2}
    \end{subfigure}\hfill
    \begin{subfigure}[b]{0.25\textwidth}
        \centering
        \includegraphics[width=\textwidth]{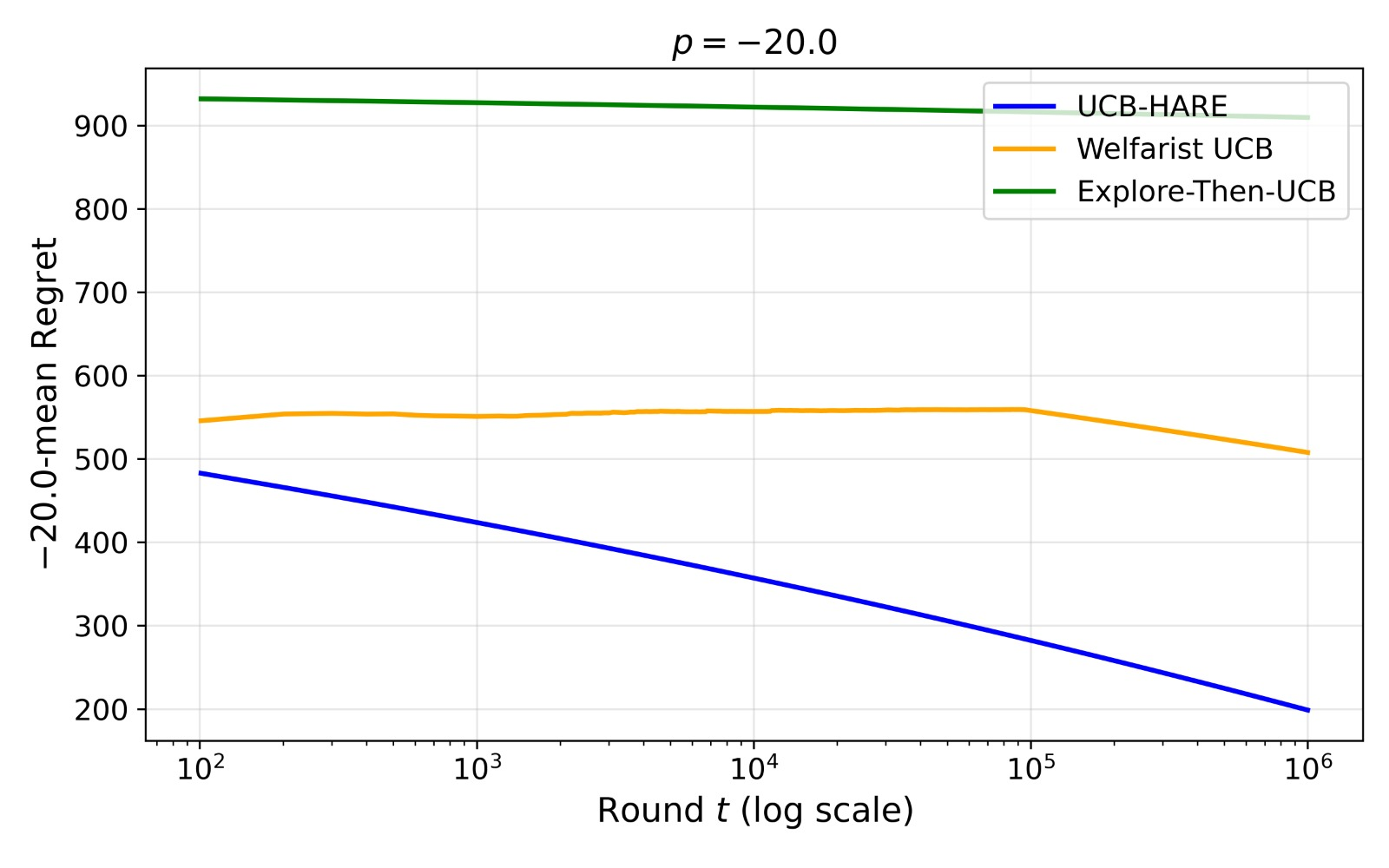}
        \caption{$p=-20$}
        \label{fig:h2h-p20}
    \end{subfigure}\hfill
    \begin{subfigure}[b]{0.25\textwidth}
        \centering
        \includegraphics[width=\textwidth]{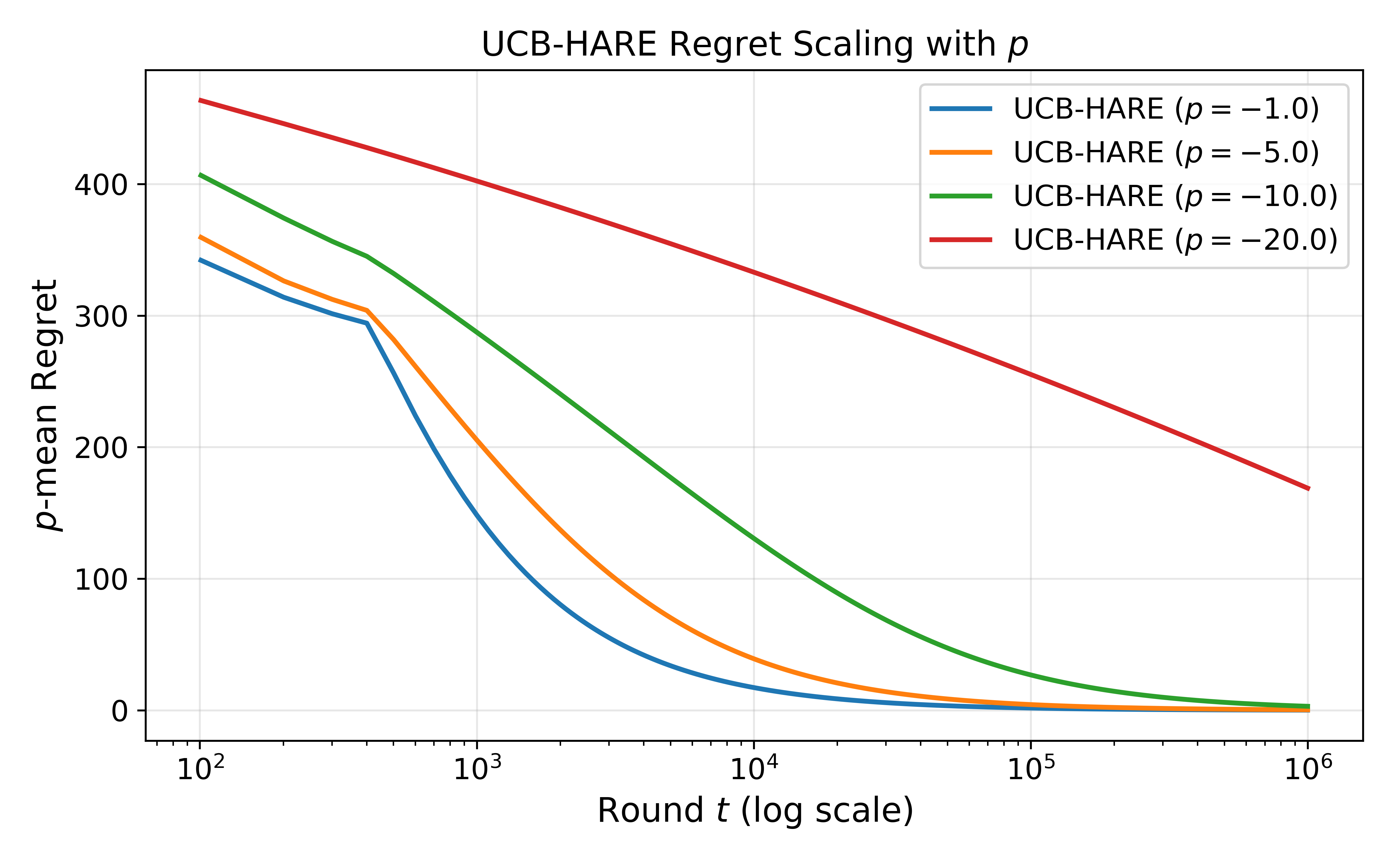}
        \caption{Varying $p$}
        \label{fig:q-sweep}
    \end{subfigure}
    \vspace{-1mm}
    \caption{(a -- c) Regret comparison of \hare\ against \textsf{Welfarist-UCB} and \textsf{Explore-Then-UCB}. (d) \hare's regret across $p \in \{-1,-5,-10,-20\}$.}
    \label{fig:expt-abc}

    \vspace{2em}
    
    \centering
    \includegraphics[width=\columnwidth]{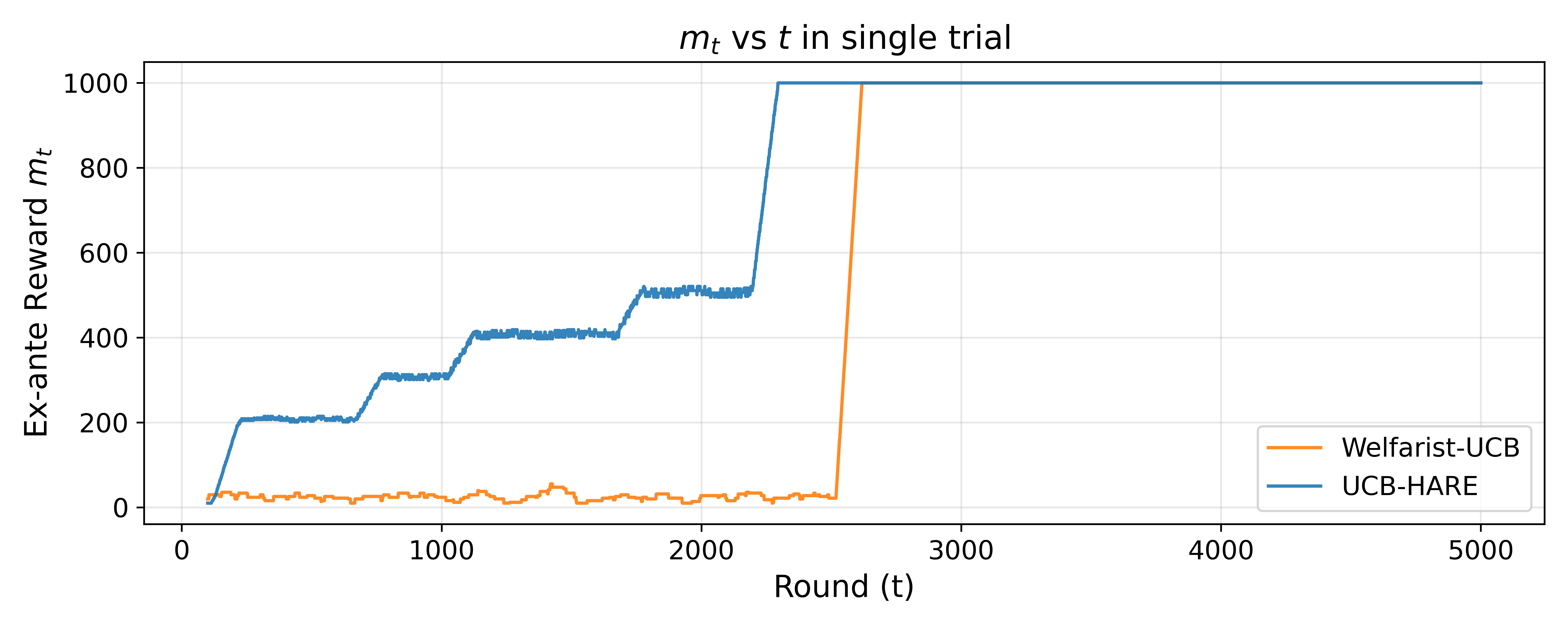}
    \caption{Single-trial ex-ante reward $m_t$ on a $200$-arm instance with five elevated-mean arms ($\mustar = 1000$). \hare 's staircase reveals the advantage of successive \textit{anchor} discoveries - establishing a welfare floor early on; \textsf{Welfarist-UCB} remains at the uniform exploration baseline throughout Phase I before jumping to the optimum.}
    \label{fig:expt-c}
\end{figure*}



\paragraph{Experiment C: Ex-ante reward vs $t$.} We construct an instance with $k=200$ arms, five of which have means $200$, $400$, $600$, $800$, $1000$ and the rest with mean $10$, and run a single trial (no averaging across seeds) over $T=5000$ rounds at $p = -q = -2.0$, tracking the ex-ante reward $m_t$. Single-trial outcomes vary with the random permutation drawn by the algorithm; the run shown in \cref{fig:expt-c} is representative and captures the novelty of our \textit{harmonic anchored rank exploration}. We highlight the following:

\begin{enumerate}[align=left, leftmargin=*]
    \item \textit{Welfare improves stepwise, not in one jump.} \hare's roundwise ex-ante reward $m_t$ increases in a sequence of steps, each corresponding to a newly certified anchor, before reaching the optimum. This matches the bound $m_t \ge c_0 \mu_\star \max(1/k, a_b/k)$ of \cref{lem:ex-ante-reward}, which increases in step with the expanding frontier $a_b$.



    \item \textit{Uniform exploration is sub-optimal.} \textsf{Welfarist-UCB}'s $m_t$ sits at the uniform-exploration floor ($m_t \approx 25 \gtrsim \mu_\star/k$) for the entire duration of Phase I, confirming that this baseline is indeed the source of the $k^{(q+1)/2}$ penalty it incurs. This stems from the fact that \textsf{Welfarist-UCB} samples all $k$ arms comparably before committing to UCB, regardless of which is optimal.
\end{enumerate}

\textbf{Experiment D: Regret scaling with $k$ and $q$.} Here, we choose an instance where one arm has mean $\mu = 1000$, a block of near-optimal arms have mean $\mu = 950$, and the remaining arms have mean $\mu = 10$, with arm identities randomly permuted. We compare $k = 10$ against $k = 100$ over horizons up to $T = 10^6$, averaged over $50$ independent trials, at three fairness levels $p \in \{-1.0, -2.0, -5.0\}$ with $\sigma^2 = 1.0$ (Figure~\ref{fig:regret-k-q}).

\begin{figure}[hbpt]
    \centering
    \includegraphics[width=0.32\textwidth]{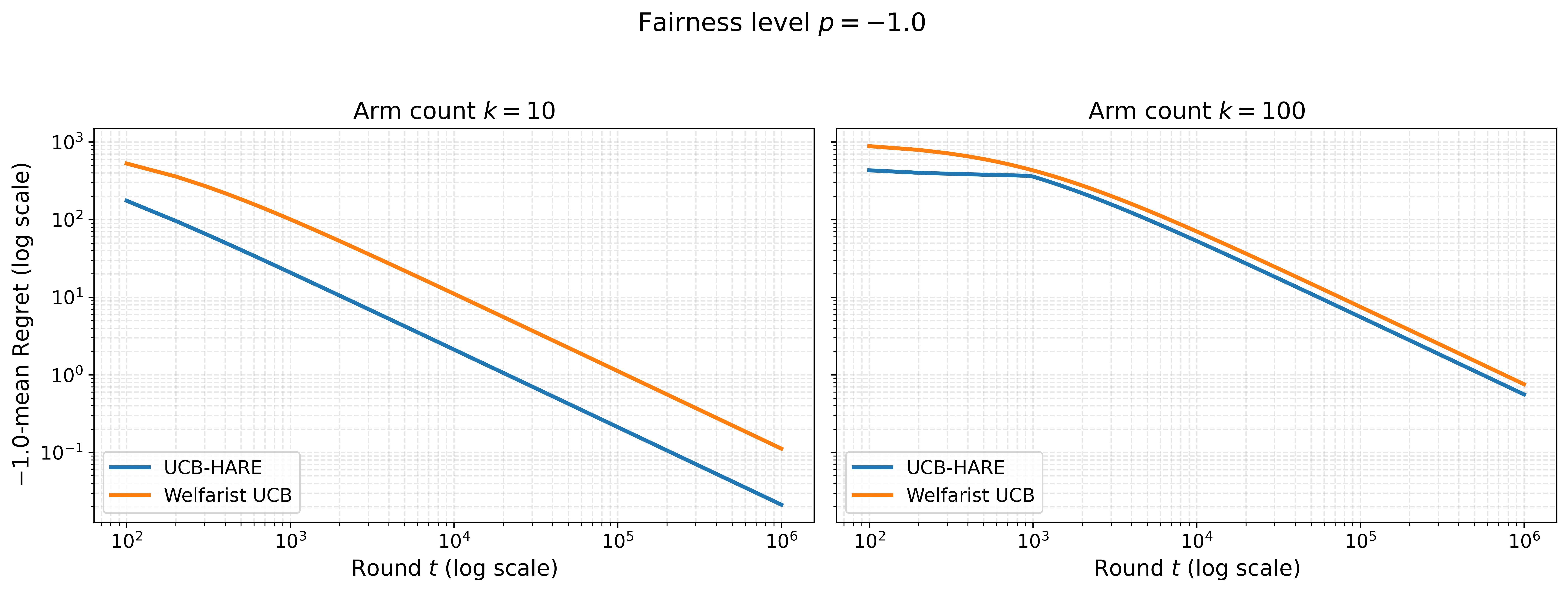}
    \includegraphics[width=0.32\textwidth]{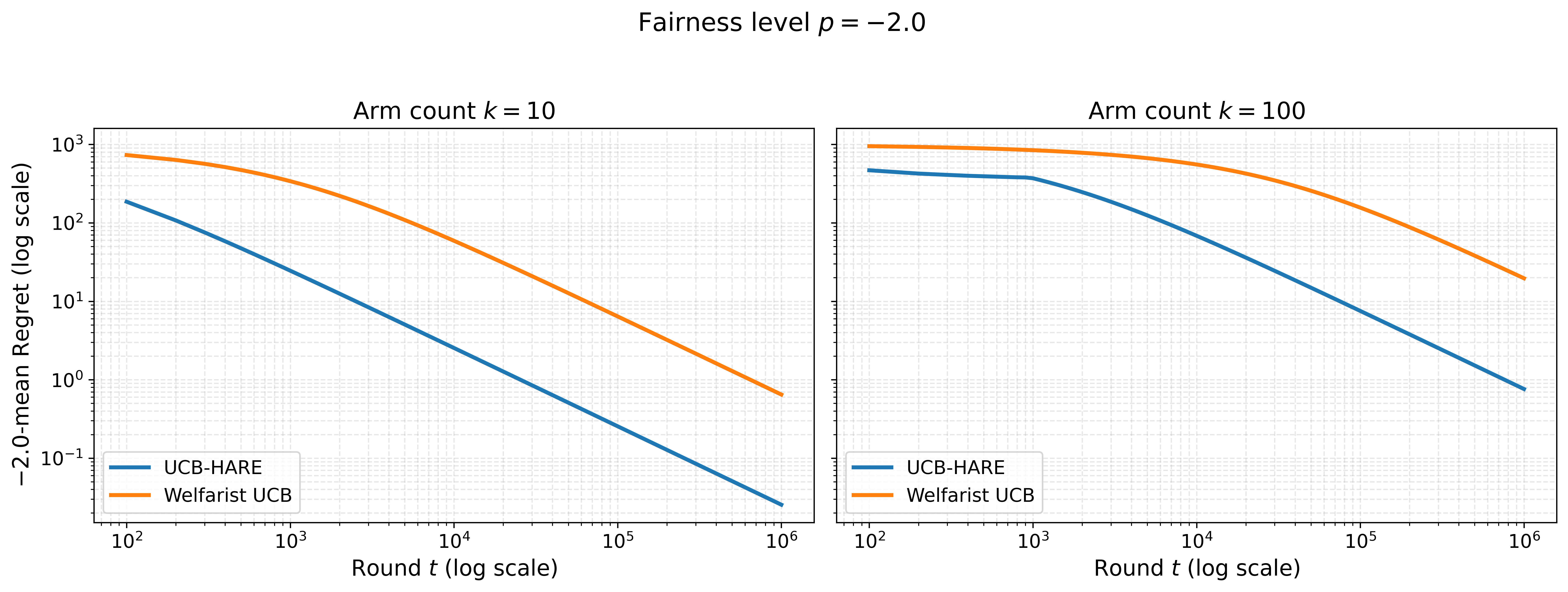}
    \includegraphics[width=0.32\textwidth]{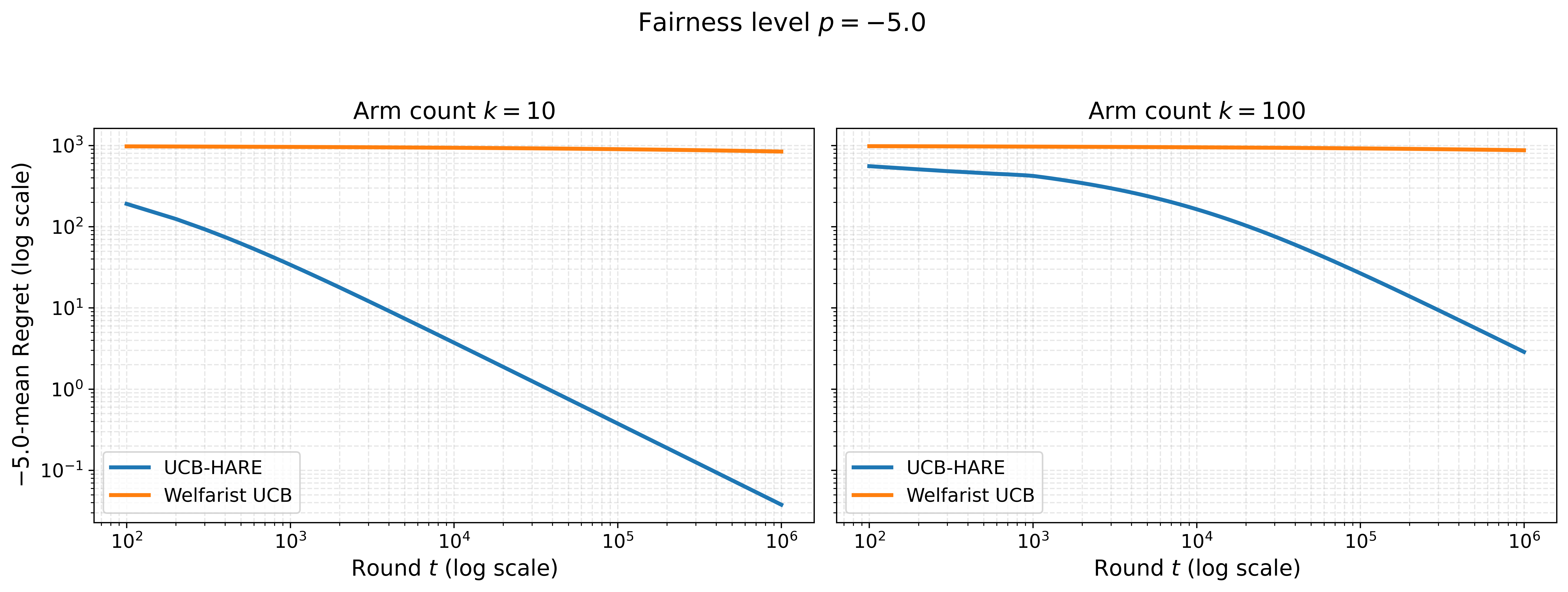}
    \caption{Regret comparison between \hare\ and \textsf{Welfarist-UCB} at $k \in \{10, 100\}$, for
    fairness levels $p \in \{-1.0, -2.0, -5.0\}$, in
    Experiment D.}
    \label{fig:regret-k-q}
\end{figure}

The gap between the two algorithms grows with $q = -p$. At $q = 1$, \hare\ and \textsf{Welfarist-UCB} stay close at both arm counts, matching the regime $q \leq 1$, where both rates scale as $\sqrt{k/T}$. As $q$ rises to $2$ and then $5$, \textsf{Welfarist-UCB}'s regret tends to flatten at its starting value for \textit{both} $k = 10$ and $100$, while \hare\ retains its performance better. This matches the gap between $k^{(q+1)/2}$ and $k^{q/2}$ from \Cref{cor:upper-regimes}: uniform exploration gets more costly as $q$ grows, while \hare\ 's \textit{harmonic schedule} avoids this extra cost. 

Pushing $q$ further, to $10$ and $20$, makes this flattening pronounced for both
algorithms (\Cref{fig:regret-rawl}). This reflects the strongly fair regime's convergence
to the Rawlsian maximin ideal: as $q \to \infty$, the $(-q)$-mean is dominated by the
smallest ex-ante reward in the sequence, so both algorithms are limited by their worst
round. \textsf{Welfarist-UCB} reaches this limit early, at the uniform-exploration floor
$\mustar/k$, while \hare's harmonic schedule certifies a positive-mean anchor sooner and
sustains improvement over a longer horizon.

\begin{figure}[hbpt]
    \centering
    \includegraphics[width=0.48\textwidth]{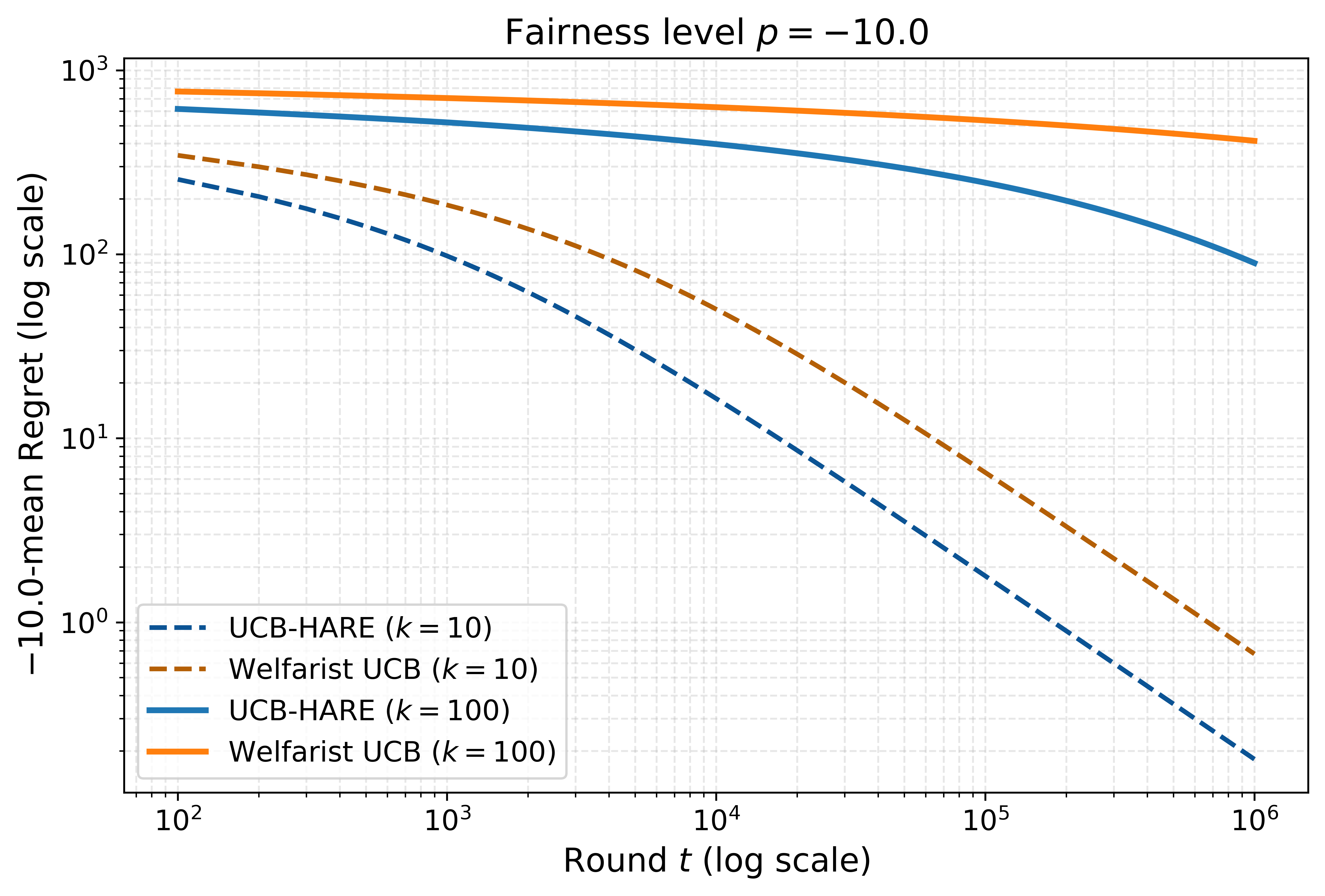}
    \includegraphics[width=0.48\textwidth]{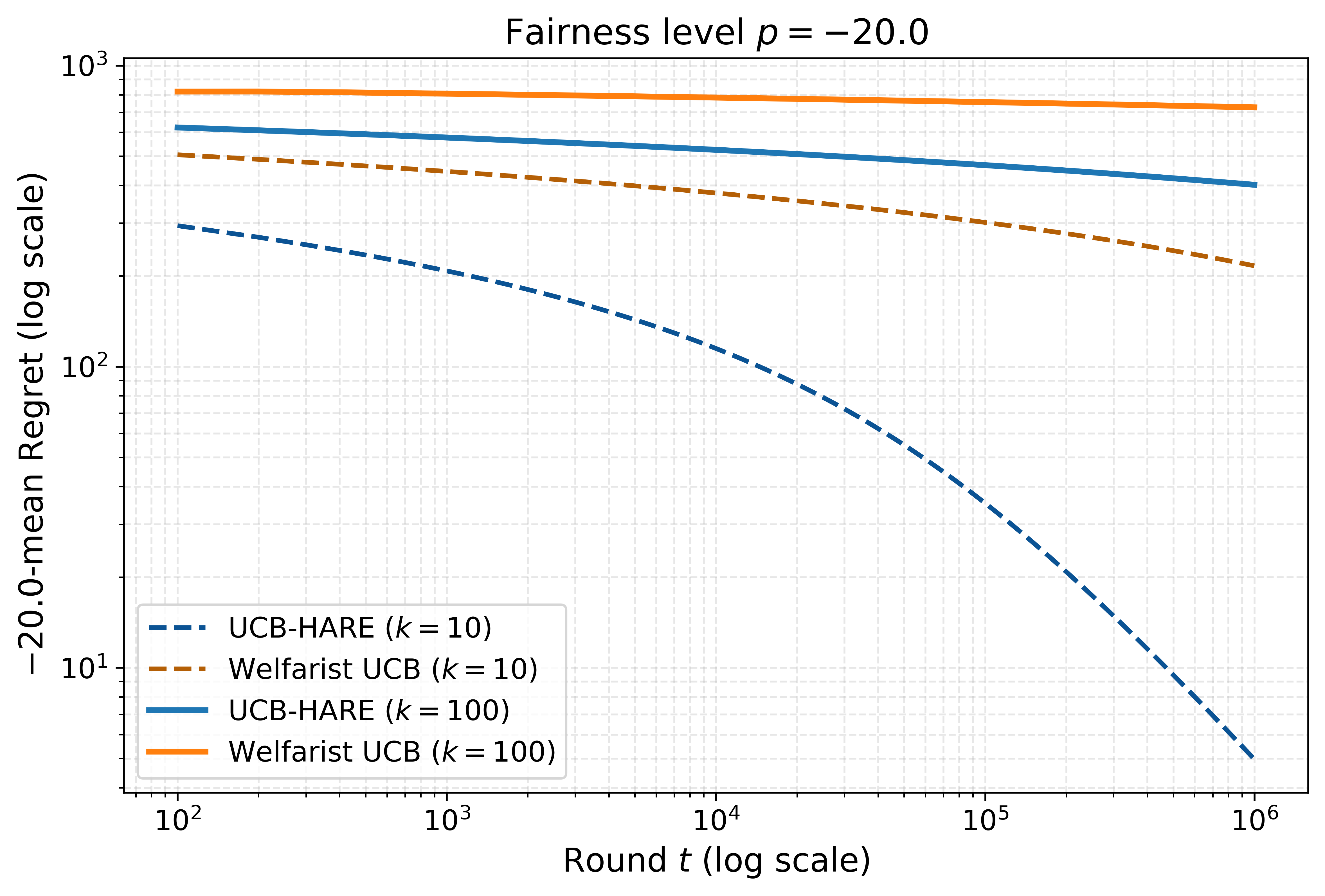}
    \caption{At large $q$, regrets of both algorithms flatten toward the Rawlsian
    worst-round bottleneck, with \hare\ sustaining improvement over a longer horizon.}
    \label{fig:regret-rawl}
\end{figure}

\clearpage
\printbibliography

\end{document}